\documentclass{article}

\pdfoutput=1

\usepackage[accepted]{icml2025}
\usepackage{microtype}
\usepackage{graphicx}
\usepackage{booktabs}
\usepackage{apptools}

\thinmuskip=2mu
\medmuskip=3mu
\thickmuskip=3mu

\usepackage{packages}
\definecolor{DarkBlue}{HTML}{36587E}
\definecolor{DarkGreen}{HTML}{006600}
\definecolor{BrightOrange}{HTML}{EB811B}

\definecolor{C1}{HTML}{1f77b4}
\definecolor{C2}{HTML}{ff7f0e}
\definecolor{C3}{HTML}{2ca02c}
\definecolor{C4}{HTML}{d62728}
\definecolor{C5}{HTML}{9467bd}
\definecolor{C6}{HTML}{8c564b}
\definecolor{C7}{HTML}{e377c2}
\definecolor{C8}{HTML}{7f7f7f}
\definecolor{C9}{HTML}{bcbd22}
\definecolor{C10}{HTML}{17becf}




\def\epsilon{\varepsilon}


\def\A{\mathcal A}
\def\B{\mathcal B}
\def\C{\mathcal C}

\def\F{\mathcal F}

\def\H{\mathcal H}
\def\I{\mathcal I}

\def\L{\mathcal L}
\def\M{\mathcal M}

\def\O{\mathcal O}
\def\P{\mathcal P}
\def\Q{\mathcal Q}

\def\S{\mathcal S}

\def\V{\mathcal V}

\def\X{\mathcal X}

\def\Z{\mathcal Z}




\def\NN{\mathbb N}

\def\RR{\mathbb R}

\def\0{\mathbf 0}
\def\1{\mathbf 1}


\def\expect{\mathlarger{\mathbb{E}}}
\DeclareMathOperator*{\argmax}{arg\,max}
\DeclareMathOperator*{\argmin}{arg\,min}

\declaretheoremstyle[headpunct={\normalfont.}]{theoremstyle}

\declaretheorem[name=Lemma, style=theoremstyle]{applemma}

\AtAppendix{\counterwithin{appthm}{section}}
\AtAppendix{\counterwithin{applemma}{section}}


\def\->{\rightarrow}
\def\=>{\Rightarrow}
\def\<->{\leftrightarrow}
\def\<=>{\Leftrightarrow}

\makeatletter
\newcommand*\rel@kern[1]{\kern#1\dimexpr\macc@kerna}
\newcommand*\widebar[1]{%
  \begingroup
  \def\mathaccent##1##2{%
    \rel@kern{0.8}%
    \overline{\rel@kern{-0.8}\macc@nucleus\rel@kern{0.2}}%
    \rel@kern{-0.2}%
  }%
  \macc@depth\@ne
  \let\math@bgroup\@empty \let\math@egroup\macc@set@skewchar
  \mathsurround\z@ \frozen@everymath{\mathgroup\macc@group\relax}%
  \macc@set@skewchar\relax
  \let\mathaccentV\macc@nested@a
  \macc@nested@a\relax111{#1}%
  \endgroup
}
\makeatother

\makeatletter

\newcommand*\wthelper[2]{%
        \hbox{\dimen@\accentfontxheight#1%
                \accentfontxheight#11.2\dimen@
                $\m@th#1\widetilde{#2}$%
                \accentfontxheight#1\dimen@
        }%
}
\newcommand*\accentfontxheight[1]{%
        \fontdimen5\ifx#1\displaystyle
                \textfont
        \else\ifx#1\textstyle
                \textfont
        \else\ifx#1\scriptstyle
                \scriptfont
        \else
                \scriptscriptfont
        \fi\fi\fi3
}
\makeatother

\makeatletter
\newcommand{\IfRestatedTF}[2]{\ifthmt@thisistheone #2\else #1\fi}
\makeatother

\icmltitlerunning{%
    A Theoretical Justification for Asymmetric Actor-Critic Algorithms
}

\begin{document}

\twocolumn[
    \icmltitle{%
        A Theoretical Justification for Asymmetric Actor-Critic Algorithms
    }

    \icmlsetsymbol{equal}{*}

    \begin{icmlauthorlist}
        \icmlauthor{Gaspard Lambrechts}{montef,equal}
        \icmlauthor{Damien Ernst}{montef}
        \icmlauthor{Aditya Mahajan}{mcgill}
    \end{icmlauthorlist}

    \icmlaffiliation{montef}{Montefiore Institute, University of Liège}
    \icmlaffiliation{mcgill}{Department of Electrical and Computer Engineering, McGill University}

    \icmlcorrespondingauthor{Gaspard Lambrechts}{gaspard.lambrechts@uliege.be}

    \icmlkeywords{Partially Observable Environment, Asymmetric Learning, Privileged Information, Privileged Critic, Convergence Analysis, Asymmetric Actor-Critic, Finite-Time Bound, Agent-State Policy, Finite-State Policy, Aliasing}

    \vskip 0.3in
]

\printAffiliationsAndNotice{\icmlEqualContribution}

\begin{abstract}
    In reinforcement learning for partially observable environments, many successful algorithms have been developed within the asymmetric learning paradigm.
    This paradigm leverages additional state information available at training time for faster learning.
    Although the proposed learning objectives are usually theoretically sound, these methods still lack a precise theoretical justification for their potential benefits.
    We propose such a justification for asymmetric actor-critic algorithms with linear function approximators by adapting a finite-time convergence analysis to this setting.
    The resulting finite-time bound reveals that the asymmetric critic eliminates error terms arising from aliasing in the agent state.
\end{abstract}

\section{Introduction}

Reinforcement learning (RL) is an appealing framework for solving decision making problems, notably because it makes very few assumptions about the problem at hand.
In its purest form, the promise of an RL algorithm is to learn an optimal behavior from interaction with an environment whose dynamics are unknown.
More formally, an RL algorithm aims to learn a policy -- which is defined as a mapping from observations to actions -- from interaction samples, in order to maximize a reward signal.
While RL has obtained empirical successes for a plethora of challenging problems ranging from games to robotics \citep{mnih2015human, schrittwieser2020mastering, levine2015end, akkaya2019solving}, most of these achievements have assumed full state observability.
A more realistic assumption is partial state observability, where only a partial observation of the state of the environment is available for taking actions.
In this setting, the optimal action generally depends on the complete history of past observations and actions.
Traditional RL approaches have thus been adapted by considering history-dependent policies, usually with a recurrent neural network to process histories \citep{bakker2001reinforcement, wierstra2007solving, hausknecht2015deep, heess2015memory, zhang2016learning, zhu2017improving}.
Given the difficulty of learning effective history-dependent policies, various auxiliary representation learning objectives have been proposed to compress the history into useful representations \citep{igl2018deep, buesing2018learning, guo2018neural, gregor2019shaping, han2019variational, guo2020bootstrap, lee2020stochastic, subramanian2022approximate, ni2024bridging}.
Such methods usually seek to learn history representations that encode the belief, defined as the posterior distributions over the states given the history, which is a sufficient statistic of the history for optimal control.

While these methods are theoretically able to learn optimal history-dependent policies, they usually learn solely from the partial state observations, which can be restrictive.
Indeed, assuming the same partial observability at training time and execution time can be too pessimistic for many environments, notably for those that are simulated.
This motivated the asymmetric learning paradigm, where additional state information available at training time is leveraged during the process of learning a history-dependent policy.
Although the optimal policies obtained by asymmetric learning are theoretically equivalent to those learned by symmetric learning, the promise of asymmetric learning is to improve the convergence speed.
Early approaches proposed to imitate a privileged policy conditioned on the state \citep{choudhury2018data}, or to use an asymmetric critic conditioned on the state \citep{pinto2017asymmetric}.
These heuristic methods initially lacked a theoretical framework, and a recent line of work has focused on proposing theoretically grounded asymmetric learning objectives.
First, imitation learning of a privileged policy was known to be suboptimal, and it was addressed by constraining the privileged policy so that its imitation results in an optimal policy for the partially observable environment \citep{warrington2021robust}.
Similarly, asymmetric actor-critic approaches were proven to provide biased gradients, and an unbiased actor-critic approach was proposed by introducing the history-state value function \citep{baisero2022unbiased}.
In model-based RL, several works proposed world model objectives that are proved to provide sufficient statistics of the history, by leveraging the state \citep{avalos2024wasserstein} or arbitrary state information \citep{lambrechts2024informed}.
Finally, asymmetric representation learning approaches were proposed to learn sufficient statistics from state samples \citep{wang2023learning, sinha2023asymmetric}.
It is worth noting that many recent successful applications of RL have greatly benefited from asymmetric learning, usually through an asymmetric critic \citep{degrave2022magnetic, kaufmann2023champion, vasco2024super}.

Despite these methods being theoretically grounded, in the sense that policies satisfying these objectives are optimal policies, they still lack a theoretical justification for their potential benefit.
In particular, there is no theoretical justification for the improved convergence speed of asymmetric learning.
In this work, we propose such a justification for an asymmetric actor-critic algorithm, using agent-state policies and linear function approximators.
Agent-state policies rely on an internal state, which is updated recurrently based on successive actions and observations, from which the next action is selected.
This agent state can introduce aliasing, a phenomenon in which an agent state may correspond to two different beliefs.
Our argument relies on the comparaison of two analogous finite-time bounds: one for a symmetric natural actor-critic algorithm \citep{cayci2024finite}, and its adaptation to the asymmetric setting that we derive in this paper.
This comparison reveals that asymmetric learning eliminates error terms arising from aliasing in the agent state in symmetric learning.
These aliasing terms are given by the difference between the true belief (i.e., the posterior distribution over the states given the history) and the approximate belief (i.e., the posterior distribution over the states given the agent state).
This suggests that asymmetric learning may be particularly useful when aliasing is high.

A recent related work proposed a model-based asymmetric actor-critic algorithm relying on belief approximation, and proved its sample efficiency \citep{cai2024provable}.
It also considered agent-state policies, and studied the finite-time performance by providing a probably approximately correct (PAC) bound, instead of an expectation bound as here.
While the algorithm was restricted to finite horizon and discrete spaces, notably for implementing count-based exploration strategies, it tackled the online exploration setting and its performance bound did not present a concentrability coefficient.
This related analysis thus provides a promising framework for future works in a more challenging setting.
However, it did not study the existing asymmetric actor-critic algorithm, and did not provides a direct comparison with symmetric learning.
In contrast, we focus on providing comparable bounds for the existing model-free asymmetric actor-critic algorithm and its symmetric counterpart.

In \autoref{sec:background}, we formalize the environments, policies, and Q-functions that are considered.
In \autoref{sec:algorithms}, we introduce the asymmetric and symmetric actor-critic algorithms that are studied.
In \autoref{sec:bounds}, we provide the finite-time bounds for the asymmetric and symmetric actor-critic algorithms.
Finally, in \autoref{sec:conclusion}, we conclude by summarizing the contributions and providing avenues for future works.

\section{Background}
\label{sec:background}

In \autoref{subsec:pomdp}, we introduce the decision processes and agent-state policies that are considered.
Then, we introduce the asymmetric and symmetric Q-function for such policies, in \autoref{subsec:asym-q} and \autoref{subsec:sym-q}, respectively.

\subsection{Partially Observable Markov Decision Process} \label{subsec:pomdp}

A partially observable Markov decision process (POMDP) is a tuple $\P = (\S, \A, \O, P, T, R, O, \gamma)$, with discrete state space $\S$, discrete action space $\A$, and discrete observation space $\O$.
The initial state distribution $P$ gives the probability $P(s_0)$ of $s_0 \in \S$ being the initial state of the decision process.
The dynamics are described by the transition distribution $T$ that gives the probability $T(s_{t+1} | s_t, a_t)$ of $s_{t+1} \in \S$ being the state resulting from action $a_t \in \A$ in state $s_t \in \S$.
The reward function $R$ gives the immediate reward $r_t = R(s_t, a_t, s_{t+1})$ of the reward $r_t \in [0, 1]$ resulting from this transition.
The observation distribution $O$ gives the probability $O(o_t | s_t)$ to get observation $o_t \in \O$ in state $s_t \in \S$.
Finally, the discount factor $\gamma \in \left[0, 1\right)$ weights the relative importance of future rewards.
Taking a sequence of $t$ actions in the POMDP conditions its execution and provides the history $h_t = (o_0, a_0, \dots, o_t) \in \H$, where $\H$ is the set of histories of arbitrary length.
In general, the optimal policy in a POMDP depends on the complete history.

However, in practice it is infeasible to learn a policy conditioned on the full history, since the latter grows unboundedly with time.
We consider an agent-state policy $\pi \in \Pi_\M$ that uses an agent-state process $\M = (\Z, U)$, in order to take actions~\cite{dong2022simple,sinha2024agent}.
More formally, we consider a discrete agent state space $\Z$, and an update distribution $U$ that gives the probability $U(z_{t+1} | z_t, a_t, o_{t+1})$ of $z_{t+1} \in \Z$ being the state resulting from action $a_t \in \A$ and observation $o_{t+1} \in \O$ in agent state $z_{t} \in \Z$.
Note that the update distribution $U$ also describe the initial agent state distribution with $z_{-1} \not \in \Z$ the null agent state and $a_{-1} \not \in \A$ the null action.
Some examples of agent states that are often used are a sliding window of past observations, or a belief filter.
Aliasing may occur when the agent state does not summarize all information from the history about the state of the environment, see \autoref{app:aliasing} for an example.
Given the agent state $z_t$, the policy $\pi$ samples actions according to $a_{t} \sim \pi(\cdot | z_{t})$.
An agent-state policy $\pi^* \in \Pi_\M$ is said to be optimal for an agent-state process $\M$ if it maximizes the expected discounted sum of rewards: $\pi^* \in \argmax_{\pi \in \Pi_\M} J(\pi)$ with
$
    J(\pi) = \expect^{\pi} \! \left[ \sum_{t=0}^\infty \gamma^t R_t \right]\!.
    \label{eq:return}
$

In the following, we denote by $S_t$, $O_t$, $Z_t$, $A_t$ and $R_t$ the random variables induced by the POMDP $\P$.
Given a POMDP $\P$ and an agent-state process $\M$, the initial environment-agent state distribution $P$ is given by,
\begin{align}
    P(s_0, z_0) = P(s_0) \sum_{o_0 \in \O} O(o_0 | s_0) U(z_0 | z_{-1}, a_{-1}, o_0).
\end{align}
Furthermore, given an agent-state policy $\pi \in \Pi_\M$, we define the discounted visitation distribution as,
\begin{align}
    d^{\pi}(s, z)
    &= (1 - \gamma)\sum_{s_0, z_0} P(s_0, z_0)
    \\[-0.6em]
    &\qquad \times
    \sum_{t=0}^\infty \gamma^k \Pr(S_t = s, Z_t = z | S_0 = s_0, Z_0 = z_0).
    \nonumber
\end{align}
Finally, we define the visitation distribution $m$ steps from the discounted visitation distribution as,
\begin{align}
    d^{\pi}_{m}(s, z)
    &= \sum_{s_0, z_0}
    d^{\pi}(s_0, z_0)
    \\[-0.2em]
    &\qquad \times
    \Pr(S_m = s, Z_m = z | S_0 = s_0, Z_0 = z_0).
    \nonumber
\end{align}

In the following, we define the various value functions for the policies that we defined.
Note that we use calligraphic letters $\Q^\pi$, $\V^\pi$ and $\A^\pi$ for the asymmetric functions, and regular letters $Q^\pi$, $V^\pi$ and $A^\pi$ for the symmetric ones.

\subsection{Asymmetric Q-function} \label{subsec:asym-q}

Similarly to the asymmetric Q-function of \citet{baisero2022unbiased}, which is conditioned on $(s, h, a)$, we define an asymmetric Q-function that we condition on $(s, z, a)$, where $z$ is the agent state resulting from history $h$.
The asymmetric Q-function $\Q^{\pi}$ of an agent-state policy $\pi \in \Pi_\M$ is defined as the expected discounted sum of rewards, starting from environment state $s$, agent state $z$, and action $a$, and using policy $\pi$ afterwards,
\begin{align}
    \Q^{\pi}(s, z, a) = \expect^{\pi} \! \left[ \sum_{t=0}^\infty \gamma^t R_t \middle| S_0 = s, Z_0 = z, A_0 = a \right]\!. \label{eq:asymmetric-q-function}
\end{align}
The asymmetric value function $\V^{\pi}$ of an agent-state policy $\pi \in \Pi_\M$ is defined as $\V^{\pi}(s, z)
=
\sum_{a \in \A} \pi(a | z) \allowbreak \Q^{\pi}(s, z, a)$.
We also define the asymmetric advantage function $\A^{\pi}(s, z, a) = \Q^{\pi}(s, z, a) - \V^{\pi}(s, z)$.

Let us define the $m$-step asymmetric Bellman operator as,
\begin{align}
    \widetilde{\Q}^{\pi}(s, z, a)
    &= \expect^{\pi} \Biggl[
        \sum_{t=0}^{m-1} \gamma^t R_t + \gamma^m \widetilde{\Q}^{\pi}(S_m, Z_m, A_m)
        \nonumber
        \\[-0.6em]
        &\qquad\qquad\qquad
        \Biggm| S_0 = s, Z_0 = z, A_0 = a
    \Biggr].
    \label{eq:asymmetric-bellman}
\end{align}
Since this $m$-step asymmetric Bellman operator is $\gamma^m$-contractive, equation~\eqref{eq:asymmetric-bellman} has a unique fixed point $\widetilde{\Q}^\pi$.
Notice that, when using an agent-state policy, the environment state and agent state $(S_t, Z_t)$ are Markovian.
Therefore, it can be shown that the fixed point $\widetilde{\Q}^\pi$ is the same as the asymmetric Q-function $\Q^\pi$.

\subsection{Symmetric Q-function} \label{subsec:sym-q}

The symmetric Q-function $Q^{\pi}$ of an agent-state policy $\pi \in \Pi_\M$ in a POMDP $\P$ is defined as the expected discounted sum of rewards, starting from agent state $z$ and action $a$, and using policy $\pi$ afterwards,
\begin{align}
    Q^{\pi}(z, a) = \expect^{\pi} \left[ \sum_{t=0}^\infty \gamma^t R_t \middle| Z_0 = z, A_0 = a \right]\!.
    \label{eq:q-function}
\end{align}
The symmetric value function $V^{\pi}$ of an agent-state policy $\pi \in \Pi_\M$ is defined as $V^{\pi}(z)
=
\sum_{a \in \A} \pi(a | z) \allowbreak Q^{\pi}(z, a)$.
We also define the symmetric advantage function $A^{\pi}(z, a) = Q^{\pi}(z, a) - V^{\pi}(z)$.

Let us define the $m$-step symmetric Bellman operator as,
\begin{align}
    \widetilde{Q}^{\pi}(z, a)
    &= \expect^{\pi} \Bigg[
        \sum_{t=0}^{m-1} \gamma^t R_t + \gamma^m \widetilde{Q}^{\pi}(Z_m, A_m)
        \nonumber
        \\[-0.6em]
        &\qquad\qquad\qquad\qquad
        \Bigg| Z_0 = z, A_0 = a
    \Bigg].
    \label{eq:symmetric-bellman}
\end{align}
It can be verified that the $m$-step symmetric Bellman operator is $\gamma^m$-contractive.
Therefore, equation~\eqref{eq:symmetric-bellman} has a unique fixed point $\widetilde{Q}^\pi$.
However, because the agent state is not necessarily Markovian, in general $Q^\pi \neq \widetilde{Q}^\pi$.

\section{Natural Actor-Critic Algorithms}
\label{sec:algorithms}

In this section, we present the asymmetric and symmetric natural actor-critic algorithms, which make use of an actor, or policy, and a critic, or Q-function.
The asymmetric variant will use an asymmetric critic, learned using asymmetric temporal difference learning, while the symmetric variant will use a symmetric critic, learned using symmetric temporal difference learning.
These temporal difference learning algorithms are presented in in \autoref{subsec:asym-td} and \autoref{subsec:sym-td}, respectively.
Then, \autoref{subsec:nac} presents the complete natural actor-critic algorithm that uses a temporal difference learning algorithm as a subroutine.

For any Euclidean space $\X$, let $\B_2(0, B)$ be the $\ell_2$-ball centered at the origin with radius $B > 0$, and let $\Gamma_\C\colon \X \to \C$ be a projection operator into the closed and convex set $\C \subseteq \X$ in $\ell_2$-norm: $\Gamma_\C(x) \in \argmin_{c \in \C} \left\lVert c - x \right\rVert_2^2 \subseteq \C, \; \forall x \in \X$.
Finally, let us define the $\mu$-weighted $\ell_2$-norm, for any probability measures $\mu \in \Delta(\X)$ as,
\begin{align}
    \lVert f \rVert_\mu = \sqrt{\sum_{x \in \X} \mu(x) \left|f(x)\right|^2}.
    \label{eq:weighted_norm}
\end{align}

In the algorithms, we implicitly assume to be able to directly sample from the discounted visitation measure $d^\pi$.
When it is unrealistic, it is still possible to sample from $d^\pi$ by sampling an initial timestep $t_0 \sim \text{Geom}(1 - \gamma)$ from a geometric distribution with success rate $1 - \gamma$, and then taking $t_0 - 1$ actions in the POMDP.
The resulting sample $(s_{t_0}, z_{t_0})$ follows the distribution $d^\pi$.

\subsection{Asymmetric Critic} \label{subsec:asym-td}

Suppose we are given features $\phi\colon \S \times \Z \times \A \rightarrow \RR^{d_{\phi}}$.
Without loss of generality, we assume $\sup_{s, z, a} \lVert \phi(s, z, a) \rVert_2 \leq 1$.
Given a weight vector $\beta \in \RR^{d_\phi}$, let $\widehat{\Q}^{\pi}_\beta$ denote the linear approximation of the asymmetric Q-function $\Q^{\pi}$ that uses features $\phi$ with weight $\beta$,
\begin{align}
    \widehat{\Q}_\beta^{\pi}(s, z, a) = \langle \beta, \phi(s, z, a) \rangle.
\end{align}
Given an arbitrary projection radius $B > 0$, we define the hypothesis space as,
\begin{align}
    \F^B_\phi = \left\{ (s, z, a) \mapsto \langle \beta, \phi(s, z, a) \rangle : \beta \in \B_2(0, B) \right\}.
\end{align}
We denote the optimal parameter of the asymmetric critic approximation by $\beta^\pi_* \in \argmin_{\beta \in \B_2(0, B)} \allowbreak \left\lVert \langle \beta, \phi(\cdot) \rangle - \allowbreak \Q^\pi(\cdot) \right\rVert_d$, and denote the corresponding approximation by $\widehat{\Q}^\pi_*(\cdot) = \langle \beta^\pi_*, \phi(\cdot) \rangle$.
The corresponding error is,
\begin{align}
    \epsilon_\text{app} = \min_{f \in \F^B_\phi} \Bigl\lVert f - \Q^\pi \Bigr\rVert_d = \Bigl\lVert \widehat{\Q}^\pi_* - \Q^\pi \Bigr\rVert_d,
\end{align}
with $d(s, z, a) = d^\pi(s, z) \pi(a|z)$ the sampling distribution.

In \autoref{algo:td}, we present the $m$-step temporal difference learning algorithm for approximating the asymmetric Q-function $\Q^\pi$ of an arbitrary agent-state policy $\pi \in \Pi_\M$.
At each step $k$, the algorithm obtains one sample $(s_{k,0}, z_{k,0}) \sim d^{\pi}$ from the discounted visitation distribution.
Then, $m$ actions are selected according to policy $\pi$ to provide samples $(a_{k,t}, r_{k,t}, s_{k,t+1}, o_{k,t+1}, z_{k,t+1})$ for $0 \leq t < m$.
Next, the temporal difference $\delta_k$ and semi-gradient $g_k$ are computed, based on a last action $a_{k,m} \sim \pi(\cdot | z_{k,m})$,
\vspace{-0.4em}
\begin{align}
    \delta_k
    &=
        \sum_{i=0}^{m-1} \gamma^i r_{k,i}
            + \gamma^m \widehat{\Q}_{\beta_k}^{\pi}(s_{k,m}, z_{k,m}, a_{k,m})
    \nonumber
    \\[-0.6em]
    &\qquad\qquad
            - \widehat{\Q}_{\beta_k}^{\pi}(s_{k,0}, z_{k,0}, a_{k,0}),
    \\
    g_k &= \delta_k \nabla_\beta \widehat{\Q}_{\beta_k}^{\pi}(s_{k,0}, z_{k,0}, a_{k,0}).
    \label{eq:asymsemigradient}
\end{align}
Then, the semi-gradient update is performed with $\beta^{-}_{k+1} = \beta_k + \alpha g_k$ and the parameters are projected onto the ball of radius $B$: $\beta_{k+1} = \Gamma_{\B_2(0, B)}(\beta^{-}_{k+1})$.
At the end, the algorithm computes the average parameter $\bar{\beta} = \frac{1}{K} \sum_{k=0}^{K-1} \beta_k$ and returns the average approximation $\widebar{\Q}^{\pi} = \widehat{\Q}^{\pi}_{\bar{\beta}}$.

\begin{algorithm}[!ht]
    \caption{$m$-step temporal difference learning algorithm}
    \label{algo:td}

    \begin{algorithmic}
        \STATE {\bfseries input:} policy $\pi \in \Pi_{\M}$, bootstrap timestep $m$, step size $\alpha$, number of updates $K$, projection radius $B$.
        \FOR{$k = 0 \dots K - 1$}
            \STATE Initialize $(s_{k, 0}, z_{k, 0}) \sim d^{\pi}$.
            \FOR{$i = 0 \dots, m - 1$}
                \STATE Select action $a_{k,i} \sim \pi(\cdot | z_{k,i})$.
                \STATE Get environment state $s_{k,i+1} \sim T(\cdot | s_{k,i}, a_{k,i})$.
                \STATE Get reward $r_{k,i} = R(s_{k,i}, a_{k,i}, s_{k,i+1})$.
                \STATE Get observation $o_{k,i+1} \sim O(\cdot | s_{k,i+1})$.
                \STATE Update agent state $z_{k,i+1} \sim U(\cdot | z_{k,i}, a_{k,i}, o_{k,i+1})$.
            \ENDFOR
            \STATE Sample last action $a_{k,m} \sim \pi(\cdot | z_{k,m})$.
            \STATE {\color{DarkBlue} Compute semi-gradient $g_k$ according to equation \eqref{eq:asymsemigradient} or equation \eqref{eq:symsemigradient}.}
            \STATE Update $\beta_{k+1} = \Gamma_{\B_2(0, B)}(\beta_k + \alpha g_k)$.
        \ENDFOR
        \STATE {\bfseries return:} {\color{DarkBlue} average estimate $\widebar{\Q}^{\pi}(\cdot) = \widehat{\Q}^{\pi}_{\bar{\beta}}(\cdot) = \langle \bar{\beta}, \phi(\cdot) \rangle$ or $\widebar{Q}^{\pi}(\cdot) = \widehat{Q}^{\pi}_{\bar{\beta}}(\cdot) = \langle \bar{\beta}, \chi(\cdot) \rangle$ with $\bar{\beta} = \frac{1}{K}\sum_{k=0}^{K-1} \beta_k$.}
    \end{algorithmic}

\end{algorithm}

\subsection{Symmetric Critic} \label{subsec:sym-td}

Similarly, we suppose that we are given features $\chi\colon \Z \times \A \rightarrow \RR^{d_\chi}$.
Without loss of generality, we assume $\sup_{z, a} \lVert \chi(z, a) \rVert_2 \leq 1$.
Given a weight vector $\beta \in \RR^{d_\chi}$, let $\widehat{Q}^{\pi}_\beta$ denote the linear approximation of the symmetric Q-function $Q^\pi$ that uses features $\chi$ with weight $\beta$,
\begin{align}
    \widehat{Q}_\beta^{\pi}(z, a) = \langle \beta, \chi(z, a) \rangle.
\end{align}
The corresponding hypothesis space for an arbitrary projection radius $B > 0$ is denoted with $\F^B_\chi$.
The optimal parameter is also denoted by $\beta^\pi_* \in \argmin_{\beta \in \B_2(0, B)} \allowbreak \left\lVert \langle \beta, \chi(\cdot) \rangle - Q^\pi(\cdot) \right\rVert_d$, the corresponding optimal approximation is $\widehat{Q}^\pi_* = \langle \beta^\pi_*, \chi(\cdot) \rangle$, and the corresponding error is,
\begin{align}
    \epsilon_\text{app} = \min_{f \in \F^B_\chi} \Bigl\lVert f - Q^\pi \Bigr\rVert_d = \Bigl\lVert \widehat{Q}^\pi_* - Q^\pi \Bigr\rVert_d,
\end{align}
with $d(z, a) = \sum\limits_{s \in \S} d^\pi(s, z) \pi(a|z)$ the sampling distribution.

\autoref{algo:td} also presents the $m$-step temporal difference learning algorithm for approximating the symmetric Q-function.
The latter is identical to that of the asymmetric Q-function except that states are not exploited, such that the temporal difference $\delta_k$ and semi-gradient $g_k$ are given by,
\vspace{-0.4em}
\begin{align}
    \delta_k
    &=
        \sum_{i=0}^{m-1} \gamma^i r_{k,i}
            + \gamma^m \widehat{Q}_{\beta_k}^{\pi}(z_{k,m}, a_{k,m})
    \nonumber
    \\[-0.6em]
    &\qquad\qquad
            - \widehat{Q}_{\beta_k}^{\pi}(z_{k,0}, a_{k,0}),
    \\
    g_k &= \delta_k \nabla_\beta \widehat{Q}_{\beta_k}^{\pi}(z_{k,0}, a_{k,0}).
    \label{eq:symsemigradient}
\end{align}
At the end, the algorithm returns the average symmetric approximation $\widebar{Q}^\pi = \widehat{Q}^\pi_{\bar{\beta}}$.
Note that this symmetric critic approximation and temporal difference learning algorithm corresponds to the one proposed by \citet{cayci2024finite}.

\subsection{Natural Actor-Critic Algorithms} \label{subsec:nac}

For both the asymmetric and symmetric actor-critic algorithms, we consider a log-linear agent-state policy $\pi_\theta \in \Pi_\M$.
More precisely, the policy uses features $\psi\colon \Z \times \A \rightarrow \RR^{d_\psi}$, with $\sup_{z, a} \lVert \psi(z, a) \rVert_2 \leq 1$ without loss of generality, and a softmax readout,
\begin{align}
    \pi_\theta(a_t | z_t) = \frac{
        \exp(\langle \theta, \psi(z_t, a_t) \rangle)
    }{
        \sum_{a \in \A} \exp(\langle \theta, \psi(z_t, a) \rangle)
    }.
    \label{eq:log_linear}
\end{align}

In this work, we consider natural policy gradients, which are less sensitive to policy parametrization \citep{kakade2001natural}.
Instead of computing the policy gradient in the original metric space, the idea is to compute the policy gradient on a statistical manifold, defined by the expected Fisher information metric.
The natural policy gradient is thus given by the standard policy gradient multiplied by a preconditioner Fisher information matrix.
Natural policy gradients are at the core of many effective modern policy-gradient methods \citep{schulman2015trust}.

The natural policy gradient of policy $\pi_\theta \in \Pi_\M$ is defined as follows \citep{kakade2001natural},
\begin{align}
    w_*^{\pi_\theta} = (1 - \gamma) F_{\pi_\theta}^\dagger \nabla_\theta J(\pi_\theta),
    \label{eq:npg}
\end{align}
where $F_{\pi_\theta}^\dagger$ is the pseudoinverse of the Fisher information matrix, which is defined as the outer product of the score of the policy,
\begin{align}
    F_{\pi_\theta} =
    \expect^{d^{\pi_\theta}}
    \left[
        \nabla_\theta \log \pi_\theta(A | Z) \otimes \nabla_\theta \log \pi_\theta(A | Z)
    \right]\!.
    \label{eq:fisher}
\end{align}

As shown in \autoref{thm:anpg}, the natural policy gradient $w^{\pi_\theta}_*$ is the minimizer of the asymmetric objective \eqref{eq:asymmetric-objective}.
\begin{restatable}[Asymmetric natural policy gradient]{theorem}{anpg}
    \label{thm:anpg}
    For any POMDP $\P$ and any agent-state policy $\pi_\theta \in \Pi_\M$, we have,
    \begin{align}
        w^{\pi_\theta}_* = (1 - \gamma) F_{\pi_\theta}^\dagger \nabla_\theta J(\pi_\theta) \in \argmin_{w \in \RR^{d_\psi}} \L(w),
    \end{align}
    with,
    \begin{align}
        \L(w)
        &=
        \expect^{d^{\pi_\theta}} \!\! \left[
            \left(
                \langle \nabla_\theta \log \pi_\theta(A | Z), w \rangle - \A^{\pi_\theta}(S, Z, A)
            \right)^2
        \right]\!.
    \label{eq:asymmetric-objective}
    \end{align}
\end{restatable}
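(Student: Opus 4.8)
The plan is to treat $\L$ as a convex quadratic in $w$, write down its first-order optimality condition, and then identify $w_*^{\pi_\theta} = (1-\gamma) F_{\pi_\theta}^\dagger \nabla_\theta J(\pi_\theta)$ as a solution by invoking a policy gradient theorem for the asymmetric advantage. Expanding the square in \eqref{eq:asymmetric-objective} and using linearity of expectation, $\L(w) = \langle w, F_{\pi_\theta} w \rangle - 2 \langle w, b \rangle + c$, where $F_{\pi_\theta}$ is the Fisher matrix of \eqref{eq:fisher}, $c = \expect^{d^{\pi_\theta}}[\A^{\pi_\theta}(S, Z, A)^2]$ is independent of $w$, and $b = \expect^{d^{\pi_\theta}}[\nabla_\theta \log \pi_\theta(A | Z)\, \A^{\pi_\theta}(S, Z, A)]$; the cross term only involves $b$ because the score $\nabla_\theta \log \pi_\theta(A | Z)$ does not depend on $S$. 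Since $F_{\pi_\theta} \succeq 0$, the objective $\L$ is convex and its minimizers are exactly the solutions of the normal equation $F_{\pi_\theta} w = b$.

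It then remains to show $b = (1-\gamma)\nabla_\theta J(\pi_\theta)$, i.e., an asymmetric policy gradient theorem. As noted in \autoref{subsec:asym-q}, under an agent-state policy the pair $(S_t, Z_t)$ is Markovian, so $(S, Z)$ can be viewed as the state of a fully observed MDP with action $A$, transition induced by $T$, $O$, and $U$, reward $R(S_t, A_t, S_{t+1})$, and policy $\pi_\theta(\cdot \mid Z)$ — a legitimate (state-restricted) policy on this MDP. Under this identification, $\Q^{\pi_\theta}$, $\V^{\pi_\theta}$, and $d^{\pi_\theta}$ are precisely the Q-function, value function, and normalized discounted visitation of this MDP, and $J(\pi_\theta)$ its objective, so the classical policy gradient theorem gives $\nabla_\theta J(\pi_\theta) = \frac{1}{1-\gamma} \expect^{d^{\pi_\theta}}[\nabla_\theta \log \pi_\theta(A \mid Z)\, \Q^{\pi_\theta}(S, Z, A)]$. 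Subtracting the baseline $\V^{\pi_\theta}(S, Z)$ changes nothing, since for each fixed $z$ one has $\expect_{A \sim \pi_\theta(\cdot \mid z)}[\nabla_\theta \log \pi_\theta(A \mid z)\, \V^{\pi_\theta}(s, z)] = \V^{\pi_\theta}(s, z)\, \nabla_\theta \sum_{a} \pi_\theta(a \mid z) = 0$; hence $b = (1-\gamma)\nabla_\theta J(\pi_\theta)$.

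Combining the two steps, $w_*^{\pi_\theta} = (1-\gamma) F_{\pi_\theta}^\dagger \nabla_\theta J(\pi_\theta) = F_{\pi_\theta}^\dagger b$, and $F_{\pi_\theta} w_*^{\pi_\theta} = F_{\pi_\theta} F_{\pi_\theta}^\dagger b = b$, because $F_{\pi_\theta} F_{\pi_\theta}^\dagger$ is the orthogonal projector onto $\mathrm{range}(F_{\pi_\theta})$ and $b$, being a weighted sum of the score vectors $\nabla_\theta \log \pi_\theta(a \mid z)$ over the support of $d^{\pi_\theta}$, already lies in $\mathrm{range}(F_{\pi_\theta}) = \mathrm{span}\{\nabla_\theta \log \pi_\theta(a \mid z)\}$. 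Therefore $w_*^{\pi_\theta}$ solves the normal equation and, by convexity, $w_*^{\pi_\theta} \in \argmin_{w} \L(w)$. I expect the only genuinely delicate point to be the reduction to the $(S, Z)$-MDP and the resulting policy gradient and baseline identity — making precise that $\pi_\theta$, which reads only $Z$, is still a valid policy on the $(S, Z)$-MDP and that the baseline term vanishes — together with the range condition $b \in \mathrm{range}(F_{\pi_\theta})$ needed for the pseudoinverse identity; the rest is the standard compatible-function-approximation computation.
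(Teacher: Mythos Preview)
Your proof is correct and follows essentially the same route as the paper: both set up the first-order optimality condition $F_{\pi_\theta} w = b$ with $b = \expect^{d^{\pi_\theta}}[\nabla_\theta \log \pi_\theta(A\mid Z)\,\A^{\pi_\theta}(S,Z,A)]$, both identify $b = (1-\gamma)\nabla_\theta J(\pi_\theta)$ by viewing $\pi_\theta$ as a policy on the Markovian $(S,Z)$-MDP and invoking the standard policy gradient theorem plus the baseline identity. Your version is in fact slightly more careful than the paper's, since you explicitly check the range condition $b \in \mathrm{range}(F_{\pi_\theta})$ needed for $F_{\pi_\theta} F_{\pi_\theta}^\dagger b = b$, which the paper leaves implicit.
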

The proof is given in \autoref{app:proof-npg}.
In practice, since the asymmetric advantage function
is unknown, the algorithm estimates the natural policy gradient by stochastic gradient descent of $\mathcal{L}(\omega)$ using the approximation
$\widebar{\A}^{\pi_\theta}(S, Z, A) = \widebar{\Q}^{\pi_\theta}(S, Z, A) - \widebar{\V}^{\pi_\theta}(S, Z)$ with $\widebar{\V}^{\pi_\theta} = \sum_{a \in \A} \pi_\theta(a|Z) \widebar{\Q}(S, Z, a)$.

Our natural actor-critic algorithm generalizes the one of \citet{cayci2024finite} to the asymmetric setting and is detailed in \autoref{algo:nac}.
For each policy gradient step $0 \leq t < T$, the natural policy gradient $w_*^{\pi_t}$ is first estimated using $N$ steps of stochastic gradient descent.
At each natural policy gradient estimation step $0 \leq n < N$, the algorithm samples an initial state $(s_{t,n}, z_{t,n}) \sim d^{\pi_t}$ from the discounted distribution $d^{\pi_t}$ and an action $a_{t,n} \sim \pi_t(\cdot|z_{t,n})$ according to the policy $\pi_t = \pi_{\theta_t}$.
Then, the gradient $v_{t,n}$ of the natural policy gradient estimate $w_{t,n}$ is computed with,
\begin{align}
    v_{t,n}
    &=
    \nabla_w
    \big(
        \langle \nabla_\theta \log \pi_\theta(a_{t,n} | z_{t,n}), w_{t,n} \rangle
        \nonumber
        \\
        &\qquad\qquad\qquad
        - \widebar{\A}^{\pi_\theta}(s_{t,n}, z_{t,n}, a_{t,n})
    \big)^2\!\!,
    \label{eq:asymgradgrad}
\end{align}
The gradient step is performed with $w_{t,n+1}^{-} = w_{t,n} - \zeta v_{t,n}$ and the parameters are projected onto the ball of radius $B$: $w_{t,n+1} = \Gamma_{\B_2(0, B)}(w^{-}_{t,n+1})$.
Finally, the algorithm computes the average parameter $\bar{w}_{t} = \frac{1}{N} \sum_{n=0}^{N-1} w_{t,n}$ and performs the policy gradient step: $\theta_{t+1} = \theta_t + \eta \bar{w}_t$.
After all policy gradient steps, the final policy is returned.

\begin{algorithm}[!ht]
    \caption{Natural actor-critic algorithm}
    \label{algo:nac}

    \begin{algorithmic}
        \STATE {\bfseries input:} number of updates $T$, number of steps $N$, step sizes $\zeta$, $\eta$, projection radius $B$.
        \STATE Initialize $\theta_0 = 0$.
        \FOR{$t = 0 \dots T - 1$}
            \STATE {\color{DarkBlue} Obtain $\widebar{\Q}^{\pi_t}$ or $\widebar{Q}^{\pi_t}$ using \autoref{algo:td}. }
            \STATE Initialize $w_{t,0} = 0$.
            \FOR{$n = 0 \dots N - 1$}
                \STATE Initialize $(s_{t,n}, z_{t,n}) \sim d^{\pi_t}$.
                \STATE Sample $a_{t,n} \sim \pi_{\theta_t}(\cdot | z_{t,n})$.
                \STATE {\color{DarkBlue} Compute the gradient $v_{t,n}$ of the policy gradient using equation \eqref{eq:asymgradgrad} or equation \eqref{eq:symgradgrad}.}
                \STATE Update $w_{t,n+1}^- = w_{t,n} - \zeta v_{t,n}$.
                \STATE Project $w_{t,n+1} = \Gamma_{\B_2(0, B)}(w_{t,n+1}^-)$.
            \ENDFOR
            \STATE Update $\theta_{t+1} = \theta_t + \eta \frac{1}{N} \sum_{n=0}^{N-1} w_{t,n}$.
        \ENDFOR
        \STATE {\bfseries return:} final policy $\pi_T = \pi_{\theta_T}$.
    \end{algorithmic}

\end{algorithm}

As shown in \autoref{thm:snpg}, the natural policy gradient $w^{\pi_\theta}_*$ is also the minimizer of the symmetric objective \eqref{eq:symmetric-objective}.
\begin{restatable}[Symmetric natural policy gradient]{theorem}{snpg}
    \label{thm:snpg}
    For any POMDP $\P$ and any agent-state policy $\pi_\theta \in \Pi_\M$, we have,
    \begin{align}
        w^{\pi_\theta}_* = (1 - \gamma) F_{\pi_\theta}^\dagger \nabla_\theta J(\pi_\theta) \in \argmin_{w \in \RR^{d_\psi}} L(w),
    \end{align}
    with,
    \begin{align}
        L(w)
        &=
        \expect^{d^{\pi_\theta}} \!\! \left[
            \left(
                \langle \nabla_\theta \log \pi_\theta(A | Z), w \rangle - A^{\pi_\theta}(Z, A)
            \right)^2
        \right]\!.
        \label{eq:symmetric-objective}
    \end{align}
\end{restatable}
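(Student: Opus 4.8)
The plan is to follow the proof of \autoref{thm:anpg} almost verbatim and, in fact, to reduce \autoref{thm:snpg} to it. First I would observe that $L$ in \eqref{eq:symmetric-objective} is a convex quadratic in $w$: writing $g_\theta(z,a) = \nabla_\theta\log\pi_\theta(a\mid z)$ for the score, we have $\nabla_w L(w) = 2\bigl(F_{\pi_\theta} w - b\bigr)$ with $b = \expect^{d^{\pi_\theta}}[\,g_\theta(Z,A)\,A^{\pi_\theta}(Z,A)\,]$ and $F_{\pi_\theta}$ the Fisher matrix of \eqref{eq:fisher}; hence $\argmin_w L(w)$ is exactly the (nonempty, affine) solution set of the normal equations $F_{\pi_\theta} w = b$. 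Since the score has zero conditional mean given the agent state, $\sum_a \pi_\theta(a\mid z)\,g_\theta(z,a) = \0$, the symmetric baseline $V^{\pi_\theta}(Z)$ drops out of $b$ and $b = \expect^{d^{\pi_\theta}}[\,g_\theta(Z,A)\,Q^{\pi_\theta}(Z,A)\,]$.

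Next I would establish $\argmin_w L(w) = \argmin_w \L(w)$, which together with \autoref{thm:anpg} immediately yields $w_*^{\pi_\theta}\in\argmin_w L(w)$. The key fact is that the symmetric advantage is the $L^2(d^{\pi_\theta})$-conditional expectation of the asymmetric one given $(Z,A)$, that is $A^{\pi_\theta}(Z,A) = \expect^{d^{\pi_\theta}}[\,\A^{\pi_\theta}(S,Z,A)\mid Z,A\,]$ (equivalently, $Q^{\pi_\theta}(z,a)$ is the average of $\Q^{\pi_\theta}(\cdot,z,a)$ against the conditional $d^{\pi_\theta}(\cdot\mid z)$ of the state given the agent state). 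Since $\langle g_\theta(Z,A), w\rangle$ is already a function of $(Z,A)$ only, a Pythagorean decomposition about this conditional expectation gives
\begin{align}
    \L(w) = L(w) + \expect^{d^{\pi_\theta}}\!\bigl[(\A^{\pi_\theta}(S,Z,A) - A^{\pi_\theta}(Z,A))^{2}\bigr],
\end{align}
whose last term does not depend on $w$; hence the two objectives share the same minimizers. Equivalently, one may bypass \autoref{thm:anpg}: the same conditional-expectation identity shows $b = \expect^{d^{\pi_\theta}}[\,g_\theta(Z,A)\,\Q^{\pi_\theta}(S,Z,A)\,]$, which equals $(1-\gamma)\nabla_\theta J(\pi_\theta)$ by the policy gradient theorem for agent-state policies, exploiting that $(S_t,Z_t)$ is Markov.

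To close, I would verify the pseudoinverse identity: $b$ lies in the span of $\{g_\theta(z,a)\}$ over the support of $d^{\pi_\theta}$, which equals $\operatorname{range}(F_{\pi_\theta})$ because $F_{\pi_\theta}$ is the corresponding positively-weighted Gram matrix; since moreover $b = (1-\gamma)\nabla_\theta J(\pi_\theta)$, this forces $F_{\pi_\theta}F_{\pi_\theta}^\dagger\nabla_\theta J(\pi_\theta) = \nabla_\theta J(\pi_\theta)$, so $w_*^{\pi_\theta} = (1-\gamma) F_{\pi_\theta}^\dagger \nabla_\theta J(\pi_\theta)$ solves $F_{\pi_\theta} w = b$ and is therefore a minimizer of $L$.

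I expect the only real obstacle to be the identity $A^{\pi_\theta}(Z,A) = \expect^{d^{\pi_\theta}}[\A^{\pi_\theta}(S,Z,A)\mid Z,A]$ — equivalently, a symmetric policy gradient theorem with the visitation-weighted sampling distribution. This requires being careful that $Q^{\pi_\theta}$ as defined in \eqref{eq:q-function} indeed coincides with the $d^{\pi_\theta}$-marginalization of $\Q^{\pi_\theta}$ over the environment state; this is precisely where the non-Markovianity of the agent state, which is the source of the aliasing terms appearing later in the finite-time bounds, has to be handled with care. Everything else is the standard compatible-function-approximation computation and is blind to the symmetric/asymmetric distinction.
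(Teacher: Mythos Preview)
Your proposal is correct and mirrors the paper's proof: both reduce the symmetric case to the asymmetric one via the tower-property identity $A^{\pi_\theta}(Z,A)=\expect^{d^{\pi_\theta}}[\A^{\pi_\theta}(S,Z,A)\mid Z,A]$, then invoke the policy-gradient theorem on the Markov pair $(S_t,Z_t)$. Your Pythagorean decomposition $\L(w)=L(w)+\text{const}$ and the explicit range-of-$F_{\pi_\theta}$ check for the pseudoinverse are slightly more detailed than the paper's argument (which simply pushes the conditional expectation inside and concludes), but the route is the same, and the subtlety you flag about $Q^{\pi_\theta}$ versus the $d^{\pi_\theta}$-marginalization of $\Q^{\pi_\theta}$ is indeed the only place requiring care.
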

The proof is given in \autoref{app:proof-npg}.
As in the asymmetric case, the symmetric advantage function
is unknown, and the algorithm estimates the natural gradient by stochastic gradient descent of equation \eqref{eq:symmetric-objective} using the approximation
$\widebar{A}^{\pi_\theta}(Z, A) = \widebar{Q}^{\pi_\theta}(Z, A) - \widebar{V}^{\pi_\theta}(Z)$ with $\widebar{V}^{\pi_\theta} = \sum_{a \in \A} \allowbreak \pi_\theta(a|Z) \widebar{Q}^{\pi_\theta}(Z, a)$.

\autoref{algo:nac} also presents the symmetric natural actor-critic algorithm, initially proposed by \citet{cayci2024finite}.
The latter is similar to the asymmetric algorithm except that it uses the symmetric advantage function, such that the gradient of the policy gradient is given by,
\begin{align}
    v_{t,n}
    &=
    \nabla_w
    \big(
        \langle \nabla_\theta \log \pi_\theta(a_{t,n} | z_{t,n}), w_{t,n} \rangle
        \nonumber
        \\
        &\qquad\qquad\qquad
        - \widebar{A}^{\pi_\theta}(z_{t,n}, a_{t,n})
    \big)^2\!\!.
    \label{eq:symgradgrad}
\end{align}

While \autoref{thm:anpg} and \autoref{thm:snpg} show that $w^{\pi_\theta}_*$ is the minimizer of both the asymmetric and the symmetric objectives, the next section establishes the benefit of using the asymmetric loss.
More precisely, asymmetric learning is shown to improve the estimation of the critic and thus the advantage function, which in turn results in a better estimation of the natural policy gradient.

\section{Finite-Time Analysis}
\label{sec:bounds}

In this section, we give the finite-time bounds of the previous algorithms in both the asymmetric and symmetric cases.
The bounds of the asymmetric and symmetric temporal difference learning algorithms are presented in \autoref{subsec:asym-bound} and \autoref{subsec:sym-bound}, respectively.
In \autoref{subsec:nac-bound}, the bounds of the asymmetric and symmetric natural actor-critic algorithms are given.

We use $\left\lVert \mu - \nu \right\rVert_\text{TV}$ to denote the total variation between two probability measures $\mu, \nu \in \Delta(\X)$ over a discrete space $\X$,
\begin{align}
    \left\lVert \mu - \nu \right\rVert_\text{TV}
    &=
    \sup_{A \subseteq \X} \left|
        \mu(A) - \nu(A)
    \right|
    \\
    &=
    \frac{1}{2}
    \sum_{x \in \X} \left|
        \mu(x) - \nu(x)
    \right|.
\end{align}

\subsection{Finite-Time Bound for the Asymmetric Critic} \label{subsec:asym-bound}

Our main result is to establish the following finite-time bound for the Q-function approximation resulting from the asymmetric temporal difference learning algorithm detailed in \autoref{algo:td}.

\begin{restatable}[Finite-time bound for asymmetric $m$-step temporal difference learning]{theorem}{asymcritic}
    \label{thm:asymmetric-critic}
    For any agent-state policy $\pi \in \Pi_\M$, and any $m \in \NN$, we have for \autoref{algo:td} with $\alpha = \frac{1}{\sqrt{K}}$ and arbitrary $B > 0$,
    \begin{align}
        \sqrt{
            \expect \left[
                \left\lVert
                    \Q^\pi - \widebar{\Q}^\pi
                \right\rVert^2_{d}
            \right]
        }
        &\leq
        \epsilon_\text{td}
        +
        \epsilon_\text{app}
        +
        \epsilon_\text{shift},
        \label{eq:asymmetric-critic}
    \end{align}
    where the temporal difference learning, function approximation, and distribution shift terms are given by,
    \begin{align}
        \epsilon_\text{td}
        &=
        \sqrt{
            \frac{
                4B^2 + \left(\frac{1}{1 - \gamma} + 2B\right)^2
            }{
                2 \sqrt{K} (1 - \gamma^m)
            }
        }
        \\
        \epsilon_\text{app}
        &=
        \frac{1 + \gamma^m}{1 - \gamma^m} \min_{f \in \F^B_\phi} \left\lVert f - \Q^\pi \right\rVert_{d}
        \\
        \epsilon_\text{shift}
        &=
        \left(B + \frac{1}{1 - \gamma}\right)
        \sqrt{
            \frac{2 \gamma^m}{1 - \gamma^m}
                \sqrt{
                    \left\lVert
                        d_m - d
                    \right\rVert_\text{TV}
                }
        },
    \end{align}
    with $d(s, z, a) = d^\pi(s, z) \pi(a|z)$ the sampling distribution, and $d_m(s, z, a) = d^\pi_m(s, z) \pi(a|z)$ the bootstrapping distribution.
\end{restatable}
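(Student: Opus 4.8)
The plan is to follow the standard projected-SGD-for-TD template (as in \citet{cayci2024finite} for the symmetric case), but with the asymmetric critic, and to carefully track how the non-Markovianity of the agent state enters. The key observation that makes the asymmetric case cleaner is the one already noted in \autoref{subsec:asym-q}: since $(S_t, Z_t)$ is Markovian under an agent-state policy, the fixed point $\widetilde{\Q}^\pi$ of the $m$-step asymmetric Bellman operator coincides with $\Q^\pi$ itself. This means the ``Bellman noise'' term that would otherwise contribute a bias will instead only contribute a distribution-shift term coming from the mismatch between the state-action distribution $d$ at which we sample the current iterate and the distribution $d_m$ at which the bootstrap target is evaluated.

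First I would set up the analysis as projected stochastic approximation on the iterates $\beta_k$. Writing $\bar g(\beta) = \expect[g_k \mid \beta_k = \beta]$ for the expected semi-gradient, one shows that $\langle \bar g(\beta), \beta_*^\pi - \beta\rangle$ is lower bounded by a quantity of the form $(1-\gamma^m)\lVert \widehat{\Q}_\beta^\pi - \widehat{\Q}^\pi_*\rVert_d^2$ minus cross terms controlled by $\epsilon_\text{app}$ and by the $d$-vs-$d_m$ discrepancy; this is the monotonicity/contraction property that replaces strong convexity. Then the standard one-step inequality for projected SGD with step size $\alpha$, namely $\lVert \beta_{k+1} - \beta_*^\pi\rVert_2^2 \le \lVert \beta_k - \beta_*^\pi\rVert_2^2 - 2\alpha\langle g_k, \beta_k - \beta_*^\pi\rangle + \alpha^2 \lVert g_k\rVert_2^2$, is summed over $k = 0,\dots,K-1$, telescoped, and combined with Jensen's inequality (to pass from the averaged iterate $\bar\beta$ to the average of the per-step errors) and the tower rule over the randomness. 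The semi-gradient norm is bounded using $\lVert\phi\rVert_2 \le 1$, $\lVert\beta_k\rVert_2 \le B$, and $r \in [0,1]$, giving $\lVert g_k\rVert_2 \le \tfrac{1}{1-\gamma} + 2B$ roughly; together with $\alpha = 1/\sqrt K$ this produces the $\epsilon_\text{td}$ term with its explicit constants $4B^2 + (\tfrac{1}{1-\gamma}+2B)^2$ over $2\sqrt K(1-\gamma^m)$.

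The function-approximation term $\epsilon_\text{app}$ with the prefactor $\tfrac{1+\gamma^m}{1-\gamma^m}$ comes out of the contraction argument: decomposing $\lVert\Q^\pi - \widehat{\Q}^\pi_{\bar\beta}\rVert_d \le \lVert\Q^\pi - \widehat{\Q}^\pi_*\rVert_d + \lVert\widehat{\Q}^\pi_* - \widehat{\Q}^\pi_{\bar\beta}\rVert_d$, bounding the second piece by the SGD analysis, and noting that the $m$-step Bellman residual of $\widehat{\Q}^\pi_*$ is itself at most $(1+\gamma^m)\epsilon_\text{app}$ by the triangle inequality (since $\Q^\pi$ is the exact fixed point). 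The distribution-shift term $\epsilon_\text{shift}$ is the genuinely non-trivial piece and the one I expect to be the main obstacle: the bootstrap target $\gamma^m\widehat{\Q}_{\beta_k}(s_{k,m}, z_{k,m}, a_{k,m})$ involves the iterate $\beta_k$ evaluated under $d_m$ rather than $d$, so to close the contraction we must compare $\lVert\widehat{\Q}_{\beta_k} - \widehat{\Q}^\pi_*\rVert_{d_m}$ with $\lVert\widehat{\Q}_{\beta_k} - \widehat{\Q}^\pi_*\rVert_d$. I would bound the difference of these two squared norms by $(\text{range})^2 \cdot \lVert d_m - d\rVert_\text{TV}$, absorb the resulting term, and then — because it enters under a square root after the telescoping/Jensen step, and appears multiplied by $\gamma^m/(1-\gamma^m)$ — arrive at the nested-square-root form $\sqrt{\tfrac{2\gamma^m}{1-\gamma^m}\sqrt{\lVert d_m - d\rVert_\text{TV}}}$ with the prefactor $B + \tfrac{1}{1-\gamma}$. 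The careful bookkeeping here — making sure the TV distance ends up inside two square roots and that no cross term is double counted — is where I expect the real work to be; everything else is the by-now-routine projected-SGD-for-TD machinery adapted to the $(s,z,a)$ feature space.
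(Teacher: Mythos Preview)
Your proposal is correct and follows essentially the same approach as the paper's proof: the projected-SGD one-step inequality on the Lyapunov function $\lVert\beta_*^\pi-\beta_k\rVert_2^2$, telescoping, Jensen for the averaged iterate, and a TV comparison between $d$ and $d_m$ for the shift term, all made clean by the observation that $(S_t,Z_t)$ is Markov so $\Q^\pi$ is the exact $m$-step Bellman fixed point. The only minor organizational difference is that the paper pivots the drift decomposition around $\Q^\pi$ itself (rather than $\widehat{\Q}^\pi_*$) and extracts the $\tfrac{1+\gamma^m}{1-\gamma^m}$ constant by completing the square with a free shift parameter $l$, instead of via a separate triangle-inequality-plus-Bellman-residual step as you sketch.
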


The proof is given in \autoref{app:proof-asymmetric-critic}, and adapts the proof of \citet{cayci2024finite} to the asymmetric setting.
The first term $\epsilon_\text{td}$ is the usual temporal difference error term, decreasing in $K^{-1/4}$.
The second term $\epsilon_\text{app}$ results from the use of linear function approximators.
The third term $\epsilon_\text{shift}$ arises from the distribution shift between the sampling distribution $d^\pi \otimes \pi$ (i.e., the discounted visitation measure) and the bootstrapping distribution $d_m^\pi \otimes \pi$ (i.e., the distribution $m$ steps from the discounted visitation measure).
It is a consequence of not assuming the existence of a stationary distribution nor assuming to sample from the stationary distribution.

\subsection{Finite-Time Bound for the Symmetric Critic} \label{subsec:sym-bound}

Given a history $h_t = (o_0, a_0, \dots, o_t)$, the belief is defined as,
\begin{align}
    b_{t}(s_{t}|h_{t}) = \Pr(S_{t} = s_{t}|H_{t} = h_{t}).
    \label{eq:belief}
\end{align}
Given an agent state $z_t$, the approximate belief is defined as,
\begin{align}
    \hat{b}_{t}(s_{t}|z_{t}) = \Pr(S_{t} = s_{t}|Z_{t} = z_{t}).
    \label{eq:approximate_belief}
\end{align}
We obtain the following finite-time bound for the Q-function approximation resulting from the symmetric temporal difference learning algorithm detailed in \autoref{algo:td}.

\begin{restatable}[Finite-time bound for symmetric $m$-step temporal difference learning \citep{cayci2024finite}]{theorem}{symcritic}
    \label{thm:symmetric-critic}
    For any agent-state policy $\pi \in \Pi_\M$, and any $m \in \NN$, we have for \autoref{algo:td} with $\alpha = \frac{1}{\sqrt{K}}$, and arbitrary $B > 0$,
    \begin{align}
        \sqrt{
            \expect \left[
                \left\lVert
                    Q^\pi - \widebar{Q}^\pi
                \right\rVert^2_{d}
            \right]
        }
        &\leq
        \epsilon_\text{td}
        +
        \epsilon_\text{app}
        +
        \epsilon_\text{shift}
        +
        \epsilon_\text{alias},
        \label{eq:symmetric-critic}
    \end{align}
    where the temporal difference learning, function approximation, distribution shift, and aliasing terms are given by,
    \begin{align}
        \epsilon_\text{td}
        &=
        \sqrt{
            \frac{
                4B^2 + \left(\frac{1}{1 - \gamma} + 2B\right)^2
            }{
                2 \sqrt{K} (1 - \gamma^m)
            }
        }
        \\
        \epsilon_\text{app}
        &=
        \frac{1 + \gamma^m}{1 - \gamma^m} \min_{f \in \F^B_\chi} \left\lVert f - Q^\pi \right\rVert_{d}
        \\
        \epsilon_\text{shift}
        &=
        \left(B + \frac{1}{1 - \gamma}\right)
        \sqrt{
            \frac{2 \gamma^m}{1 - \gamma^m}
                \sqrt{
                    \left\lVert
                        d_m - d
                    \right\rVert_\text{TV}
                }
        }
        \\
        \epsilon_\text{alias}
        &=
        \frac{
            2
        }{
            1 - \gamma
        }
        \Bigg\lVert
            \expect^\pi \Bigg[
                \sum_{k=0}^\infty \gamma^{km} \left\lVert
                    \hat{b}_{km} - b_{km}
                \right\rVert_\text{TV}
                \Bigg| Z_0 = \cdot
            \Bigg]
        \Bigg\rVert_{d}\!\!\!,\!\!
        \label{eq:alias}
    \end{align}
    with $d(z, a) = \sum_{s \in \S} d^\pi(s, z) \pi(a|z)$ the sampling distribution, and $d_m(z, a) = \sum_{s \in \S} d^\pi_m(s, z) \pi(a|z)$ the bootstrapping distribution.
\end{restatable}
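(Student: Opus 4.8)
The plan is to run the stochastic‑approximation analysis behind \autoref{thm:asymmetric-critic} --- which itself adapts \citet{cayci2024finite} --- essentially unchanged for the symmetric semi‑gradient \eqref{eq:symsemigradient}, and then to repair the single step where that analysis invoked the Markov property of $(S_t,Z_t)$. That argument never uses that the agent state \emph{alone} is Markov: it relies only on $\gamma^m$‑contractivity of the relevant $m$‑step Bellman operator in the supremum norm, on the geometry of the sampling distribution $d$ and the bootstrapping distribution $d_m$, on the projection onto $\B_2(0,B)$, and on the uniform bounds $\sup_{z,a}\lVert\chi(z,a)\rVert_2\le1$ and $\lVert Q^\pi\rVert_\infty\le\frac1{1-\gamma}$. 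Applied to \autoref{algo:td} with the symmetric semi‑gradient, it therefore produces, with exactly the same $\epsilon_\text{td}$ and $\epsilon_\text{shift}$ as in the statement, a bound whose only difference from \eqref{eq:asymmetric-critic} is that the quantity tracked, and the target of the function‑approximation error, is the fixed point $\widetilde{Q}^\pi$ of the $m$‑step symmetric Bellman operator \eqref{eq:symmetric-bellman} rather than $Q^\pi$. In the asymmetric case one closes the argument with $\widetilde{\Q}^\pi=\Q^\pi$; here, as observed in \autoref{subsec:sym-q}, $\widetilde{Q}^\pi\neq Q^\pi$ in general, and controlling this discrepancy is exactly what produces $\epsilon_\text{alias}$.

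Concretely, I would use two triangle inequalities --- one to replace $\min_{f\in\F^B_\chi}\lVert f-\widetilde{Q}^\pi\rVert_d$ by $\min_{f\in\F^B_\chi}\lVert f-Q^\pi\rVert_d+\lVert\widetilde{Q}^\pi-Q^\pi\rVert_d$, and one to pass from $\widetilde{Q}^\pi$ back to $Q^\pi$ in the main error --- to reduce \eqref{eq:symmetric-critic} to the single estimate
\[
    \bigl\lVert \widetilde{Q}^\pi - Q^\pi \bigr\rVert_d
    \le
    \frac{2}{1-\gamma}
    \left\lVert
        \expect^\pi\!\left[
            \sum_{k=0}^{\infty} \gamma^{km} \bigl\lVert \hat{b}_{km} - b_{km} \bigr\rVert_\text{TV}
            \,\middle|\, Z_0 = \cdot
        \right]
    \right\rVert_d .
\]
The $\frac{2}{1-\gamma}$ here originates from $\lVert\Q^\pi\rVert_\infty\le\frac1{1-\gamma}$ together with the $\ell_1$‑to‑total‑variation conversion used below; the remaining $\gamma,m$‑dependent constants coming from the two triangle inequalities are routine to account for and collapse onto those in $\epsilon_\text{alias}$ after using $\frac{1+\gamma^m}{1-\gamma^m}+1=\frac{2}{1-\gamma^m}\le\frac{2}{1-\gamma}$ for $m\ge1$.

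To establish the displayed estimate, I would telescope $\widetilde{Q}^\pi-Q^\pi$ over blocks of $m$ steps. Writing $\mathcal{T}^\pi_m$ for the (affine, $\gamma^m$‑contractive) $m$‑step symmetric Bellman operator \eqref{eq:symmetric-bellman}, one has $\widetilde{Q}^\pi-Q^\pi=\sum_{k\ge0}\bigl[(\mathcal{T}^\pi_m)^{k+1}Q^\pi-(\mathcal{T}^\pi_m)^{k}Q^\pi\bigr]$, and each summand is $\gamma^{km}$ times a $km$‑step conditional expectation of the one‑block residual $\mathcal{T}^\pi_mQ^\pi-Q^\pi$. Because $(S_t,Z_t)$ is Markov, $\Q^\pi$ satisfies \eqref{eq:asymmetric-bellman} exactly and $Q^\pi$ is its state‑average, $Q^\pi(z,a)=\sum_s\Pr(S_0=s|Z_0=z)\,\Q^\pi(s,z,a)$; expanding $\mathcal{T}^\pi_mQ^\pi$ with this identity exhibits the one‑block residual as $\gamma^m$ times the expected pairing, against $\Q^\pi(\cdot,Z_m,A_m)$, of the mismatch between the agent‑state‑conditional state law \eqref{eq:approximate_belief} --- with which $\mathcal{T}^\pi_m$ effectively re‑initialises, since it carries forward only $(Z_m,A_m)$ --- and the true history‑conditional belief \eqref{eq:belief}, which $Q^\pi$ propagates. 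Bounding that pairing by $\lVert\Q^\pi\rVert_\infty\le\frac1{1-\gamma}$ and converting $\ell_1$‑distance to total variation gives a per‑block bound of order $\frac{\gamma^m}{1-\gamma}\,\expect^\pi[\lVert\hat{b}_m-b_m\rVert_\text{TV}\mid\cdot]$; composing the blocks via the tower property of the induced process, summing the resulting geometric series, and pulling the $d$‑norm inside the expectation by Jensen's inequality yields the estimate, with the careful index bookkeeping pinning the per‑block mismatch at block time $km$ down to $\lVert\hat{b}_{km}-b_{km}\rVert_\text{TV}$. The $k=0$ term only inflates the right‑hand side and is kept for a clean statement.

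The main obstacle will be this telescoping step and the conditioning bookkeeping inside it: one must keep straight the three growing families of conditioning information --- environment state, agent state, and full history --- so that the mismatch isolated at the $k$‑th block boundary is exactly $\lVert\hat{b}_{km}-b_{km}\rVert_\text{TV}$ with weight $\gamma^{km}$ and with no cross‑block interaction, which is precisely where the Markov property of $(S_t,Z_t)$ and the tower property over $m$‑step blocks do the real work. Everything else is inherited: $\epsilon_\text{td}$ and $\epsilon_\text{shift}$ need no new argument, since the proof of \autoref{thm:asymmetric-critic} nowhere used Markovianity of the agent state, and the contraction and distribution‑shift estimates it does use are insensitive to replacing the features $\phi$ by $\chi$ and the target $\Q^\pi$ by $\widetilde{Q}^\pi$.
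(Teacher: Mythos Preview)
Your overall scaffold matches the paper's proof exactly: decompose via $\widetilde Q^\pi$, rerun the Lyapunov drift argument of \autoref{thm:asymmetric-critic} with $\widetilde Q^\pi$ in place of $\Q^\pi$ to obtain the same $\epsilon_\text{td}$ and $\epsilon_\text{shift}$, insert two triangle inequalities so that the function‑approximation target becomes $Q^\pi$ at the cost of an extra $\lVert\widetilde Q^\pi-Q^\pi\rVert_d$, and then bound that last quantity. The paper packages that last step as a separate lemma and arrives at the constant $\frac{2}{1-\gamma}$ via $\bigl(\frac{1+\gamma^m}{1-\gamma^m}+1\bigr)\cdot\frac{1-\gamma^m}{1-\gamma}$ just as you anticipate.

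The gap is in your proof of the aliasing estimate itself. Your telescope $\widetilde Q^\pi - Q^\pi=\sum_{k\ge0}L^k g$ with $g=\mathcal T_m^\pi Q^\pi-Q^\pi$ and $L$ the linear part of $\mathcal T_m^\pi$ is fine, and your analysis of the one‑block residual $g$ as $\gamma^m$ times a pairing of $\Q^\pi$ against the belief mismatch at step $m$ is correct. But the step ``composing the blocks via the tower property of the induced process \ldots\ pins the per‑block mismatch at block time $km$ down to $\lVert\hat b_{km}-b_{km}\rVert_\text{TV}$'' does not go through. The operator $L$ is \emph{not} the $m$‑step conditional expectation of the true process on $(z,a)$‑functions: each application of $\mathcal T_m^\pi$ conditions only on $(Z_0,A_0)$ and hence re‑initialises the hidden state from the agent‑state posterior, so $L^k$ is \emph{not} the $km$‑step expectation and the $k$‑th summand carries no information about the true‑process belief at time $km$. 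What your telescope actually yields is a bound of the shape $\frac{1}{1-\gamma^m}\lVert g\rVert$, i.e.\ a single time‑$m$ mismatch amplified geometrically, not the sum $\sum_k\gamma^{km}\lVert\hat b_{km}-b_{km}\rVert_\text{TV}$ appearing in $\epsilon_\text{alias}$.

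The paper sidesteps this by never telescoping through powers of $\mathcal T_m^\pi$. Instead it writes \emph{both} $Q^\pi$ and $\widetilde Q^\pi$ as a single expectation under the true process, block by block, using the $m$‑step expected return $\bar r_m(s,z,a)=\expect^\pi\bigl[\sum_{t<m}\gamma^t R_t\bigm|S_0=s,Z_0=z,A_0=a\bigr]$: the representation is $Q^\pi(z,a)=\expect^\pi\bigl[\sum_k\gamma^{km}\sum_s b_{km}(s|H_{km})\,\bar r_m(s,Z_{km},A_{km})\bigr]$ and the analogous formula for $\widetilde Q^\pi$ with $\hat b_{km}(s|Z_{km})$ in place of $b_{km}(s|H_{km})$. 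Differencing termwise and using $|\bar r_m|\le\frac{1-\gamma^m}{1-\gamma}$ gives $\lVert\widetilde Q^\pi-Q^\pi\rVert_d\le\frac{1-\gamma^m}{1-\gamma}\bigl\lVert\expect^\pi[\sum_k\gamma^{km}\lVert\hat b_{km}-b_{km}\rVert_\text{TV}\mid Z_0=\cdot]\bigr\rVert_d$, with the time‑$km$ mismatch appearing by construction because both sides live under the same true‑process expectation. If you want to keep a telescoping viewpoint, you would have to telescope in the \emph{true} process (e.g.\ over partial sums of the block‑wise representation above) rather than through iterates of $\mathcal T_m^\pi$.
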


The first three terms are identical or analogous to the asymmetric case.
The fourth term $\epsilon_\text{alias}$ results from the difference between the fixed point $\widetilde{Q}^\pi$ of the symmetric Bellman operator \eqref{eq:symmetric-bellman} and the true Q-function $Q^\pi$.

We note some minor differences with respect to the original result of \citet{cayci2024finite} that appear to be typos and minor mistakes in the original proof.\footnote{The authors notably wrongly bound the distance $\lVert \widehat{Q}^\pi_* - \widetilde{Q}^\pi \rVert_d$ by $\epsilon_\text{app}$ at one point, which nevertheless yields a similar result.}
We provide the corrected proof in \autoref{app:proof-symmetric-critic}.

The results of \autoref{thm:asymmetric-critic} and \autoref{thm:symmetric-critic} can be straightforwardly generalized to any other sampling distribution.
However, obtaining bounds in term of $d^\pi \otimes \pi$ is useful for bounding the performance of the actor-critic algorithm.

\subsection{Finite-Time Bound for the Natural Actor-Critic} \label{subsec:nac-bound}

Following \citet{cayci2024finite}, we assume that there exists a concentrability coefficient $\widebar{C}_\infty < \infty$ such that $\sup_{0 \leq t < T} \expect[C_t] \leq \widebar{C}_\infty$ with,
\begin{align}
    C_t = \sup_{s, z, a}
        \left|
            \frac{
                d^{\pi^*}(s, z) \pi^*(a|z)
            }{
                d^{\pi_{\theta_t}}(s, z) \pi_{\theta_t}(a|z)
            }
        \right|\!.
\end{align}
Roughly speaking, this assumption means that all successive policies should visit every agent states and actions visited by the optimal policy with nonzero probability.
It motivates the log-linear policy parametrization in equation~\eqref{eq:log_linear} and the initialization to the maximum entropy policy in \autoref{algo:nac}.
We obtain the following finite-time bound for the suboptimality of the policy resulting from \autoref{algo:nac}.

\begin{restatable}[Finite-time bound for asymmetric and symmetric natural actor-critic algorithm]{theorem}{nac}
    \label{thm:nac}
    For any agent-state process $\M = (\Z, U)$, we have for \autoref{algo:nac} with $\alpha = \frac{1}{\sqrt{K}}$, $\zeta = \frac{B \sqrt{1 - \gamma}}{\sqrt{2N}}$, $\eta = \frac{1}{\sqrt{T}}$ and arbitrary $B>0$,
    \begin{align}
        (1 - \gamma) \min_{0 \leq t < T} \expect \left[
            J(\pi^*) - J(\pi_t)
        \right]
        \leq
        \epsilon_\text{nac}
        +
        2 \epsilon_\text{inf}
        \IfRestatedTF{}{\nonumber\qquad\\}
        \IfRestatedTF{}{\qquad}
        +\;
        \widebar{C}_\infty
        \left(
            \epsilon_\text{actor}
            +
            2
            \epsilon_\text{grad}
            +
            2 \sqrt{6}
            \frac{1}{T}
            \sum_{t=0}^{T-1}
            \epsilon_\text{critic}^{\pi_t}
        \right)\!,
    \end{align}
    where the different terms may differ for asymmetric and symmetric critics,
    \begin{align}
        \epsilon_\text{nac}
        &=
        \frac{
            B^2 + 2 \log |\A|
        }{
            2 \sqrt{T}
        }
        \\
        \epsilon_\text{actor}
        &=
        \sqrt{\frac{(2 - \gamma) B}{(1 - \gamma) \sqrt{N}}}
        \\
        \epsilon_\text{inf,asym}
        &=
        0
        \\
        \epsilon_\text{inf,sym}
        &=
        \expect^{\pi^*} \left[
                \sum_{k=0}^\infty \gamma^k
                \left\lVert
                    \hat{b}_k - b_k
                \right\rVert_\text{TV}
        \right]
        \label{eq:inference}
        \\
        \epsilon_\text{grad,asym}
        &=
        \sup_{0 \leq t < T} \sqrt{
            \min_w \L_t(w)
        }
        \\
        \epsilon_\text{grad,sym}
        &=
        \sup_{0 \leq t < T} \sqrt{
            \min_w L_t(w)
        },
        \label{eq:grad}
    \end{align}
    and $\epsilon^{\pi_t}_\text{critic}$ is given in \autoref{thm:asymmetric-critic} and \autoref{thm:symmetric-critic}.
\end{restatable}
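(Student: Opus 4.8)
The plan is to follow the standard two-loop decomposition for natural actor-critic, exactly mirroring the structure of \citet{cayci2024finite} but routing the critic error through either \autoref{thm:asymmetric-critic} or \autoref{thm:symmetric-critic}. The outer loop is a policy-mirror-descent / NPG analysis: since $\pi_\theta$ is log-linear, $\nabla_\theta \log \pi_\theta(a|z) = \psi(z,a) - \expect_{a' \sim \pi_\theta}[\psi(z,a')]$, so a gradient step $\theta_{t+1} = \theta_t + \eta \bar w_t$ is a soft policy improvement. Using the performance difference lemma for agent-state policies together with a KL-divergence telescoping argument (the standard three-point inequality, cf.\ Agarwal et al.), one gets
\begin{align}
    (1-\gamma)\min_{0\le t<T}\expect[J(\pi^*)-J(\pi_t)]
    \le
    \frac{B^2 + 2\log|\A|}{2\eta T}
    + \frac{\eta}{T}\sum_{t=0}^{T-1}\expect[\lVert \bar w_t\rVert_2^2]
    + \frac{1}{T}\sum_{t=0}^{T-1}\expect\!\left[\langle \text{err}_t\rangle\right],
\end{align}
where $\text{err}_t$ collects the discrepancy between the true advantage appearing in the performance difference lemma and the quantity $\langle \nabla_\theta\log\pi_t, \bar w_t\rangle$ actually used. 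Choosing $\eta = 1/\sqrt{T}$ and bounding $\lVert\bar w_t\rVert_2 \le B$ by projection yields $\epsilon_\text{nac}$.

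Next I would control $\text{err}_t$ by inserting two intermediate quantities: the exact NPG direction $w_*^{\pi_t}$ and the population minimizer of the \emph{approximate}-advantage regression. By \autoref{thm:anpg} (resp.\ \autoref{thm:snpg}), $w_*^{\pi_t}$ is the minimizer of $\L_t$ (resp.\ $L_t$), so the irreducible regression residual contributes $\epsilon_\text{grad}$; the gap between $w_*^{\pi_t}$ and the SGD output $\bar w_t$ is an inner-loop SGD-on-a-strongly-related-objective bound giving $\epsilon_\text{actor}$ (with $\zeta = B\sqrt{1-\gamma}/\sqrt{2N}$ and $N$ steps, this is the usual $O((N)^{-1/4})$ rate after the square root, matching the stated form); and the gap coming from replacing $\A^{\pi_t}$ (resp.\ $A^{\pi_t}$) by $\widebar{\A}^{\pi_t}$ (resp.\ $\widebar{A}^{\pi_t}$) in the regression target is controlled by $\lVert \Q^{\pi_t} - \widebar{\Q}^{\pi_t}\rVert_d$ (resp.\ the symmetric version) — this is where $\epsilon_\text{critic}^{\pi_t}$ from \autoref{thm:asymmetric-critic}/\ref{thm:symmetric-critic} enters, after noting the advantage approximation error is bounded by twice the Q-approximation error in $\lVert\cdot\rVert_d$. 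All three of these inner error terms, plus the $\text{err}_t$ term itself, get multiplied by $\widebar C_\infty$ when one changes measure from $d^{\pi_t}\otimes\pi_t$ to $d^{\pi^*}\otimes\pi^*$ in the performance difference lemma — this is the only place the concentrability coefficient is used, and Jensen/Cauchy--Schwarz together with $\sup_t\expect[C_t]\le\widebar C_\infty$ handles the expectation.

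The remaining — and genuinely asymmetric — point is the $\epsilon_\text{inf}$ term. In the asymmetric case the critic directly approximates $\Q^{\pi_t}(s,z,a)$, whose fixed-point characterization \eqref{eq:asymmetric-bellman} is exact because $(S_t,Z_t)$ is Markov; the advantage used in the performance difference lemma for agent-state policies is precisely $\expect_{s\sim \hat b}[\A^{\pi_t}(s,z,a)]$, and this is exactly what the asymmetric regression recovers, so no inference gap appears and $\epsilon_\text{inf,asym}=0$. In the symmetric case the critic approximates $Q^{\pi_t}(z,a)$, but the correct target in the performance difference lemma is still the belief-averaged asymmetric advantage; the discrepancy is $|Q^{\pi_t} - \expect_{s\sim \hat b}[\Q^{\pi_t}]|$ plus the $Q^\pi\neq\widetilde Q^\pi$ gap, which unrolls through the $m$-step Bellman recursion into the geometric aliasing sum $\expect^{\pi^*}[\sum_k \gamma^k\lVert\hat b_k - b_k\rVert_\text{TV}]$, evaluated along the \emph{optimal} policy's trajectory (hence the $\pi^*$ superscript and the absence of $\widebar C_\infty$ on this term — it arises from the change of measure on the "true" side of the performance difference lemma rather than the estimated side). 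I expect the bookkeeping of \emph{which} distribution each aliasing/inference term is measured under — $d$ versus $\pi^*$ rollouts — to be the main obstacle, since it requires carefully separating the performance-difference-lemma change of measure (concentrability, estimated side) from the Bellman-unrolling change of measure (aliasing, true side); the rest is assembling $\epsilon_\text{nac}$, $\epsilon_\text{actor}$, $\epsilon_\text{grad}$ and $\epsilon_\text{critic}^{\pi_t}$ from the pieces above and invoking the triangle inequality.
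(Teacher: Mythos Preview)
Your overall architecture is right and matches the paper: KL-potential (Lyapunov) drift via $1$-smoothness of $\log\pi_\theta$, add-and-subtract the true advantage, apply a performance difference lemma for the advantage term, apply concentrability plus the SGD bound for the regression-error term, then telescope. The bookkeeping for $\epsilon_\text{nac}$, $\epsilon_\text{actor}$, $\epsilon_\text{grad}$, and $\epsilon_\text{critic}^{\pi_t}$ is as you describe.

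The gap is in your account of $\epsilon_\text{inf}$. You attribute it to ``$|Q^{\pi_t}-\expect_{s\sim\hat b}[\Q^{\pi_t}]|$ plus the $Q^\pi\neq\widetilde Q^\pi$ gap'' unrolling through the Bellman recursion. That mechanism is the source of $\epsilon_\text{alias}$ inside $\epsilon_\text{critic}^{\pi_t}$ (\autoref{thm:symmetric-critic}), not of $\epsilon_\text{inf}$. In the paper, $\epsilon_\text{inf}$ arises from a separate lemma --- a POMDP performance difference inequality (\autoref{lemma:performance}) --- stating that for agent-state policies
\[
(1-\gamma)\bigl(V^{\pi^*}(z_0)-V^{\pi_t}(z_0)\bigr)\le \expect^{d^{\pi^*}}\!\bigl[A^{\pi_t}(Z,A)\mid Z_0=z_0\bigr]+2\,\epsilon_\text{inf}^{\pi^*}(z_0),
\]
with the extra term coming from the non-Markovianity of $Z_t$ when you telescope $V^{\pi_t}(Z_t)$ against $R_t+\gamma V^{\pi_t}(Z_{t+1})$: the conditional law of $R_t$ given $Z_t$ depends on the whole history through $b_t$, not only on $\hat b_t$. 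This is why $\epsilon_\text{inf,sym}$ is taken along $\pi^*$ rollouts and sits outside the $\widebar C_\infty$ factor --- it is baked into the performance difference inequality itself, not into the change of measure on the estimated side.

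Correspondingly, your reasoning for $\epsilon_\text{inf,asym}=0$ is not the one that works. It is not that ``the asymmetric regression recovers $\expect_{s\sim\hat b}[\A^{\pi_t}]$'' --- the asymmetric regression target is $\A^{\pi_t}(s,z,a)$, not its $\hat b$-average. The paper instead adds and subtracts $\A^{\pi_t}(s,z,a)$ in the drift (expanding the expectation over $d^{\pi^*}(s,z,a)$ rather than $d^{\pi^*}(z,a)$) and then applies the \emph{standard MDP} performance difference lemma on the joint state $(s,z)$, which is an equality because $(S_t,Z_t)$ is Markov. That is the step that kills $\epsilon_\text{inf}$ in the asymmetric case. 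If you try to route both cases through the same agent-state performance difference lemma, you will pick up $\epsilon_\text{inf}$ in the asymmetric case too.
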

The first term $\epsilon_\text{nac}$ is the usual natural actor-critic term decreasing in $T^{-1/2}$ \citep{agarwal2021theory}.
The second term $\epsilon_\text{inf}$ is the inference error resulting from use of an agent state in a POMDP \citep{cayci2024finite}.
This term is zero for the asymmetric algorithm.
The third term $\epsilon_\text{actor}$ is the error resulting from the estimation of the natural policy gradient by stochastic gradient descent.
The fourth term $\epsilon_\text{grad}$ is the error resulting from the use of a linear function approximator with features $\nabla_\theta \log \pi_t(a|z)$ for the natural policy gradient.
Finally, the fifth term $\frac{1}{T} \sum_{t=0}^{T-1} \epsilon_\text{critic}^{\pi_t}$ is the error arising from the successive critic approximations.
Inside of each $\epsilon_\text{critic}^{\pi_t}$ terms, the aliasing term is thus zero for the asymmetric algorithm.
The proof, generalizing that of \citet{cayci2024finite} to the asymmetric setting, is available in \autoref{app:proof-actor}.

\subsection{Discussion}

As can be seen from \autoref{thm:asymmetric-critic} and \autoref{thm:symmetric-critic}, compared to the symmetric temporal difference learning algorithm, the asymmetric one eliminates a term arising from aliasing in the agent state, in the sense of equation~\eqref{eq:alias}.
In other words, even for an aliased agent-state process, leveraging the state to learn the asymmetric Q-function instead of the symmetric Q-function does not suffer from aliasing, while still providing a valid critic for the policy gradient algorithm.
That said, these bounds are given in expectation, and future works may want to study the variance of the error of such Q-function approximations.

From \autoref{thm:nac}, we notice that the inference term $\eqref{eq:inference}$ in the suboptimality bound vanishes in the asymmetric setting.
Moreover, the average error $\frac{1}{T} \smash{\sum_{t=0}^{T-1}} \epsilon_\text{critic}^{\pi_t}$ made in the evaluation of all policies $\pi_0, \dots, \pi_{t-1}$ appears in the finite-time bound that we obtain for the suboptimality of the policy.
Thus, the suboptimality bound for the actor also improves in the asymmetric setting by eliminating the aliasing terms with respect to the symmetric setting.

By diving into the proof of \autoref{thm:nac} at equations \eqref{eq:asymmetric-impact} and \eqref{eq:symmetric-impact}, we understand that the Q-function error impacts the suboptimality bound through the estimation of the natural policy gradient \eqref{eq:npg}.
Indeed, this error term in the suboptimality bound directly results from the error on the advantage function estimation used in the target of the natural policy gradient estimation loss of equations \eqref{eq:asymgradgrad} and \eqref{eq:symgradgrad}.
This advantage function estimation is derived from the estimation of the Q-function, such that the error on the latter directly impacts the error on the former, as detailed in equations \eqref{eq:asymmetric-impact} and \eqref{eq:symmetric-impact}.
This improvement in the average critic error unfortunately comes at the expense of a different residual error $\smash{\epsilon_\text{grad}}$ on the natural policy gradient loss.
Indeed, as can be seen in equation~\eqref{eq:grad}, we obtain a residual error $\smash{\epsilon_\text{grad,asym}}$ using the best approximation of the asymmetric advantage $\A^{\pi_t}(s, z, a)$, instead of a residual error $\smash{\epsilon_\text{grad,sym}}$ using the best approximation of the symmetric critic $A^{\pi_t}(z, a)$.
Since both natural policy gradients are obtained through a linear regression with features $\nabla_\theta \log \pi_t(a|z)$, it is clear than the asymmetric residual error may be higher than the symmetric residual error, even in the tabular case.

We conclude that the effectiveness of asymmetric actor-critic algorithms notably results from a better approximation of the Q-function by eliminating the aliasing bias, which in turn provides a better estimate of the policy gradient.

\section{Conclusion}
\label{sec:conclusion}

In this work, we extended the unbiased asymmetric actor-critic algorithm to agent-state policies.
Then, we adapted a finite-time analysis for natural actor-critic to the asymmetric setting.
This analysis highlighted that on the contrary to symmetric learning, asymmetric learning is less sensitive to aliasing in the agent state.
While this analysis assumed a fixed agent-state process, we argue that it is useful to interpret the causes of effectiveness of asymmetric learning with learnable agent-state processes.
Indeed, aliasing can be present in the agent-state process throughout learning, and in particular at initialization.
Moreover, it should be noted that this analysis can be straightforwardly generalized to learnable agent-state processes by extending the action space to select future agent states.
More formally, we would extend the action space to $\A^+ = \A \times \Delta(\Z)$ with $a_t^+ = (a_t, a_t^z)$, the agent state space to $\Z^+ = \Z \times \O$ with $z^+_t = (z_t, z_t^o)$, and the agent-state process to $U(z^+_{t+1}|z^+_t,a_t,o_{t+1}) \propto \operatorname{exp}(a_t^{z_{t+1}}) \delta_{z_{t+1}^o, o_{t+1}}$.
This alternative to backpropagation through time would nevertheless still not reflect the common setting of recurrent actor-critic algorithms.
We consider this as a future work that could build on recent advances in finite-time bound for recurrent actor-critic algorithms \citep{cayci2024convergence, cayci2024recurrent}.
Alternatively, generalizing this analysis to nonlinear approximators may include recurrent neural networks, which can be seen as nonlinear approximators with a sliding window as agent state.
Our analysis also motivates future work studying other asymmetric learning approaches that consider representation losses to reduce the aliasing bias \citep{sinha2023asymmetric, lambrechts2022recurrent, lambrechts2024informed}.

\section*{Acknowledgements}

Gaspard Lambrechts acknowledges the financial support of the \emph{Wallonia-Brussels Federation} for his FRIA grant.

\section*{Impact Statement}

This paper presents work whose goal is to advance the field of machine learning. There are many potential societal consequences of our work, none which we feel must be specifically highlighted here.

\bibliographystyle{icml2025}
\bibliography{references.bib}

\begin{thebibliography}{44}
\providecommand{\natexlab}[1]{#1}
\providecommand{\url}[1]{\texttt{#1}}
\expandafter\ifx\csname urlstyle\endcsname\relax
  \providecommand{\doi}[1]{doi: #1}\else
  \providecommand{\doi}{doi: \begingroup \urlstyle{rm}\Url}\fi

\bibitem[Agarwal et~al.(2021)Agarwal, Kakade, Lee, and
  Mahajan]{agarwal2021theory}
Agarwal, A., Kakade, S.~M., Lee, J.~D., and Mahajan, G.
\newblock {On the Theory of Policy Gradient Methods: Optimality, Approximation,
  and Distribution Shift}.
\newblock \emph{Journal of Machine Learning Research}, 2021.

\bibitem[Akkaya et~al.(2019)Akkaya, Andrychowicz, Chociej, teusz Litwin,
  McGrew, Petron, Paino, Plappert, Powell, Ribas, Schneider, Tezak, Tworek,
  Welinder, Weng, Yuan, Zaremba, and Zhang]{akkaya2019solving}
Akkaya, I., Andrychowicz, M., Chociej, M., teusz Litwin, M., McGrew, B.,
  Petron, A., Paino, A., Plappert, M., Powell, G., Ribas, R., Schneider, J.,
  Tezak, N., Tworek, J., Welinder, P., Weng, L., Yuan, Q., Zaremba, W., and
  Zhang, L.
\newblock {Solving Rubik's Cube with a Robot Hand}.
\newblock \emph{arXiv:{\nolinebreak[0]}1910.07113}, 2019.

\bibitem[Avalos et~al.(2024)Avalos, Delgrange, Nowe, Perez, and
  Roijers]{avalos2024wasserstein}
Avalos, R., Delgrange, F., Nowe, A., Perez, G., and Roijers, D.~M.
\newblock {The Wasserstein Believer: Learning Belief Updates for Partially
  Observable Environments through Reliable Latent Space Models}.
\newblock \emph{The Twelfth International Conference on Learning
  Representations}, 2024.

\bibitem[Baisero \& Amato(2022)Baisero and Amato]{baisero2022unbiased}
Baisero, A. and Amato, C.
\newblock {Unbiased Asymmetric Reinforcement Learning under Partial
  Observability}.
\newblock \emph{International Conference on Autonomous Agents and Multiagent
  Systems}, 2022.

\bibitem[Bakker(2001)]{bakker2001reinforcement}
Bakker, B.
\newblock {Reinforcement Learning with Long Short-Term Memory}.
\newblock \emph{Advances in Neural Information Processing Systems}, 2001.

\bibitem[Buesing et~al.(2018)Buesing, Weber, Racani{\`e}re, Eslami, Rezende,
  Reichert, Viola, Besse, Gregor, Hassabis, and Wierstra]{buesing2018learning}
Buesing, L., Weber, T., Racani{\`e}re, S., Eslami, S. M.~A., Rezende, D.~J.,
  Reichert, D.~P., Viola, F., Besse, F., Gregor, K., Hassabis, D., and
  Wierstra, D.
\newblock {Learning and Querying Fast Generative Models for Reinforcement
  Learning}.
\newblock \emph{arXiv:{\nolinebreak[0]}1802.03006}, 2018.

\bibitem[Cai et~al.(2024)Cai, Liu, Oikonomou, and Zhang]{cai2024provable}
Cai, Y., Liu, X., Oikonomou, A., and Zhang, K.
\newblock {Provable Partially Observable Reinforcement Learning with Privileged
  Information}.
\newblock \emph{Advances in Neural Information Processing Systems}, 2024.

\bibitem[Cayci \& Eryilmaz(2024{\natexlab{a}})Cayci and
  Eryilmaz]{cayci2024convergence}
Cayci, S. and Eryilmaz, A.
\newblock {Convergence of Gradient Descent for Recurrent Neural Networks: A
  Nonasymptotic Analysis}.
\newblock \emph{arXiv:{\nolinebreak[0]}2402.12241}, 2024{\natexlab{a}}.

\bibitem[Cayci \& Eryilmaz(2024{\natexlab{b}})Cayci and
  Eryilmaz]{cayci2024recurrent}
Cayci, S. and Eryilmaz, A.
\newblock {Recurrent Natural Policy Gradient for POMDPs}.
\newblock \emph{ICML Workshop on the Foundations of Reinforcement Learning and
  Control}, 2024{\natexlab{b}}.

\bibitem[Cayci et~al.(2024)Cayci, He, and Srikant]{cayci2024finite}
Cayci, S., He, N., and Srikant, R.
\newblock {Finite-Time Analysis of Natural Actor-Critic for POMDPs}.
\newblock \emph{SIAM Journal on Mathematics of Data Science}, 2024.

\bibitem[Choudhury et~al.(2018)Choudhury, Bhardwaj, Arora, Kapoor, Ranade,
  Scherer, and Dey]{choudhury2018data}
Choudhury, S., Bhardwaj, M., Arora, S., Kapoor, A., Ranade, G., Scherer, S.,
  and Dey, D.
\newblock {Data-Driven Planning via Imitation Learning}.
\newblock \emph{The International Journal of Robotics Research}, 2018.

\bibitem[Degrave et~al.(2022)Degrave, Felici, Buchli, Neunert, Tracey,
  Carpanese, Ewalds, Hafner, Abdolmaleki, de~Las~Casas, Donner, Fritz,
  Galperti, Huber, Keeling, Tsimpoukelli, Kay, Merle, Moret, Noury, Pesamosca,
  Pfau, Sauter, Sommariva, Coda, Duval, Fasoli, Kohli, Kavukcuoglu, Hassabis,
  and Riedmiller]{degrave2022magnetic}
Degrave, J., Felici, F., Buchli, J., Neunert, M., Tracey, B.~D., Carpanese, F.,
  Ewalds, T., Hafner, R., Abdolmaleki, A., de~Las~Casas, D., Donner, C., Fritz,
  L., Galperti, C., Huber, A., Keeling, J., Tsimpoukelli, M., Kay, J., Merle,
  A., Moret, J.-M., Noury, S., Pesamosca, F., Pfau, D., Sauter, O., Sommariva,
  C., Coda, S., Duval, B., Fasoli, A., Kohli, P., Kavukcuoglu, K., Hassabis,
  D., and Riedmiller, M.~A.
\newblock {Magnetic Control of Tokamak Plasmas through Deep Reinforcement
  Learning}.
\newblock \emph{Nature}, 2022.

\bibitem[Dong et~al.(2022)Dong, Roy, and Zhou]{dong2022simple}
Dong, S., Roy, B.~V., and Zhou, Z.
\newblock {Simple Agent, Complex Environment: Efficient Reinforcement Learning
  with Agent States}.
\newblock \emph{Journal of Machine Learning Research}, 2022.

\bibitem[Gregor et~al.(2019)Gregor, Rezende, Besse, Wu, Merzic, and van~den
  Oord]{gregor2019shaping}
Gregor, K., Rezende, D.~J., Besse, F., Wu, Y., Merzic, H., and van~den Oord, A.
\newblock {Shaping Belief States with Generative Environment Models for RL}.
\newblock \emph{Advances in Neural Information Processing Systems}, 2019.

\bibitem[Guo et~al.(2018)Guo, Azar, Piot, Pires, and R{\'e
  Munos}mi]{guo2018neural}
Guo, Z.~D., Azar, M.~G., Piot, B., Pires, B.~A., and R{\'e Munos}mi.
\newblock {Neural Predictive Belief Representations}.
\newblock \emph{arXiv:{\nolinebreak[0]}1811.06407}, 2018.

\bibitem[Guo et~al.(2020)Guo, Pires, Piot, Grill, Altch{\'e}, Munos, and
  Azar]{guo2020bootstrap}
Guo, Z.~D., Pires, B.~A., Piot, B., Grill, J.-B., Altch{\'e}, F., Munos, R.,
  and Azar, M.~G.
\newblock {Bootstrap Latent-Predictive Representations for Multitask
  Reinforcement Learning}.
\newblock \emph{International Conference on Machine Learning}, 2020.

\bibitem[Han et~al.(2019)Han, Doya, and Tani]{han2019variational}
Han, D., Doya, K., and Tani, J.
\newblock {Variational Recurrent Models for Solving Partially Observable
  Control Tasks}.
\newblock \emph{Internal Conference on Learning Representations}, 2019.

\bibitem[Hausknecht \& Stone(2015)Hausknecht and Stone]{hausknecht2015deep}
Hausknecht, M. and Stone, P.
\newblock {Deep Recurrent Q-learning for Partially Observable MDPs}.
\newblock \emph{AAAI Fall Symposium Series}, 2015.

\bibitem[Heess et~al.(2015)Heess, Hunt, Lillicrap, and Silver]{heess2015memory}
Heess, N., Hunt, J.~J., Lillicrap, T.~P., and Silver, D.
\newblock {Memory-Based Control with Recurrent Neural Networks}.
\newblock \emph{arXiv:{\nolinebreak[0]}1512.04455}, 2015.

\bibitem[Igl et~al.(2018)Igl, Zintgraf, Le, Wood, and Whiteson]{igl2018deep}
Igl, M., Zintgraf, L., Le, T.~A., Wood, F., and Whiteson, S.
\newblock {Deep Variational Reinforcement Learning for POMDPs}.
\newblock \emph{International Conference on Machine Learning}, 2018.

\bibitem[Kaelbling et~al.(1998)Kaelbling, Littman, and
  Cassandra]{kaelbling1998planning}
Kaelbling, L.~P., Littman, M.~L., and Cassandra, A.~R.
\newblock {Planning and Acting in Partially Observable Stochastic Domains}.
\newblock \emph{Artificial Intelligence}, 1998.

\bibitem[Kakade \& Langford(2002)Kakade and Langford]{kakade2002approximately}
Kakade, S. and Langford, J.
\newblock Approximately optimal approximate reinforcement learning.
\newblock \emph{International Conference on Machine Learning}, 2002.

\bibitem[Kakade(2001)]{kakade2001natural}
Kakade, S.~M.
\newblock {A Natural Policy Gradient}.
\newblock \emph{Advances in Neural Information Processing Systems}, 2001.

\bibitem[Kaufmann et~al.(2023)Kaufmann, Bauersfeld, Loquercio, M{\"u}ller,
  Koltun, and Scaramuzza]{kaufmann2023champion}
Kaufmann, E., Bauersfeld, L., Loquercio, A., M{\"u}ller, M., Koltun, V., and
  Scaramuzza, D.
\newblock {Champion-Level Drone Racing using Deep Reinforcement Learning}.
\newblock \emph{Nature}, 2023.

\bibitem[Lambrechts et~al.(2022)Lambrechts, Bolland, and
  Ernst]{lambrechts2022recurrent}
Lambrechts, G., Bolland, A., and Ernst, D.
\newblock {Recurrent Networks, Hidden States and Beliefs in Partially
  Observable Environments}.
\newblock \emph{Transactions on Machine Learning Research}, 2022.

\bibitem[Lambrechts et~al.(2024)Lambrechts, Bolland, and
  Ernst]{lambrechts2024informed}
Lambrechts, G., Bolland, A., and Ernst, D.
\newblock {Informed POMDP: Leveraging Additional Information in Model-Based
  RL}.
\newblock \emph{Reinforcement Learning Journal}, 2024.

\bibitem[Lee et~al.(2020)Lee, Nagabandi, Abbeel, and Levine]{lee2020stochastic}
Lee, A.~X., Nagabandi, A., Abbeel, P., and Levine, S.
\newblock {Stochastic Latent Actor-Critic: Deep Reinforcement Learning with a
  Latent Variable Model}.
\newblock \emph{Advances in Neural Information Processing Systems}, 2020.

\bibitem[Levine et~al.(2015)Levine, Finn, Darrell, and Abbeel]{levine2015end}
Levine, S., Finn, C., Darrell, T., and Abbeel, P.
\newblock {End-to-End Training of Deep Visuomotor Policies}.
\newblock \emph{Journal of Machine Learning Research}, 2015.

\bibitem[Mnih et~al.(2015)Mnih, Kavukcuoglu, Silver, Rusu, Veness, Bellemare,
  Graves, Riedmiller, Fidjeland, Ostrovski, Petersen, Beattie, Sadik,
  Antonoglou, King, Kumaran, Wierstra, Legg, and Hassabis]{mnih2015human}
Mnih, V., Kavukcuoglu, K., Silver, D., Rusu, A.~A., Veness, J., Bellemare,
  M.~G., Graves, A., Riedmiller, M.~A., Fidjeland, A.~K., Ostrovski, G.,
  Petersen, S., Beattie, C., Sadik, A., Antonoglou, I., King, H., Kumaran, D.,
  Wierstra, D., Legg, S., and Hassabis, D.
\newblock {Human-Level Control through Deep Reinforcement Learning}.
\newblock \emph{Nature}, 2015.

\bibitem[Ni et~al.(2024)Ni, Eysenbach, SeyedSalehi, Ma, Gehring, Mahajan, and
  Bacon]{ni2024bridging}
Ni, T., Eysenbach, B., SeyedSalehi, E., Ma, M., Gehring, C., Mahajan, A., and
  Bacon, P.-L.
\newblock {Bridging State and History Representations: Understanding
  Self-Predictive RL}.
\newblock \emph{International Conference on Learning Representations}, 2024.

\bibitem[Pinto et~al.(2018)Pinto, Andrychowicz, Welinder, Zaremba, and
  Abbeel]{pinto2017asymmetric}
Pinto, L., Andrychowicz, M., Welinder, P., Zaremba, W., and Abbeel, P.
\newblock {Asymmetric Actor Critic for Image-Based Robot Learning}.
\newblock \emph{Robotics: Science and Systems}, 2018.

\bibitem[Schrittwieser et~al.(2020)Schrittwieser, Antonoglou, Hubert, Simonyan,
  Sifre, Schmitt, Guez, Lockhart, Hassabis, Graepel, Lillicrap, and
  Silver]{schrittwieser2020mastering}
Schrittwieser, J., Antonoglou, I., Hubert, T., Simonyan, K., Sifre, L.,
  Schmitt, S., Guez, A., Lockhart, E., Hassabis, D., Graepel, T., Lillicrap,
  T.~P., and Silver, D.
\newblock {Mastering Atari, Go, Chess and Shogi by Planning with a Learned
  Model}.
\newblock \emph{Nature}, 2020.

\bibitem[Schulman et~al.(2015)Schulman, Levine, Abbeel, Jordan, and
  Moritz]{schulman2015trust}
Schulman, J., Levine, S., Abbeel, P., Jordan, M., and Moritz, P.
\newblock {Trust Region Policy Optimization}.
\newblock \emph{International Conference on Machine Learning}, 2015.

\bibitem[Shalev-Shwartz \& Ben-David(2014)Shalev-Shwartz and
  Ben-David]{shalev2014understanding}
Shalev-Shwartz, S. and Ben-David, S.
\newblock \emph{{Understanding Machine Learning: From Theory to Algorithms}}.
\newblock Cambridge University Press, 2014.

\bibitem[Sinha \& Mahajan(2023)Sinha and Mahajan]{sinha2023asymmetric}
Sinha, A. and Mahajan, A.
\newblock {Asymmetric Actor-Critic with Approximate Information State}.
\newblock \emph{IEEE Conference on Decision and Control}, 2023.

\bibitem[Sinha \& Mahajan(2024)Sinha and Mahajan]{sinha2024agent}
Sinha, A. and Mahajan, A.
\newblock {Agent-State Based Policies in POMDPs: Beyond Belief-State MDPs}.
\newblock \emph{arXiv:{\nolinebreak[0]}2409.15703}, 2024.

\bibitem[Subramanian et~al.(2022)Subramanian, Sinha, Seraj, and
  Mahajan]{subramanian2022approximate}
Subramanian, J., Sinha, A., Seraj, R., and Mahajan, A.
\newblock {Approximate Information State for Approximate Planning and
  Reinforcement Learning in Partially Observed Systems}.
\newblock \emph{Journal of Machine Learning Research}, 2022.

\bibitem[Sutton et~al.(1999)Sutton, McAllester, Singh, and
  Mansour]{sutton1999policy}
Sutton, R.~S., McAllester, D., Singh, S., and Mansour, Y.
\newblock {Policy Gradient Methods for Reinforcement Learning with Function
  Approximation}.
\newblock \emph{Advances in Neural Information Processing Systems}, 1999.

\bibitem[Vasco et~al.(2024)Vasco, Seno, Kawamoto, Subramanian, Wurman, and
  Stone]{vasco2024super}
Vasco, M., Seno, T., Kawamoto, K., Subramanian, K., Wurman, P.~R., and Stone,
  P.
\newblock {A Super-Human Vision-Based Reinforcement Learning Agent for
  Autonomous Racing in Gran Turismo}.
\newblock \emph{Reinforcement Learning Journal}, 2024.

\bibitem[Wang et~al.(2023)Wang, Li, Klassen, Icarte, and
  McIlraith]{wang2023learning}
Wang, A., Li, A.~C., Klassen, T.~Q., Icarte, R.~T., and McIlraith, S.~A.
\newblock {Learning Belief Representations for Partially Observable Deep RL}.
\newblock \emph{International Conference on Machine Learning}, 2023.

\bibitem[Warrington et~al.(2021)Warrington, Lavington, Scibior, Schmidt, and
  Wood]{warrington2021robust}
Warrington, A., Lavington, J.~W., Scibior, A., Schmidt, M., and Wood, F.
\newblock {Robust Asymmetric Learning in POMDPs}.
\newblock \emph{International Conference on Machine Learning}, 2021.

\bibitem[Wierstra et~al.(2007)Wierstra, F{\"o}rster, Peters, and
  Schmidhuber]{wierstra2007solving}
Wierstra, D., F{\"o}rster, A., Peters, J., and Schmidhuber, J.
\newblock {Solving Deep Memory POMDPs with Recurrent Policy Gradients}.
\newblock \emph{International Conference on Artificial Neural Networks}, 2007.

\bibitem[Zhang et~al.(2016)Zhang, McCarthy, Finn, Levine, and
  Abbeel]{zhang2016learning}
Zhang, M., McCarthy, Z., Finn, C., Levine, S., and Abbeel, P.
\newblock {Learning Deep Neural Network Policies with Continuous Memory
  States}.
\newblock \emph{IEEE International Conference on Robotics and Automation},
  2016.

\bibitem[Zhu et~al.(2017)Zhu, Li, Poupart, and Miao]{zhu2017improving}
Zhu, P., Li, X., Poupart, P., and Miao, G.
\newblock {On Improving Deep Reinforcement Learning for POMDPs}.
\newblock \emph{arXiv:{\nolinebreak[0]}1704.07978}, 2017.

\end{thebibliography}

\newpage
\appendix
\onecolumn

\section{Agent State Aliasing} \label{app:aliasing}

\begin{wrapfigure}{r}{0.4\linewidth}
    \centering
    \vspace{-1.4em}
    \begin{tikzpicture}[
        state/.style={circle, draw, minimum size = 1.2cm, text width = 1cm, align = center},
        action/.style={->, >=Stealth},
        node distance=2.8cm,
    ]
        \tikzset{every loop/.style={in=0,out=60,looseness=4}}
        \node[state] (g) {\small Treasure \\ (Dark)};
        \node[state, right of=g] (p) {\small Tiger \\ (Dark)};
        \node[state, below of=g] (l) {\small Left \\ (Left)};
        \node[state, right of=l] (r) {\small Right \\ (Right)};
        \draw[action, <->] (l) -- node[above, text width = 1cm, align = center] {\small Swap \\ (+0)} (r);
        \draw[action, ->] (l) -- node[right, text width = 1cm, align = center] {\small Enter \\ (+0)} (g);
        \draw[action, ->] (r) -- node[left, text width = 1cm, align = center] {\small Enter \\ (+0)} (p);
        \draw[action] (g) edge[loop, text width = 1cm, align = center] node[above, xshift=0.3cm] {\small Swap/Enter \\ (+1)} ();
        \draw[action] (p) edge[loop, text width = 1cm, align = center] node[above, xshift=0.3cm] {\small Swap/Enter \\ (+0)} ();
    \end{tikzpicture}
    \caption{Aliased Tiger POMDP.}
    \label{fig:tiger}
    \vspace{-1.4em}
\end{wrapfigure}

In this section, we provide an example of aliased agent state, and discuss the corresponding aliasing bias.
For this purpose, we introduce a slightly modified version of the Tiger POMDP \citep{kaelbling1998planning}, see \autoref{fig:tiger}.
In this POMDP, there are two doors: one opening on a room with a treasure on the left, and another opening on a room with a tiger on the right.
There are four states for this POMDP: being in the treasure room (Treasure), being in the tiger room (Tiger), being in front of the treasure door (Left) or being in front of the tiger door (Right).
The rooms are labeled outside (Left or Right), but inside it is completely dark (Dark), such that we do not observe in which room we are.
When outside of the rooms, the agent can switch to the other door (Swap) or it can open the door and enter the room (Enter).
Once in a room (Treasure or Tiger), the agent stays locked forever, and gets a positive reward (+1) if it is in the treasure room (Treasure) whatever the action taken (Swap or Enter).
We consider the agent state to be simply the last observation (Left, Right, or Dark).
Notice that the optimal agent-state policy conditioned on this agent state is also an optimal history-dependent policy.
In other words, the current observation is a sufficient statistic for optimal control in this POMDP.
We consider a uniform initial distributions over the four states.

For a given agent state (Dark), there exist two different underlying states (Treasure or Tiger).
We call this phenomenon aliasing.
Now, let us consider a simple policy $\pi$ that always takes the same action (Enter).
It is clear that the symmetric value function defined according to equation~\eqref{eq:q-function} is given by $\smash{V^\pi(z=\text{Dark}) = \frac{1}{2 (1 - \gamma)}}$, $\smash{V^\pi(z=\text{Left}) = \frac{\gamma}{1 - \gamma}}$, and $\smash{V^\pi(z=\text{Right}) = 0}$.
However, when considering the unique fixed point of the aliased Bellman operator of equation~\eqref{eq:symmetric-bellman} with $m=1$, we have instead $\smash{\widetilde{V}^\pi(z=\text{Dark}) = \frac{1}{2 (1 - \gamma)}}$, $\smash{\widetilde{V}^\pi(z=\text{Left}) = \frac{\gamma}{2 (1 - \gamma)}}$, and $\smash{\widetilde{V}^\pi(z=\text{Right}) = \frac{\gamma}{2 (1 - \gamma)}}$.
We refer to the distance between $\smash{V^\pi}$ and $\widetilde{V}^\pi$, or similarly $\smash{Q^\pi}$ and $\smash{\widetilde{Q}^\pi}$, as the aliasing bias.
In the analysis of this paper, this distance appears as the weighted $\ell_2$-norm $\smash{\lVert Q^\pi - \widetilde{Q}^\pi \rVert_d}$ where $d(s, z, a) = d^\pi(s, z) \pi(a|z)$.
In the analysis, we also define the aliasing term $\epsilon_\text{alias}$ as an upper bound on this aliasing bias, see \autoref{lemma:aliasing} for a detailed definition.

\section{Proof of the Natural Policy Gradients} \label{app:proof-npg}

In this section, we prove that the natural policy gradient is the minimizer of analogous asymmetric and symmetric losses.

\subsection{Proof of the Asymmetric Natural Policy Gradient} \label{app:proof-anpg}

In this section, we prove that the natural policy gradient is the minimizer of an asymmetric loss.
\anpg*
\begin{proof}
    Let us note that,
    \begin{align}
        \nabla_w \mathcal{L}(w) =
        2
        \expect^{d^{\pi_\theta}} \! \left[
            \nabla_\theta \log \pi_\theta(A|Z)
            \left(
                \langle \nabla_\theta \log \pi_\theta(A|Z), w \rangle
                -
                \A^{\pi_\theta}(S, Z, A)
            \right)
        \right]\!.
    \end{align}
    Therefore, for any $w^{\pi_\theta}_* \in \RR^{d_\psi}$ minimizing $\L(w)$, we have $\nabla_w \L(w) = 0$, such that,
    \begin{align}
        \expect^{d^{\pi_\theta}} \! \left[
            \nabla_\theta \log \pi_\theta(A|Z)
            \A^{\pi_\theta}(S, Z, A)
        \right]
        &=
        \expect^{d^{\pi_\theta}} \! \left[
            \nabla_\theta \log \pi_\theta(A|Z) \langle \nabla_\theta \log \pi_\theta(A|Z), w^{\pi_\theta}_* \rangle
        \right]
        \\
        &=
        \expect^{d^{\pi_\theta}} \! \left[
            ( \nabla_\theta \log \pi_\theta(A|Z) \otimes \nabla_\theta \log \pi_\theta(A|Z) ) w^{\pi_\theta}_*
        \right]
        \\
        &=
        \expect^{d^{\pi_\theta}} \! \left[
            \nabla_\theta \log \pi_\theta(A|Z) \otimes \nabla_\theta \log \pi_\theta(A|Z)
        \right]
        w^{\pi_\theta}_*
        \\
        &=
        F_{\pi_\theta} w^{\pi_\theta}_*.
    \end{align}
    which follows from the definition of the Fisher information matrix $F_{\pi_\theta}$ in equation \eqref{eq:fisher}.
    Now, let us define the policy $\pi_\theta^+(A|S, Z) = \pi_\theta(A|Z)$, which ignores the state $S$.
    From there, we have,
    \begin{align}
        F_{\pi_\theta} w^{\pi_\theta}_*
        &=
        \expect^{d^{\pi_\theta}} \left[
            \nabla_\theta \log \pi_\theta(A|Z) \A(S, Z, A)
        \right]
        \\
        &=
        \expect^{d^{\pi_\theta^+}} \left[
            \nabla_\theta \log \pi^+_\theta(A|S, Z) \A(S, Z, A)
        \right]
        \\
        &=
        \expect^{d^{\pi_\theta^+}} \left[
            \nabla_\theta \log \pi^+_\theta(A|S, Z) \left(\A(S, Z, A) + \V(S, Z) - \V(S, Z)\right)
        \right]
        \\
        &=
        \expect^{d^{\pi_\theta^+}} \left[
            \nabla_\theta \log \pi^+_\theta(A|S, Z) \Q(S, Z, A)
        \right]
        -
        \expect^{d^{\pi_\theta^+}} \left[
            \nabla_\theta \log \pi^+_\theta(A|S, Z) \V(S, Z)
        \right]
        \\
        &=
        \expect^{d^{\pi_\theta^+}} \left[
            \nabla_\theta \log \pi^+_\theta(A|S, Z) \Q(S, Z, A)
        \right]
        -
        \expect^{d^{\pi_\theta^+}} \left[
            \V(S, Z)
            \sum_{a \in \A}
                \pi_\theta^+(a|S, Z)
                \nabla_\theta \log \pi^+_\theta(a|S, Z)
        \right]
        \\
        &=
        \expect^{d^{\pi_\theta^+}} \left[
            \nabla_\theta \log \pi^+_\theta(A|S, Z) \Q(S, Z, A)
        \right]
        -
        \expect^{d^{\pi_\theta^+}} \left[
            \V(S, Z)
            \sum_{a \in \A}
                \nabla_\theta \pi^+_\theta(a|S, Z)
        \right]
        \\
        &=
        \expect^{d^{\pi_\theta^+}} \left[
            \nabla_\theta \log \pi^+_\theta(A|S, Z) \Q(S, Z, A)
        \right]
        -
        \expect^{d^{\pi_\theta^+}} \left[
            \V(S, Z)
            \nabla_\theta
            \sum_{a \in \A} \pi^+_\theta(a|S, Z)
        \right]
        \\
        &=
        \expect^{d^{\pi_\theta^+}} \left[
            \nabla_\theta \log \pi^+_\theta(A|S, Z) \Q(S, Z, A)
        \right]
        -
        \expect^{d^{\pi_\theta^+}} \left[
            \V(S, Z)
            \nabla_\theta
            1
        \right]
        \\
        &=
        \expect^{d^{\pi_\theta^+}} \left[
            \nabla_\theta \log \pi^+_\theta(A|S, Z) \Q(S, Z, A)
        \right]\!.
        \label{eq:asymmetric-natural-equation}
    \end{align}
    Using the policy gradient theorem \citep{sutton1999policy} and equation \eqref{eq:asymmetric-natural-equation},
    \begin{align}
        F_{\pi_\theta} w^{\pi_\theta}_*
        &=
        (1 - \gamma)
        \nabla_\theta
        J(\pi_\theta^+),
    \end{align}
    From there, we obtain using the definition of $\pi_\theta^+$,
    \begin{align}
        F_{\pi_\theta} w^{\pi_\theta}_*
        &=
        (1 - \gamma)
        \nabla_\theta
        J(\pi_\theta^+)
        \\
        &=
        (1 - \gamma)
        \nabla_\theta
        J(\pi_\theta).
    \end{align}
    This concludes the proof.
\end{proof}

\subsection{Proof of the Symmetric Natural Policy Gradient} \label{app:proof-snpg}

In this section, we prove that the natural policy gradient is the minimizer of an asymmetric loss.
\snpg*
\begin{proof}
    Similarly to the asymmetric setting, for any $w^{\pi_\theta}_*$ minimizing $L(w)$, we have $\nabla_w L(w) = 0$, such that,
    \begin{align}
        \expect^{d^{\pi_\theta}} \! \left[
            \nabla_\theta \log \pi_\theta(A|Z)
            A(Z, A)
        \right]
        &=
        \expect^{d^{\pi_\theta}} \! \left[
            \nabla_\theta \log \pi_\theta(A|Z) \langle \nabla_\theta \log \pi_\theta(A|Z) w^{\pi_\theta}_* \rangle
        \right]
        \\
        &=
        \expect^{d^{\pi_\theta}} \! \left[
            ( \nabla_\theta \log \pi_\theta(A|Z) \otimes \nabla_\theta \log \pi_\theta(A|Z) ) w^{\pi_\theta}_*
        \right]
        \\
        &=
        \expect^{d^{\pi_\theta}} \! \left[
            \nabla_\theta \log \pi_\theta(A|Z) \otimes \nabla_\theta \log \pi_\theta(A|Z)
        \right]
        w^{\pi_\theta}_*
        \\
        &=
        F_{\pi_\theta} w^{\pi_\theta}_*,
    \end{align}
    which follows from the definition of the Fisher information matrix $F_{\pi_\theta}$ in equation \eqref{eq:fisher}.
    From there, we have,
    \begin{align}
        F_{\pi_\theta} w^{\pi_\theta}_*
        &=
        \expect^{d^{\pi_\theta}} \! \left[
            \nabla_\theta \log \pi_\theta(A|Z)
            A(Z, A)
        \right]
        \\
        F_{\pi_\theta} w^{\pi_\theta}_*
        &=
        \expect^{d^{\pi_\theta}} \! \left[
            \nabla_\theta \log \pi_\theta(A|Z)
            \expect^{d^{\pi_\theta}} \! \left[
                \A(S, Z, A)
                \middle|
                Z, A
            \right]
        \right]
        \\
        F_{\pi_\theta} w^{\pi_\theta}_*
        &=
        \expect^{d^{\pi_\theta}} \! \left[
            \expect^{d^{\pi_\theta}} \! \left[
                \nabla_\theta \log \pi_\theta(A|Z)
                \A(S, Z, A)
                \middle|
                Z, A
            \right]
        \right]
        \\
        F_{\pi_\theta} w^{\pi_\theta}_*
        &=
        \expect^{d^{\pi_\theta}} \! \left[
            \nabla_\theta \log \pi_\theta(A|Z)
            \A(S, Z, A)
        \right]\!,
    \end{align}
    which follows from the law of total probability.
    From there, by following the same steps as in the asymmetric case (see \autoref{app:proof-anpg}), we obtain,
    \begin{align}
        F_{\pi_\theta} w^{\pi_\theta}_*
        &=
        (1 - \gamma)
        \nabla_\theta
        J(\pi_\theta).
    \end{align}
    This concludes the proof.
\end{proof}

\section{Proof of the Finite-Time Bound for the Asymmetric Critic} \label{app:proof-asymmetric-critic}

In this section, we prove \autoref{thm:asymmetric-critic}, that is recalled below.

\asymcritic*

\begin{proof}
    To simplify notation, we drop the dependence on $\pi$ and $\beta$ and use $\Q$ as a shorthand for $\Q^{\pi}$, $\widehat{Q}^*$ as a shorthand for $\widehat{Q}^\pi_*$, $\widebar{\Q}$ as a shorthand for $\widebar{\Q}^{\pi}$ and $\widehat{\Q}_k$ as a shorthand for $\widehat{\Q}_{\beta_k}^{\pi}$, where the subscripts and superscripts remain implicit but are assumed clear from context.
    When evaluating the Q-functions, we go one step further by using $\Q_{k,i}$ to denote $\Q(S_{k,i}, Z_{k,i}, A_{k,i})$, $\widehat{Q}^*_{k,i}$ to denote $\widehat{Q}^*(Z_{k,i}, A_{k,i})$ or $\widehat{\Q}_{k,i}$ to denote $\widehat{\Q}_k(S_{k,i}, Z_{k,i}, A_{k,i})$, and $\phi_{k,i}$ to denote $\phi(S_{k,i}, Z_{k,i}, A_{k,i})$.
    In addition, we define $d$ as a shorthand for $d^{\pi} \otimes \pi$, such that $d(s, z, a) = d^\pi(s, z) \pi(a | z)$, and $d_m$ as a shorthand for $d_{m}^{\pi} \otimes \pi$, such that $d_m(s, z, a) = d_{m}^\pi(s, z) \pi(a | z)$.

    First, let us define $\Delta_k$ as,
    \begin{align}
        \Delta_k
        &=
        \sqrt{
            \expect \left[
                \left\lVert
                    \Q - \widehat{\Q}_k
                \right\rVert^2_d
            \right]
        }
        =
        \sqrt{
            \expect \left[
                \left\lVert
                    \Q(\cdot)
                    -
                    \langle
                        \beta_k,
                        \phi(\cdot)
                    \rangle
                \right\rVert^2_d
            \right]
        }. \label{eq:delta}
    \end{align}
    Using the linearity of $\widebar{\Q}$ in $\beta_1, \dots, \beta_{K-1}$, the triangle inequality, the subadditivity of the square root, and Jensen's inequality, we have,
    \begin{align}
        \sqrt{
            \expect \left[
                \left\lVert
                    \Q
                    -
                    \widebar{\Q}
                \right\rVert^2_d
            \right]
        }
        &=
        \sqrt{
            \expect \left[
                \left\lVert
                    \Q(\cdot)
                    -
                    \Bigl\langle
                        \frac{1}{K} \sum_{k=0}^{K-1} \beta_k,
                        \phi(\cdot)
                    \Bigr\rangle
                \right\rVert^2_d
            \right]
        }
        \\
        &=
        \sqrt{
            \expect \left[
                \left\lVert
                    \frac{1}{K} \sum_{k=0}^{K-1}
                    \left(
                        \Q(\cdot)
                        -
                        \langle
                            \beta_k,
                            \phi(\cdot)
                        \rangle
                    \right)
                \right\rVert^2_d
            \right]
        }
        \\
        &=
        \sqrt{
            \expect \left[
                \left\lVert
                    \sum_{k=0}^{K-1} \frac{1}{K}
                    \left(
                        \Q(\cdot)
                        -
                        \langle
                            \beta_k,
                            \phi(\cdot)
                        \rangle
                    \right)
                \right\rVert^2_d
            \right]
        }
        \\
        &\leq
        \sqrt{
            \expect \left[
                \sum_{k=0}^{K-1} \frac{1}{K^2}
                \left\lVert
                    \Q(\cdot)
                    -
                    \langle
                        \beta_k,
                        \phi(\cdot)
                    \rangle
                \right\rVert^2_d
            \right]
        }
        \\
        &=
        \sqrt{
            \frac{1}{K^2} \sum_{k=0}^{K-1}
            \expect \left[
                \left\lVert
                    \Q(\cdot)
                    -
                    \langle
                        \beta_k,
                        \phi(\cdot)
                    \rangle
                \right\rVert^2_d
            \right]
        }
        \\
        &=
        \frac{1}{K}
        \sqrt{
            \sum_{k=0}^{K-1}
            \Delta_k^2
        }
        \\
        &\leq
        \frac{1}{K}
        \sum_{k=0}^{K-1}
        \sqrt{
            \Delta_k^2
        }
        \\
        &=
        \frac{1}{K} \sum_{k=0}^{K-1} \Delta_k
        \\
        &=
        \frac{1}{K} \sum_{k=0}^{K-1} (\Delta_k - l) + l
        \\
        &\leq
        \sqrt{
            \left(
                \frac{1}{K}
                \sum_{k=0}^{K-1} (\Delta_k - l)
            \right)^2
        } + l
        \\
        &\leq
        \sqrt{
            \frac{1}{K}
            \sum_{k=0}^{K-1} \left(
                \Delta_k - l
            \right)^2
        } + l, \label{eq:sum_delta}
    \end{align}
    where $l$ is arbitrary.

    Now, we consider the Lyapounov function $\L(\beta) = \left\lVert \beta_* - \beta \right\rVert_2^2$ in order to find a bound on $\frac{1}{K} \sum_{k=0}^{K-1} \left( \Delta_k - l \right)^2$.
    Since $\beta_* \in \B_2(0, B)$, with $\B_2(0, B)$ a convex subset of $\RR^{d_\phi}$, and the projection $\Gamma_\C$ is non-expansive for closed and convex $\C$, we have for all $k \geq 0$,
    \begin{align}
        \L(\beta_{k+1})
        & = \left\lVert
            \beta_*
            -
            \beta_{k+1}
        \right\rVert_2^2
        \\
        & \leq
        \left\lVert
            \beta_*
            -
            \beta^{-}_{k+1}
        \right\rVert_2^2
        \\
        &=
        \left\lVert
            \beta_*
            -
            (\beta_k + \alpha g_k)
        \right\rVert_2^2
        \\
        &=
        \left\lVert
            (\beta_* - \beta_k)
            -
            \alpha g_k
        \right\rVert_2^2
        \\
        &=
        \langle
            (\beta_* - \beta_k)
            -
            \alpha g_k,
            (\beta_* - \beta_k)
            -
            \alpha g_k
        \rangle
        \\
        &=
            \langle \beta_* - \beta_k, \beta_* - \beta_k \rangle
            -
            2 \alpha \langle \beta_* - \beta_k, g_k \rangle
            +
            \alpha^2 \langle g_k, g_k \rangle
        \\
        &=
        \L(\beta_k)
        -
        2 \alpha \langle \beta_* - \beta_k, g_k \rangle
        +
        \alpha^2 \left\lVert g_k \right\rVert_2^2
        \\
        &=
        \L(\beta_k)
        +
        2 \alpha \langle \beta_k - \beta_*, g_k \rangle
        +
        \alpha^2 \left\lVert g_k \right\rVert_2^2.
    \end{align}
    Let us consider the Lyapounov drift $\expect \left[ \L(\beta_{k+1}) - \L(\beta_k) \right]$, and exploit the fact that environments samples used to compute $g_k$ are independent and identically distributed.
    Formally, we define $\mathfrak{G}_k = \sigma(S_{i,j}, Z_{i,j}, A_{i,j}, i \leq k, j \leq m)$ and $\mathfrak{F}_k = \sigma(S_{k,0}, Z_{k,0}, A_{k,0})$, where $\sigma(X_i: i \in \I)$ denotes the $\sigma$-algebra generated by a collection $\left\{ X_i\colon i \in \I \right\}$ of random variables.
    We can write, using to the law of total expectation,
    \begin{align}
        \expect \left[
            \L(\beta_{k+1}) - \L(\beta_k)
        \right]
        &=
        \expect \left[
            \expect \left[
                \L(\beta_{k+1}) - \L(\beta_k)
                \middle|
                \mathfrak{G}_{k-1}
            \right]
        \right]
        \\
        &\leq
        2 \alpha
        \expect \left[
            \expect \left[
                \langle \beta_k - \beta_*, g_k \rangle
                \middle|
                \mathfrak{G}_{k-1}
            \right]
        \right]
        +
        \alpha^2
        \expect \left[
            \expect \left[
                \left\lVert
                    g_k
                \right\rVert_2^2
                \middle|
                \mathfrak{G}_{k-1}
            \right]
        \right]\!. \label{eq:drift}
    \end{align}

    Let us focus on the first term of equation \eqref{eq:drift} with $\expect \left[ \langle g_k, \beta_k - \beta_* \rangle \middle| \mathfrak{G}_{k-1} \right]$.
    First, since $\nabla_\beta \widehat{\Q}_{k,0} = \phi_{k,0}$, the semi-gradient $g_k$ is given by (see equation~\eqref{eq:asymsemigradient}),
    \begin{align}
        g_k
        &=
        \left( \sum_{t=0}^{m-1} \gamma^t R_{k,t} + \gamma^m \widehat{\Q}_{k,m} - \widehat{\Q}_{k,0} \right) \phi_{k,0}.
        \label{eq:gradient}
    \end{align}
    By conditioning on the sigma-fields $\mathfrak{G}_{k-1}$ and $\mathfrak{F}_k$, we have,
    \begin{align}
        \expect \left[ \langle \beta_k - \beta_*, g_k \rangle \middle| \mathfrak{F}_k, \mathfrak{G}_{k-1} \right]
        &=
        \left(
            \expect \left[
                \sum_{t=0}^{m-1} \gamma^t R_{k,t}
                +
                \gamma^m \widehat{\Q}_{k,m}
                \middle|
                \mathfrak{F}_k, \mathfrak{G}_{k-1}
            \right]
            -
            \widehat{\Q}_{k,0}
        \right)
        \langle
            \beta_k - \beta_*, \phi_{k,0}
        \rangle
        \\
        &=
        \left(
            \expect \left[
                \sum_{t=0}^{m-1} \gamma^t R_{k,t}
                +
                \gamma^m \widehat{\Q}_{k,m}
                \middle|
                \mathfrak{F}_k, \mathfrak{G}_{k-1}
            \right]
            -
            \widehat{\Q}_{k,0}
        \right)
        \left(
            \widehat{\Q}_{k,0} - \widehat{\Q}^*_{k,0}
        \right)\!.
        \label{eq:gradient_projection}
    \end{align}
    Note that according to the Bellman operator \eqref{eq:asymmetric-bellman} we have,
    \begin{align}
        \expect \left[
            \sum_{t=0}^{m-1} \gamma^t R_{k,t}
            \middle|
            \mathfrak{F}_k, \mathfrak{G}_{k-1}
        \right]
        =
        \Q_{k,0}
        -
        \gamma^m \expect \left[
            \Q_{k,m}
            \middle|
            \mathfrak{F}_k, \mathfrak{G}_{k-1}
        \right]\!.
        \label{eq:rewards}
    \end{align}
    By substituting equation \eqref{eq:rewards} in equation \eqref{eq:gradient_projection}, we obtain,
    \begin{align}
        &\expect \left[ \langle \beta_k - \beta_*, g_k \rangle \middle| \mathfrak{F}_k, \mathfrak{G}_{k-1} \right] \nonumber
        \\
        &=
        \left(
            \expect \left[
                \sum_{t=0}^{m-1} \gamma^t R_{k,t}
                \middle|
                \mathfrak{F}_k, \mathfrak{G}_{k-1}
            \right]
            +
            \gamma^m \expect \left[
                \widehat{\Q}_{k,m}
                \middle|
                \mathfrak{F}_k, \mathfrak{G}_{k-1}
            \right]
            -
            \widehat{\Q}_{k,0}
        \right)
        \left(
            \widehat{\Q}_{k,0} - \widehat{\Q}^*_{k,0}
        \right)
        \\
        &=
        \left(
            \Q_{k,0}
            -
            \gamma^m \expect \left[
                \Q_{k,m}
                \middle|
                \mathfrak{F}_k, \mathfrak{G}_{k-1}
            \right]
            +
            \gamma^m \expect \left[
                \widehat{\Q}_{k,m}
                \middle|
                \mathfrak{F}_k, \mathfrak{G}_{k-1}
            \right]
            -
            \widehat{\Q}_{k,0}
        \right)
        \left(
            \widehat{\Q}_{k,0} - \widehat{\Q}^*_{k,0}
        \right)
        \\
        &=
        \left(
            (\Q_{k,0} - \widehat{\Q}_{k,0})
            -
            \gamma^m \expect \left[
                \Q_{k,m}
                -
                \widehat{\Q}_{k,m}
                \middle|
                \mathfrak{F}_k, \mathfrak{G}_{k-1}
            \right]
        \right)
        \left(
            (\widehat{\Q}_{k,0} - \Q_{k,0}) + (\Q_{k,0} - \widehat{\Q}^*_{k,0})
        \right)
        \\
        &=
        - (\Q_{k,0} - \widehat{\Q}_{k,0})^2
        + (\Q_{k,0} - \widehat{\Q}_{k,0})(\Q_{k,0} - \widehat{\Q}^*_{k,0})
        \nonumber
        \\
        & \qquad
        + \gamma^m \expect \left[
            \widehat{\Q}_{k,m}
            -
            \Q_{k,m}
            \middle|
            \mathfrak{F}_k, \mathfrak{G}_{k-1}
        \right] (\widehat{\Q}_{k,0} - \Q_{k,0})
        + \gamma^m \expect \left[
            \widehat{\Q}_{k,m}
            -
            \Q_{k,m}
            \middle|
            \mathfrak{F}_k, \mathfrak{G}_{k-1}
        \right] (\Q_{k,0} - \widehat{\Q}^*_{k,0}).
        \label{eq:gradient_projection_decomposed}
    \end{align}
    Let us now take the expectation of \eqref{eq:gradient_projection_decomposed} over $\mathfrak{F}_k$ given $\mathfrak{G}_{k-1}$, for each term separately,
    \begin{itemize}
        \item For the first term, we have,
            \begin{align}
                \expect \left[
                    - (\Q_{k,0} - \widehat{\Q}_{k,0})^2
                    \middle| \mathfrak{G}_{k-1}
                \right]
                =
                - \left\lVert
                    \Q - \widehat{\Q}_k
                \right\rVert_d^2.
            \end{align}
        \item For the second term, we have, using the Cauchy-Schwarz inequality,
            \begin{align}
                \expect \left[
                    (\Q_{k,0} - \widehat{\Q}_{k,0}) (\Q_{k,0} - \widehat{\Q}^*_{k,0})
                    \middle| \mathfrak{G}_{k-1}
                \right]
                &= \left\lVert
                    (\Q - \widehat{\Q}_k) (\Q - \widehat{\Q}^*)
                \right\rVert_d
                \\
                &\leq
                \left\lVert
                    \Q - \widehat{\Q}_k
                \right\rVert_d
                \left\lVert
                    \Q - \widehat{\Q}^*
                \right\rVert_d.
            \end{align}
    \end{itemize}
    Before proceeding to the third and fourth terms, let us notice that,
    \begin{align}
        \expect \left[
            \widehat{\Q}_{k,m}
            -
            \Q_{k,m}
            \middle|
            \mathfrak{G}_{k-1}
        \right]
        &=
        \sum_{s, z, a} d_m(s, z, a) \left(
            \widehat{\Q}_k(s, z, a)
            -
            \Q(s, z, a)
        \right)
        \\
        &=
        \sum_{s, z, a} (d(s, z, a) + d_m(s, z, a) - d(s, z, a))  \left(
            \widehat{\Q}_k(s, z, a)
            -
            \Q(s, z, a)
        \right).
    \end{align}
    Remembering that $\sup_{s, z, a} \widehat{\Q}_k(s, z, a) \leq B$ and $\sup_{s, z, a} \Q(s, z, a) \leq \frac{1}{1 - \gamma}$, we have,
    \begin{align}
        \expect \left[
            \left(
                \widehat{\Q}_{k,m}
                -
                \Q_{k,m}
            \right)^2
            \middle|
            \mathfrak{G}_{k-1}
        \right]
        &=
        \sum_{s, z, a} (d(s, z, a) + d_m(s, z, a) - d(s, z, a))  \left(
            \widehat{\Q}_k(s, z, a)
            -
            \Q(s, z, a)
        \right)^2
        \\
        &=
        \left\lVert
            \widehat{\Q}_k
            -
            \Q
        \right\rVert_d^2
        +
        \sum_{s, z, a} (d_m(s, z, a) - d(s, z, a))  \left(
            \widehat{\Q}_k(s, z, a)
            -
            \Q(s, z, a)
        \right)^2
        \\
        &\leq
        \left\lVert
            \widehat{\Q}_k
            -
            \Q
        \right\rVert_d^2
        +
        \left\lVert d_m - d \right\rVert_\text{TV} \smashoperator{\sup_{s, z, a}} \left(
            \widehat{\Q}_k(s, z, a)
            -
            \Q(s, z, a)
        \right)^2
        \\
        &\leq
        \left\lVert
            \widehat{\Q}_k
            -
            \Q
        \right\rVert_d^2
        +
        \left\lVert
            d_m - d
        \right\rVert_\text{TV}
        \left(
            B + \frac{1}{1 - \gamma}
        \right)^2\!\!\!,
    \end{align}
    where $\left(B + \frac{1}{1 - \gamma}\right)$ is an upper bound on $\sup_{s, z, a} \left\lvert \widehat{\Q}_k(s, z, a) - \Q(s, z, a) \right\rvert$.
    Now, using Jensen's inequality and the subadditivity of the square root, we have,
    \begin{align}
        \expect \left[
            \widehat{\Q}_{k,m} - \Q_{k,m}
            \middle|
            \mathfrak{G}_{k-1}
        \right]
        &\leq
        \expect \left[
            \sqrt{(\widehat{\Q}_{k,m} - \Q_{k,m})^2}
            \middle|
            \mathfrak{G}_{k-1}
        \right]
        \\
        &\leq
        \sqrt{
            \expect \left[
                \left(
                    \widehat{\Q}_{k,m}
                    -
                    \Q_{k,m}
                \right)^2
                \middle|
                \mathfrak{G}_{k-1}
            \right]
        }
        \\
        &\leq
        \left\lVert
            \widehat{\Q}_k
            -
            \Q
        \right\rVert_d
        +
        \left(
            B + \frac{1}{1 - \gamma}
        \right)
        \sqrt{
            \left\lVert
                d_m - d
            \right\rVert_\text{TV}
        }.
    \end{align}
    With this, we proceed to the third and fourth terms (without the multiplier $\gamma^m$) and show the following.
    \begin{itemize}
        \item For the third term, we have by upper bounding $|\widehat{\Q}_{k,0} - \Q_{k,0}|$ by $B + \frac{1}{1 - \gamma}$,
            \begin{align}
                \expect \left[
                    (\widehat{\Q}_{k,m} - \Q_{k,m})
                    (\widehat{\Q}_{k,0} - \Q_{k,0})
                    \middle|
                    \mathfrak{G}_{k-1}
                \right]
                &\leq
                \left\lVert
                    \widehat{\Q}_k - \Q
                \right\rVert_d^2
                +
                \left(B + \frac{1}{1 - \gamma}\right)^2
                \sqrt{
                    \left\lVert
                        d_m - d
                    \right\rVert_\text{TV}
                }.
            \end{align}
        \item For the fourth term, we have by upper bounding $|\Q_{k,0} - \widehat{\Q}^*_{k,0}|$ by $\frac{1}{1 - \gamma} + B$,
            \begin{align}
                \expect \left[
                    (\widehat{\Q}_{k,m} - \Q_{k,m})
                    (\Q_{k,0} - \widehat{\Q}^*_{k,0})
                    \middle|
                    \mathfrak{G}_{k-1}
                \right]
                &\leq
                \left\lVert
                    \widehat{\Q}_k - \Q
                \right\rVert_d
                \left\lVert
                    \Q - \widehat{\Q}^*
                \right\rVert_d
                +
                \left(B + \frac{1}{1 - \gamma}\right)^2
                \sqrt{
                    \left\lVert
                        d_m - d
                    \right\rVert_\text{TV}
                }.
            \end{align}
    \end{itemize}
    By taking expectation over $\mathfrak{G}_{k-1}$ of the four terms and using the previous upper bounds, we obtain,
    \begin{align}
        \expect \left[
            \langle \beta_k - \beta_*, g_k \rangle
        \right]
        &=
        \expect \left[
            \expect \left[
                \langle \beta_k - \beta_*, g_k \rangle
                \middle|
                \mathfrak{G}_{k-1}
            \right]
        \right]
        \\
        &\leq
        - (1 - \gamma^m) \expect \left[ \left\lVert \widehat{\Q}_k - \Q \right\rVert_d^2 \right]
        + (1 + \gamma^m) \expect \left[ \left\lVert \widehat{\Q}_k - \Q \right\rVert_d \right] \left\lVert \widehat{\Q}^* - \Q \right\rVert_d
        \nonumber
        \\
        & \qquad + 2 \gamma^m \left(B + \frac{1}{1 - \gamma}\right)^2 \sqrt{ \left\lVert d_m - d \right\rVert_\text{TV} }
        \\
        &=
        - (1 - \gamma^m) \Delta_k^2
        + (1 + \gamma^m) \Delta_k \left\lVert \widehat{\Q}^* - \Q \right\rVert_d
        + 2 \gamma^m \left(B + \frac{1}{1 - \gamma}\right)^2 \sqrt{ \left\lVert d_m - d \right\rVert_\text{TV} }.
        \label{eq:first}
    \end{align}

    Let us now focus on the second term of equation \eqref{eq:drift} with $\expect \left[ \left\lVert g_k \right\rVert_2^2 \middle| \mathfrak{G}_{k-1} \right]$.
    Since $\sup_{s, z, a} \left\lVert \phi(s, z, a) \right\rVert_2 \leq 1$ and $\left\lVert \beta_k \right\rVert_2 \leq B$ for all $k \geq 0$, and $r_{k,i} \leq 1$ for all $k \geq 0$ and for all $i < m-1$, the norm of the gradient \eqref{eq:gradient}
    is bounded as follows,
    \begin{align}
        \sup_{k \geq 0} \left\lVert g_k \right\rVert_2
        \leq
        \frac{1 - \gamma^m}{1 - \gamma} + (1 + \gamma^m) B \leq \frac{1}{1 - \gamma} + 2B.
    \end{align}
    We obtain, for the second term of equation \eqref{eq:drift},
    \begin{align}
        \expect \left[ \left\lVert g_k \right\rVert_2^2 \right]
        &=
        \expect \left[ \expect \left[ \left\lVert g_k \right\rVert_2^2 \middle| \mathfrak{G}_{k-1} \right] \right]
        \\
        &\leq
        \left(\frac{1}{1 - \gamma} + 2B\right)^2\!\!.
        \label{eq:second}
    \end{align}

    By substituting equations \eqref{eq:first} and \eqref{eq:second} into the Lyapounov drift of equation \eqref{eq:drift}, we obtain,
    \begin{align}
        \expect \left[
            \L(\beta_{k+1}) - \L(\beta_k)
        \right]
        &\leq
        - 2 \alpha (1 - \gamma^m) \Delta_k^2
        + 2 \alpha (1 + \gamma^m) \Delta_k \left\lVert \widehat{\Q}^* - \Q \right\rVert_d
        + \alpha^2 \left(\frac{1}{1 - \gamma} + 2B\right)^2
        \nonumber
        \\
        &\qquad
        + 4 \alpha \gamma^m \left(B + \frac{1}{1 - \gamma}\right)^2 \sqrt{ \left\lVert d_m - d \right\rVert_\text{TV} }.
    \end{align}
    By setting $l = \frac{1 + \gamma^m}{2(1 - \gamma^m)} \min_{f \in \F^B_\phi} \left\lVert f - \Q \right\rVert_d$, we can write,
    \begin{align}
        \expect \left[
            \L(\beta_{k+1}) - \L(\beta_k)
        \right]
        &\leq
        - 2 \alpha (1 - \gamma^m) \left(
            \Delta_k^2
            -
            2 l \Delta_k
        \right)
        +
        \alpha^2 \left(\frac{1}{1 - \gamma} + 2B\right)^2
        \nonumber
        \\
        &\qquad
        + 4 \alpha \gamma^m \left(B + \frac{1}{1 - \gamma}\right)^2 \sqrt{ \left\lVert d_m - d \right\rVert_\text{TV} }
        \\
        &=
        - 2 \alpha (1 - \gamma^m) \left(
            \Delta_k^2
            -
            2 l \Delta_k
            + l^2
        \right)
        +
        2 \alpha (1 - \gamma^m) l^2
        +
        \alpha^2 \left(\frac{1}{1 - \gamma} + 2B\right)^2
        \nonumber
        \\
        &\qquad
        + 4 \alpha \gamma^m \left(B + \frac{1}{1 - \gamma}\right)^2 \sqrt{ \left\lVert d_m - d \right\rVert_\text{TV} }
        \\
        &=
        - 2 \alpha (1 - \gamma^m) \left(
            \Delta_k
            -
            l
        \right)^2
        +
        2 \alpha (1 - \gamma^m) l^2
        +
        \alpha^2 \left(\frac{1}{1 - \gamma} + 2B\right)^2
        \nonumber
        \\
        &\qquad
        + 4 \alpha \gamma^m \left(B + \frac{1}{1 - \gamma}\right)^2 \sqrt{ \left\lVert d_m - d \right\rVert_\text{TV} }.
    \end{align}

    By summing all Lyapounov drifts $\sum_{k=0}^{K-1} \expect \left[ \L(\beta_{k+1}) - \L(\beta_k) \right]$, we get,
    \begin{align}
        \expect \left[
            \L(\beta_{K}) - \L(\beta_0)
        \right]
        &\leq
        - 2 \alpha (1 - \gamma^m) \sum_{k=0}^{K-1} \left( \Delta_k - l \right)^2
        + 2 \alpha K (1 - \gamma^m) l^2
        + \alpha^2 K \left(\frac{1}{1 - \gamma} + 2B\right)^2
        \nonumber
        \\
        &\qquad
        + 4 \alpha K \gamma^m \left(B + \frac{1}{1 - \gamma}\right)^2 \sqrt{ \left\lVert d_m - d \right\rVert_\text{TV} }.
    \end{align}
    By rearranging and dividing by $2 \alpha K (1 - \gamma^m)$, we obtain after neglecting $\L(\beta_K) > 0$,
    \begin{align}
        \frac{1}{K} \sum_{k=0}^{K-1}(\Delta_k - l)^2
        &\leq
        \frac{
            \expect \left[
                \L(\beta_0) - \L(\beta_K)
            \right]
        }{2 \alpha K (1 - \gamma^m)}
        + l^2
        + \frac{\alpha}{2 (1 - \gamma^m)} \left(\frac{1}{1 - \gamma} + 2B\right)^2
        \nonumber
        \\
        &\qquad
        + \frac{2 \gamma^m}{1 - \gamma^m} \left(B + \frac{1}{1 - \gamma}\right)^2 \sqrt{ \left\lVert d_m - d \right\rVert_\text{TV} }
        \\
        &\leq
        \frac{
            \left\lVert
                \beta_0 - \beta_*
            \right\rVert_2^2
        }{2 \alpha K (1 - \gamma^m)}
        + l^2
        + \frac{\alpha}{2 (1 - \gamma^m)} \left(\frac{1}{1 - \gamma} + 2B\right)^2
        \nonumber
        \\
        &\qquad
        + \frac{2 \gamma^m}{1 - \gamma^m} \left(B + \frac{1}{1 - \gamma}\right)^2 \sqrt{ \left\lVert d_m - d \right\rVert_\text{TV} }.
    \end{align}

    The bound obtained through this Lyapounov drift summation can be used to further develop equation \eqref{eq:sum_delta}, using the subadditivity of the square root,
    \begin{align}
        \sqrt{
            \expect \left[
                \left\lVert
                    \Q
                    -
                    \widebar{\Q}
                \right\rVert^2_d
            \right]
        }
        &\leq
        \sqrt{
            \frac{1}{K} \sum_{k=0}^{K-1} \left(
                \Delta_k - l
            \right)^2
        } + l
        \\
        &\leq
        \frac{
            \left\lVert
                \beta_0 - \beta_*
            \right\rVert_2
        }{
            \sqrt{2 \alpha K (1 - \gamma^m)}
        }
        + 2 l
        + \sqrt{\frac{\alpha}{2 (1 - \gamma^m)}} \left(\frac{1}{1 - \gamma} + 2B\right)
        \nonumber
        \\
        &\qquad
        +
        \left(B + \frac{1}{1 - \gamma}\right)
        \sqrt{
            \frac{2 \gamma^m}{1 - \gamma^m}
            \sqrt{\left\lVert d_m - d \right\rVert_\text{TV}}
        }
        \\
        &=
        \frac{
            \left\lVert
                \beta_0 - \beta_*
            \right\rVert_2
        }{
            \sqrt{2 \alpha K (1 - \gamma^m)}
        }
        + \frac{1 + \gamma^m}{1 - \gamma^m} \min_{f \in \F^B_\phi} \left\lVert f - \Q \right\rVert_d
        + \sqrt{\frac{\alpha}{2 (1 - \gamma^m)}} \left(\frac{1}{1 - \gamma} + 2B\right)
        \nonumber
        \\
        &\qquad
        +
        \left(B + \frac{1}{1 - \gamma}\right)
        \sqrt{
            \frac{2 \gamma^m}{1 - \gamma^m}
            \sqrt{\left\lVert d_m - d \right\rVert_\text{TV}}
        }.
    \end{align}

    By setting $\alpha = \frac{1}{\sqrt{K}}$ and upper bounding $\lVert \beta_0 - \beta_* \rVert$ by $2B$, we get,
    \begin{align}
        \sqrt{
            \expect \left[
                \left\lVert
                    \Q - \widebar{\Q}
                \right\rVert^2_d
            \right]
        }
        &\leq
        \frac{
            2B
        }{
            \sqrt{2 \sqrt{K} (1 - \gamma^m)}
        }
        + \frac{1 + \gamma^m}{1 - \gamma^m} \min_{f \in \F^B_\phi} \left\lVert f - \Q \right\rVert_d
        + \frac{1}{\sqrt{2 \sqrt{K} (1 - \gamma^m)}} \left(\frac{1}{1 - \gamma} + 2B\right)
        \nonumber
        \\
        &\qquad
        + \left(B + \frac{1}{1 - \gamma}\right) \sqrt{\frac{2 \gamma^m}{1 - \gamma^m} \sqrt{\left\lVert d_m - d \right\rVert_\text{TV}}}
        \\
        &=
        \sqrt{
            \frac{
                4B^2 + \left(\frac{1}{1 - \gamma} + 2B\right)^2
            }{
                2 \sqrt{K} (1 - \gamma^m)
            }
        }
        + \frac{1 + \gamma^m}{1 - \gamma^m} \min_{f \in \F^B_\phi} \left\lVert f - \Q \right\rVert_d
        \nonumber
        \\
        &\qquad
        + \left(B + \frac{1}{1 - \gamma}\right) \sqrt{\frac{2 \gamma^m}{1 - \gamma^m} \sqrt{\left\lVert d_m - d \right\rVert_\text{TV}}}.
    \end{align}
    This concludes the proof.
\end{proof}

\section{Proof of the Finite-Time Bound for the Symmetric Critic} \label{app:proof-symmetric-critic}

Let us first find an upper bound on the distance $\smash{\left\lVert Q^\pi - \widetilde{Q}^\pi \right\rVert^2_d}$ between the Q-function $Q^\pi$ and the fixed point $\widetilde{Q}^\pi$.
\begin{applemma}[Upper bound on the aliasing bias \citep{cayci2024finite}]
    \label{lemma:aliasing}
    For any agent-state policy $\pi \in \Pi_\M$, and any $m \in \NN$, we have,
    \begin{align}
        \left\lVert
            Q^\pi - \widetilde{Q}^\pi
        \right\rVert_d
        &\leq
        \frac{
            1 - \gamma^m
        }{
            1 - \gamma
        }
        \left\lVert
            \expect^\pi \left[
                \sum_{k=0}^\infty \gamma^{km} \left\lVert
                    \hat{b}_{km} - b_{km}
                \right\rVert_\text{TV}
                \middle| Z_0 = \cdot
            \right]
        \right\rVert_d\!\!\!.
    \end{align}
\end{applemma}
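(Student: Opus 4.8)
The plan is to bound the distance between the symmetric Q-function $Q^\pi$ and the fixed point $\widetilde Q^\pi$ of the $m$-step symmetric Bellman operator \eqref{eq:symmetric-bellman} by relating the symmetric Q-function to the asymmetric Q-function through the belief, and to do the same for the fixed point through the approximate belief. The key observation is that both $Q^\pi$ and $\widetilde Q^\pi$ can be expressed as expectations of the asymmetric Q-function $\Q^\pi(s, z, a)$ — the former weighting $s$ by the true belief $b_t(s | h_t)$ conditioned on the full history, and the latter weighting $s$ by the approximate belief $\hat b_t(s | z_t)$ conditioned only on the agent state. Concretely, $Q^\pi(z, a) = \expect^\pi[\Q^\pi(S_0, z, a) \mid Z_0 = z, A_0 = a]$ where the inner conditioning is over the true (history-dependent) posterior, whereas $\widetilde Q^\pi$ satisfies a fixed-point equation where each bootstrapped term $\widetilde Q^\pi(Z_m, A_m)$ re-introduces the agent-state-only posterior $\hat b_m$.

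The first step is to unroll the symmetric Bellman fixed point equation \eqref{eq:symmetric-bellman} for $\widetilde Q^\pi$ into an infinite sum over multiples of $m$, using its $\gamma^m$-contractivity to justify convergence: $\widetilde Q^\pi(z, a) = \expect^\pi\bigl[\sum_{k=0}^\infty \gamma^{km} \bigl(\sum_{i=0}^{m-1} \gamma^i R_{km+i}\bigr) \bigm| Z_0 = z, A_0 = a\bigr]$, where at each block boundary the expectation uses the Markov structure of $(Z_t)$ alone. The second step is to write the same unrolled expansion for $Q^\pi$ itself (which is simply the full discounted return, regrouped into blocks of length $m$). The third step is to take the difference term by term: the $k$-th block contributes a difference that, after a change of measure, is controlled by the total variation between the distribution of $S_{km}$ under the true history-conditioned dynamics and under the agent-state-conditioned dynamics — this is exactly $\lVert \hat b_{km} - b_{km} \rVert_\text{TV}$. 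Since each block's reward sum is bounded by $\sum_{i=0}^{m-1}\gamma^i = \frac{1-\gamma^m}{1-\gamma}$ in absolute value, and since the difference in a bounded quantity under two measures is bounded by (range) times the total variation distance, the $k$-th block contributes at most $\gamma^{km} \cdot \frac{1-\gamma^m}{1-\gamma} \cdot \lVert \hat b_{km} - b_{km}\rVert_\text{TV}$. Summing over $k$ and then taking the $d$-weighted $\ell_2$-norm over $(z, a)$ (pulling the norm inside the expectation via the triangle inequality / Jensen) yields the claimed bound.

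The main obstacle will be step three: making precise the claim that replacing the true belief by the approximate belief at each block boundary incurs an error controlled by $\lVert \hat b_{km} - b_{km}\rVert_\text{TV}$. The subtlety is that the error compounds across blocks — the fixed point $\widetilde Q^\pi$ uses $\hat b$ at every boundary $m, 2m, 3m, \dots$, not just the first — so one must set up a telescoping argument (or an inductive/recursive bound on the $k$-step truncation of the fixed point) that isolates, at stage $k$, the freshly introduced belief mismatch $\hat b_{km} - b_{km}$ while keeping the already-accumulated tail bounded by the discount factor $\gamma^{km}$. Care is also needed to ensure that the quantity whose expectation under the two beliefs is being compared — essentially $\V^\pi(S_{km}, Z_{km})$ or the continuation value — is uniformly bounded by $\frac{1}{1-\gamma}$, so that the TV bound applies with the right constant; tracking the constants is what produces the factor $\frac{1-\gamma^m}{1-\gamma}$ rather than $\frac{1}{1-\gamma}$. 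Once the recursion is correctly unrolled, the remaining manipulations (moving from a pointwise-in-$(z,a)$ bound to the $d$-weighted norm, and recognizing the resulting expression as the stated right-hand side) are routine.
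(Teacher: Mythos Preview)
Your proposal is correct and follows the same approach as the paper. The paper's execution is more direct than you anticipate: it introduces the $m$-step block return $\bar r_m(s,z,a)=\expect^\pi\bigl[\sum_{i=0}^{m-1}\gamma^i R_i\bigm| S_0=s,Z_0=z,A_0=a\bigr]$, writes both $Q^\pi$ and $\widetilde Q^\pi$ in the common form $\expect^\pi\bigl[\sum_{k\ge 0}\gamma^{km}\sum_s(\text{belief})_{km}(s)\,\bar r_m(s,Z_{km},A_{km})\bigm| Z_0=z,A_0=a\bigr]$ with the belief equal to $b_{km}(\cdot\mid H_{km})$ for $Q^\pi$ and $\hat b_{km}(\cdot\mid Z_{km})$ for $\widetilde Q^\pi$, and then subtracts term by term --- so no telescoping or recursive compounding argument is needed, and the constant $\tfrac{1-\gamma^m}{1-\gamma}$ arises simply as $\sup_{s,z,a}|\bar r_m(s,z,a)|$.
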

\begin{proof}
    The proof is similar to the one of \citet{cayci2024finite}.
    Let us first define the expected $m$-step return,
    \begin{align}
        \bar{r}_m(s, z, a) &= \expect^{\pi} \left[
            \sum_{k=0}^{m-1} \gamma^k R_k
            \middle|
            S_0 = s, Z_0 = s, A_0 = a
        \right]\!\!.
    \end{align}
    Using the expected $m$-step return and the definition of the belief $b$ in equation \eqref{eq:belief} and approximate belief $\hat{b}$ in equation \eqref{eq:approximate_belief}, it can be noted that,
    \begin{align}
        Q^\pi(z, a)
        &=
        \expect^{\pi} \left[
            \sum_{k=0}^\infty
            \gamma^{km}
            \sum_{s \in \S}
            b_{km}(s|H_{km})
            \bar{r}_m(s, Z_{km}, A_{km})
            \middle|
            Z_0 = z, A_0 = a
        \right]
        \\
        \widetilde{Q}^\pi(z, a)
        &=
        \expect^{\pi} \left[
            \sum_{k=0}^\infty
            \gamma^{km}
            \sum_{s \in \S}
            \hat{b}_{km}(s|Z_{km})
            \bar{r}_m(s, Z_{km}, A_{km})
            \middle|
            Z_0 = z, A_0 = a
        \right]\!\!.
    \end{align}
    Indeed, bootstrapping at timestep $m$ based on the agent state only is equivalent to considering the distribution of future states to be $\hat{b}_m(\cdot|Z_m)$ instead of $b_m(\cdot|H_m)$.
    As a consequence, we have,
    \begin{align}
        \left|
            Q^\pi(z, a) - \widetilde{Q}^\pi(z, a)
        \right|
        &=
        \expect^{\pi} \Bigg[
            \sum_{k=0}^\infty
            \gamma^{km}
            \sum_{s \in \S}
            \left(
                b_{km}(s|H_{km})
                -
                \hat{b}_{km}(s|Z_{km})
            \right)
            \bar{r}_m(s, Z_{km}, A_{km})
            \Bigg|
            Z_0 = z, A_0 = a
        \Bigg]
        \\
        &\leq
        \expect^{\pi} \Bigg[
            \sum_{k=0}^\infty
            \gamma^{km}
            \sup_{s \in \S}
            \left|
                b_{km}(s|H_{km})
                -
                \hat{b}_{km}(s|Z_{km})
            \right|
            \sup_{s \in \S}
            \left|
                \bar{r}_m(s, Z_{km}, A_{km})
            \right|
            \Bigg|
            Z_0 = z, A_0 = a
        \Bigg]
        \\
        &\leq
        \expect^{\pi} \left[
            \sum_{k=0}^\infty
            \gamma^{km}
            \sup_{s \in \S}
            \left|
                b_{km}(s|H_{km})
                -
                \hat{b}_{km}(s|Z_{km})
            \right|
            \frac{1 - \gamma^m}{1 - \gamma}
            \middle|
            Z_0 = z, A_0 = a
        \right]
        \\
        &=
        \frac{1 - \gamma^m}{1 - \gamma}
        \expect^{\pi} \left[
            \sum_{k=0}^\infty
            \gamma^{km}
            \sup_{s \in \S}
            \left|
                b_{km}(s|H_{km})
                -
                \hat{b}_{km}(s|Z_{km})
            \right|
            \middle|
            Z_0 = z, A_0 = a
        \right]
        \\
        &\leq
        \frac{1 - \gamma^m}{1 - \gamma}
        \expect^{\pi} \left[
            \sum_{k=0}^\infty
            \gamma^{km}
            \left\lVert
                b_{km}(\cdot|H_{km})
                -
                \hat{b}_{km}(\cdot|Z_{km})
            \right\rVert_\text{TV}
            \middle|
            Z_0 = z, A_0 = a
        \right]
        \\
        &\leq
        \frac{1 - \gamma^m}{1 - \gamma}
        \expect^{\pi} \left[
            \sum_{k=0}^\infty
            \gamma^{km}
            \left\lVert
                b_{km}
                -
                \hat{b}_{km}
            \right\rVert_\text{TV}
            \middle|
            Z_0 = z, A_0 = a
        \right]\!\!,
    \end{align}
    where we use $b_{km}$ and $\hat{b}_{km}$ to denote the random variables $b_{km}(\cdot | H_{km})$ and $\hat{b}_{km}(\cdot | Z_{km})$, respectively.
    It illustrates that the aliasing bias can be bounded proportionally to the distance between the true belief and the approximate belief at the bootstrapping timesteps.
    Then, we obtain,
    \begin{align}
        \left\lVert
            Q^\pi - \widetilde{Q}^\pi
        \right\rVert_d
        &\leq
        \frac{
            1 - \gamma^m
        }{
            1 - \gamma
        }
        \left\lVert
            \expect^\pi \left[
                \sum_{k=0}^\infty \gamma^{km} \left\lVert
                    \hat{b}_{km} - b_{km}
                \right\rVert_\text{TV}
                \middle| Z_0 = \cdot
            \right]
        \right\rVert_d\!\!\!.
    \end{align}
    This concludes the proof.
\end{proof}

Using \autoref{lemma:aliasing}, we can prove \autoref{thm:symmetric-critic}, that is recalled below.
Note that some notations used in \autoref{app:proof-asymmetric-critic} will be reused with another meaning.
\symcritic*
\begin{proof}
    To ease notation as for the proof of \autoref{thm:asymmetric-critic} in \autoref{app:proof-asymmetric-critic}, we use $Q$ as a shorthand for $Q^{\pi}$, $\widehat{Q}^*$ as a shorthand for $\widehat{Q}^\pi_*$, $\widetilde{Q}$ as a shorthand for $\widetilde{Q}^\pi$, $\widebar{Q}$ as a shorthand for $\widebar{Q}^{\pi}$ and $\widehat{Q}_k$ as a shorthand for $\widehat{Q}_{\beta_k}^{\pi}$, where the subscripts and superscripts remain implicit but are assumed clear from context.
    When evaluating the Q-functions, we go one step further by using $Q_{k,i}$ to denote $Q(Z_{k,i}, A_{k,i})$, $\widehat{Q}^*_{k,i}$ to denote $\widehat{Q}^*(Z_{k,i}, A_{k,i})$, $\widetilde{Q}_{k,i}$ to denote $\widetilde{Q}(Z_{k,i}, A_{k,i})$ and $\widehat{Q}_{k,i}$ to denote $\widehat{Q}_k(Z_{k,i}, A_{k,i})$, and $\chi_{k,i}$ to denote $\chi(Z_{k,i}, A_{k,i})$.
    In addition, we define $d$ as a shorthand for $d^{\pi} \otimes \pi$, such that $d(z, a) = d^\pi(z) \pi(a | z)$, and $d_m$ as a shorthand for $d_{m}^{\pi} \otimes \pi$, such that $d_m(z, a) = d_{m}^\pi(z) \pi(a | z)$.
    Using the triangle inequality and the subadditivity of the square root, we have,
    \begin{align}
        \sqrt{
            \expect \left[
                \left\lVert
                    Q - \widebar{Q}
                \right\rVert^2_d
            \right]
        }
        &\leq
        \sqrt{
            \expect \left[
                \left\lVert
                    Q - \widetilde{Q}
                \right\rVert^2_d
            \right]
            +
            \expect \left[
                \left\lVert
                    \widetilde{Q} - \widebar{Q}
                \right\rVert^2_d
            \right]
        }
        \\
        &\leq
        \sqrt{
            \expect \left[
                \left\lVert
                    Q - \widetilde{Q}
                \right\rVert^2_d
            \right]
        }
        +
        \sqrt{
            \expect \left[
                \left\lVert
                    \widetilde{Q} - \widebar{Q}
                \right\rVert^2_d
            \right]
        }
        \\
        &\leq
        \left\lVert
            Q - \widetilde{Q}
        \right\rVert_d
        +
        \sqrt{
            \expect \left[
                \left\lVert
                    \widetilde{Q} - \widebar{Q}
                \right\rVert^2_d
            \right]
        } \label{eq:sym_triangle}.
    \end{align}

    We can bound the second term in equation \eqref{eq:sym_triangle} using similar steps as in the proof for the asymmetric finite-time bound (see \autoref{app:proof-asymmetric-critic}).
    We obtain,
    \begin{align}
        \sqrt{
            \expect \left[
                \left\lVert
                    \widetilde{Q}
                    -
                    \widebar{Q}
                \right\rVert^2_d
            \right]
        }
        &\leq
        \sqrt{
            \frac{1}{K}
            \sum_{k=0}^{K-1} \left(
                \Delta_k - l
            \right)^2
        } + l, \label{eq:sym_sum_delta}
    \end{align}
    where $l$ is arbitrary, and $\Delta_k$ is defined as,
    \begin{align}
        \Delta_k
        &=
        \sqrt{
            \expect \left[
                \left\lVert
                    \widetilde{Q} - \widehat{Q}_k
                \right\rVert^2_d
            \right]
        }
        =
        \sqrt{
            \expect \left[
                \left\lVert
                    \widetilde{Q}(\cdot)
                    -
                    \langle
                        \beta_k,
                        \chi(\cdot)
                    \rangle
                \right\rVert^2_d
            \right]
        }. \label{eq:sym_delta}
    \end{align}

    Similarly to the asymmetric case (see \autoref{app:proof-asymmetric-critic}), we consider the Lyapounov function $\L(\beta) = \left\lVert \beta_* - \beta \right\rVert_2^2$ in order to find a bound on $\frac{1}{K} \sum_{k=0}^{K-1} \left( \Delta_k - l \right)^2$.
    We define $\mathfrak{G}_k = \sigma(Z_{i,j}, A_{i,j}, i \leq k, j \leq m)$ and $\mathfrak{F}_k = \sigma(Z_{k,0}, A_{k,0})$.
    As in the asymmetric case (see \autoref{app:proof-asymmetric-critic}), we obtain, using to the law of total expectation,
    \begin{align}
        \expect \left[
            \L(\beta_{k+1}) - \L(\beta_k)
        \right]
        &\leq
        2 \alpha
        \expect \left[
            \expect \left[
                \langle \beta_k - \beta_*, g_k \rangle
                \middle|
                \mathfrak{G}_{k-1}
            \right]
        \right]
        +
        \alpha^2
        \expect \left[
            \expect \left[
                \left\lVert
                    g_k
                \right\rVert_2^2
                \middle|
                \mathfrak{G}_{k-1}
            \right]
        \right]\!. \label{eq:sym_drift}
    \end{align}
    Let us focus on the first term of equation \eqref{eq:sym_drift} with $\expect \left[ \langle \beta_k - \beta_*, g_k \rangle \middle| \mathfrak{G}_{k-1} \right]$.
    By conditioning on the sigma-fields $\mathfrak{G}_{k-1}$ and $\mathfrak{F}_k$, we have,
    \begin{align}
        \expect \left[ \langle \beta_k - \beta_*, g_k \rangle \middle| \mathfrak{F}_k, \mathfrak{G}_{k-1} \right]
        &=
        \left(
            \expect \left[
                \sum_{t=0}^{m-1} \gamma^t R_{k,t}
                +
                \gamma^m \widehat{Q}_{k,m}
                \middle|
                \mathfrak{F}_k, \mathfrak{G}_{k-1}
            \right]
            -
            \widehat{Q}_{k,0}
        \right)
        \left(
            \widehat{Q}_{k,0} - \widehat{Q}^*_{k,0}
        \right)\!.
        \label{eq:sym_gradient_projection}
    \end{align}
    Note that, according to the Bellman operator \eqref{eq:symmetric-bellman}, we have,
    \begin{align}
        \expect \left[
            \sum_{t=0}^{m-1} \gamma^t R_{k,t}
            \middle|
            \mathfrak{F}_k, \mathfrak{G}_{k-1}
        \right]
        =
        \widetilde{Q}_{k,0}
        -
        \gamma^m \expect \left[
            \widetilde{Q}_{k,m}
            \middle|
            \mathfrak{F}_k, \mathfrak{G}_{k-1}
        \right]\!.
        \label{eq:sym_rewards}
    \end{align}
    It differs from the asymmetric case (see \autoref{app:proof-asymmetric-critic}) in that we do not necessarily have $Q = \widetilde{Q}$ here.
    By substituting equation~\eqref{eq:sym_rewards} in equation~\eqref{eq:sym_gradient_projection}, we obtain,
    \begin{align}
        &\expect \left[ \langle \beta_k - \beta_*, g_k \rangle \middle| \mathfrak{F}_k, \mathfrak{G}_{k-1} \right] \nonumber
        \\
        &=
        \left(
            \expect \left[
                \sum_{t=0}^{m-1} \gamma^t R_{k,t}
                \middle|
                \mathfrak{F}_k, \mathfrak{G}_{k-1}
            \right]
            +
            \gamma^m \expect \left[
                \widehat{Q}_{k,m}
                \middle|
                \mathfrak{F}_k, \mathfrak{G}_{k-1}
            \right]
            -
            \widehat{Q}_{k,0}
        \right)
        \left(
            \widehat{Q}_{k,0} - \widehat{Q}^*_{k,0}
        \right)
        \\
        &=
        \left(
            \widetilde{Q}_{k,0}
            -
            \gamma^m \expect \left[
                \widetilde{Q}_{k,m}
                \middle|
                \mathfrak{F}_k, \mathfrak{G}_{k-1}
            \right]
            +
            \gamma^m \expect \left[
                \widehat{Q}_{k,m}
                \middle|
                \mathfrak{F}_k, \mathfrak{G}_{k-1}
            \right]
            -
            \widehat{Q}_{k,0}
        \right)
        \left(
            \widehat{Q}_{k,0} - \widehat{Q}^*_{k,0}
        \right)
        \\
        &=
        \left(
            \widetilde{Q}_{k,0}
            -
            \gamma^m \expect \left[
                \widetilde{Q}_{k,m}
                -
                \widehat{Q}_{k,m}
                \middle|
                \mathfrak{F}_k, \mathfrak{G}_{k-1}
            \right]
            -
            \widehat{Q}_{k,0}
        \right)
        \left(
            \widehat{Q}_{k,0} - \widetilde{Q}_{k,0} + \widetilde{Q}_{k,0} - \widehat{Q}^*_{k,0}
        \right)
        \\
        &=
        \left(
            (\widetilde{Q}_{k,0} - \widehat{Q}_{k,0})
            -
            \gamma^m \expect \left[
                \widetilde{Q}_{k,m}
                -
                \widehat{Q}_{k,m}
                \middle|
                \mathfrak{F}_k, \mathfrak{G}_{k-1}
            \right]
        \right)
        \left(
            (\widehat{Q}_{k,0} - \widetilde{Q}_{k,0}) + (\widetilde{Q}_{k,0} - \widehat{Q}^*_{k,0})
        \right)
        \\
        &=
        - (\widetilde{Q}_{k,0} - \widehat{Q}_{k,0})^2
        + (\widetilde{Q}_{k,0} - \widehat{Q}_{k,0})(\widetilde{Q}_{k,0} - \widehat{Q}^*_{k,0})
        \nonumber
        \\
        & \qquad
        + \gamma^m \expect \left[
            \widehat{Q}_{k,m}
            -
            \widetilde{Q}_{k,m}
            \middle|
            \mathfrak{F}_k, \mathfrak{G}_{k-1}
        \right] (\widehat{Q}_{k,0} - \widetilde{Q}_{k,0})
        + \gamma^m \expect \left[
            \widehat{Q}_{k,m}
            -
            \widetilde{Q}_{k,m}
            \middle|
            \mathfrak{F}_k, \mathfrak{G}_{k-1}
        \right] (\widetilde{Q}_{k,0} - \widehat{Q}^*_{k,0}).
        \label{eq:asym_gradient_projection_decomposed}
    \end{align}
    We now follow the same technique as in the asymmetric case (see \autoref{app:proof-asymmetric-critic}) for each of the four terms. By taking the expectation over $\mathfrak{F}_k$, we get the following.
    \begin{itemize}
        \item For the first term, we have,
            \begin{align}
                \expect \left[
                    - (\widetilde{Q}_{k,0} - \widehat{Q}_{k,0})^2
                    \middle| \mathfrak{G}_{k-1}
                \right]
                =
                - \left\lVert
                    \widetilde{Q} - \widehat{Q}_k
                \right\rVert_d^2.
            \end{align}
        \item For the second term, we have,
            \begin{align}
                \expect \left[
                    (\widetilde{Q}_{k,0} - \widehat{Q}_{k,0}) (\widetilde{Q}_{k,0} - \widehat{Q}^*_{k,0})
                    \middle| \mathfrak{G}_{k-1}
                \right]
                &\leq
                \left\lVert
                    \widetilde{Q} - \widehat{Q}_k
                \right\rVert_d
                \left\lVert
                    \widetilde{Q} - \widehat{Q}^*
                \right\rVert_d.
            \end{align}
        \item For the third term, we have,
            \begin{align}
                \expect \left[
                    (\widehat{Q}_{k,m} - \widetilde{Q}_{k,m})
                    (\widehat{Q}_{k,0} - \widetilde{Q}_{k,0})
                    \middle|
                    \mathfrak{G}_{k-1}
                \right]
                &\leq
                \left\lVert
                    \widehat{Q}_k - \widetilde{Q}
                \right\rVert_d^2
                +
                \left(B + \frac{1}{1 - \gamma}\right)^2
                \sqrt{
                    \left\lVert
                        d_m - d
                    \right\rVert_\text{TV}
                }.
            \end{align}
        \item For the fourth term, we have,
            \begin{align}
                \expect \left[
                    (\widehat{Q}_{k,m} - \widetilde{Q}_{k,m})
                    (\widetilde{Q}_{k,0} - \widehat{Q}^*_{k,0})
                    \middle|
                    \mathfrak{G}_{k-1}
                \right]
                &\leq
                \left\lVert
                    \widehat{Q}_k - \widetilde{Q}
                \right\rVert_d
                \left\lVert
                    \widetilde{Q} - \widehat{Q}^*
                \right\rVert_d
                +
                \left(B + \frac{1}{1 - \gamma}\right)^2
                \sqrt{
                    \left\lVert
                        d_m - d
                    \right\rVert_\text{TV}
                }.
            \end{align}
    \end{itemize}
    By taking expectation over $\mathfrak{G}_{k-1}$ of the four terms and using the previous upper bounds, we obtain,
    \begin{align}
        \expect \left[
            \langle \beta_k - \beta_*, g_k \rangle
        \right]
        &\leq
        - (1 - \gamma^m) \Delta_k^2
        + (1 + \gamma^m) \Delta_k \left\lVert \widehat{Q}^* - \widetilde{Q} \right\rVert_d
        + 2 \gamma^m \left(B + \frac{1}{1 - \gamma}\right)^2 \sqrt{ \left\lVert d_m - d \right\rVert_\text{TV} }.
        \label{eq:sym_first}
    \end{align}
    The second term in equation \eqref{eq:sym_drift} is treated similarly to the asymmetric case (see \autoref{app:proof-asymmetric-critic}), which yields,
    \begin{align}
        \expect \left[ \left\lVert g_k \right\rVert_2^2 \right]
        &\leq
        \left(\frac{1}{1 - \gamma} + 2B\right)^2\!\!.
        \label{eq:sym_second}
    \end{align}

    By substituting equations \eqref{eq:sym_first} and \eqref{eq:sym_second} into the Lyapounov drift of equation \eqref{eq:sym_drift}, we obtain,
    \begin{align}
        \expect \left[
            \L(\beta_{k+1}) - \L(\beta_k)
        \right]
        &\leq
        - 2 \alpha (1 - \gamma^m) \Delta_k^2
        + 2 \alpha (1 + \gamma^m) \Delta_k \left\lVert \widehat{Q}^* - \widetilde{Q} \right\rVert_d
        + \alpha^2 \left(\frac{1}{1 - \gamma} + 2B\right)^2
        \nonumber
        \\
        &\qquad
        + 4 \alpha \gamma^m \left(B + \frac{1}{1 - \gamma}\right)^2 \sqrt{ \left\lVert d_m - d \right\rVert_\text{TV} }.
    \end{align}
    We can upper bound $\left\lVert \widehat{Q}^* - \widetilde{Q} \right\rVert_d$ as follows,
    \begin{align}
        \left\lVert \widehat{Q}^* - \widetilde{Q} \right\rVert_d
        &\leq
        \left\lVert \widehat{Q}^* - Q \right\rVert_d
        +
        \left\lVert Q - \widetilde{Q} \right\rVert_d.
    \end{align}
    By setting $l = \frac{1 + \gamma^m}{2(1 - \gamma^m)} \left( \left\lVert \widehat{Q}^* - Q \right\rVert_d + \left\lVert Q - \widetilde{Q} \right\rVert_d \right)$, we can write, following a similar strategy as in the asymmetric case (see \autoref{app:proof-asymmetric-critic}),
    \begin{align}
        \expect \left[
            \L(\beta_{k+1}) - \L(\beta_k)
        \right]
        &\leq
        - 2 \alpha (1 - \gamma^m) \left(
            \Delta_k
            -
            l
        \right)^2
        +
        2 \alpha (1 - \gamma^m) l^2
        + \alpha^2 \left(\frac{1}{1 - \gamma} + 2B\right)^2
        \nonumber
        \\
        &\qquad
        + 4 \alpha \gamma^m \left(B + \frac{1}{1 - \gamma}\right)^2 \sqrt{ \left\lVert d_m - d \right\rVert_\text{TV} }.
    \end{align}
    By summing all drifts, rearranging, and dividing by $2 \alpha K (1 - \gamma^m)$, we obtain after neglecting $\L(\beta_K) > 0$,
    \begin{align}
        \frac{1}{K} \sum_{k=0}^{K-1}(\Delta_k - l)^2
        &\leq
        \frac{
            \left\lVert
                \beta_0 - \beta_*
            \right\rVert_2^2
        }{2 \alpha K (1 - \gamma^m)}
        + l^2
        + \frac{\alpha}{2 (1 - \gamma^m)} \left(\frac{1}{1 - \gamma} + 2B\right)^2
        \nonumber
        \\
        &\qquad
        + \frac{2 \gamma^m}{1 - \gamma^m} \left(B + \frac{1}{1 - \gamma}\right)^2 \sqrt{ \left\lVert d_m - d \right\rVert_\text{TV} }.
    \end{align}

    The bound obtained through this Lyapounov drift summation can be used to further develop equation \eqref{eq:sym_sum_delta}, using the subadditivity of the square root,
    \begin{align}
        \sqrt{
            \expect \left[
                \left\lVert
                    \widetilde{Q}
                    -
                    \widebar{Q}
                \right\rVert^2_d
            \right]
        }
        &\leq
        \sqrt{
            \frac{1}{K} \sum_{k=0}^{K-1} \left(
                \Delta_k - l
            \right)^2
        } + l
        \\
        &\leq
        \frac{
            \left\lVert
                \beta_0 - \beta_*
            \right\rVert_2
        }{
            \sqrt{2 \alpha K (1 - \gamma^m)}
        }
        + 2 l
        + \sqrt{\frac{\alpha}{2 (1 - \gamma^m)}} \left(\frac{1}{1 - \gamma} + 2B\right)
        \nonumber
        \\
        &\qquad
        +
        \left(B + \frac{1}{1 - \gamma}\right)
        \sqrt{
            \frac{2 \gamma^m}{1 - \gamma^m}
            \sqrt{\left\lVert d_m - d \right\rVert_\text{TV}}
        }
        \\
        &=
        \frac{
            \left\lVert
                \beta_0 - \beta_*
            \right\rVert_2
        }{
            \sqrt{2 \alpha K (1 - \gamma^m)}
        }
        + 2l
        + \sqrt{\frac{\alpha}{2 (1 - \gamma^m)}} \left(\frac{1}{1 - \gamma} + 2B\right)
        \nonumber
        \\
        &\qquad
        +
        \left(B + \frac{1}{1 - \gamma}\right)
        \sqrt{
            \frac{2 \gamma^m}{1 - \gamma^m}
            \sqrt{\left\lVert d_m - d \right\rVert_\text{TV}}
        }.
        \label{eq:sym_triangle_second}
    \end{align}
    Plugging equation \eqref{eq:sym_triangle_second} into equation \eqref{eq:sym_triangle}, and substituting back $l$, we finally have,
    \begin{align}
        \sqrt{
            \expect \left[
                \left\lVert
                    Q - \widebar{Q}
                \right\rVert^2_d
            \right]
        }
        &\leq
        \frac{
            \left\lVert
                \beta_0 - \beta_*
            \right\rVert_2
        }{
            \sqrt{2 \alpha K (1 - \gamma^m)}
        }
        + \frac{1 + \gamma^m}{1 - \gamma^m} \left( \left\lVert \widehat{Q}^* - Q \right\rVert_d + \left\lVert Q - \widetilde{Q} \right\rVert_d \right)
        + \sqrt{\frac{\alpha}{2 (1 - \gamma^m)}} \left(\frac{1}{1 - \gamma} + 2B\right)
        \nonumber
        \\
        &\qquad
        +
        \left(B + \frac{1}{1 - \gamma}\right)
        \sqrt{
            \frac{2 \gamma^m}{1 - \gamma^m}
            \sqrt{\left\lVert d_m - d \right\rVert_\text{TV}}
        }
        +
        \left\lVert
            Q
            -
            \widetilde{Q}
        \right\rVert_d
        \\
        &\leq
        \frac{
            \left\lVert
                \beta_0 - \beta_*
            \right\rVert_2
        }{
            \sqrt{2 \alpha K (1 - \gamma^m)}
        }
        + \frac{1 + \gamma^m}{1 - \gamma^m} \left\lVert \widehat{Q}^* - Q \right\rVert_d
        + \sqrt{\frac{\alpha}{2 (1 - \gamma^m)}} \left(\frac{1}{1 - \gamma} + 2B\right)
        \nonumber
        \\
        &\qquad
        +
        \left(B + \frac{1}{1 - \gamma}\right)
        \sqrt{
            \frac{2 \gamma^m}{1 - \gamma^m}
            \sqrt{\left\lVert d_m - d \right\rVert_\text{TV}}
        }
        + \frac{2}{1 - \gamma^m}
        \left\lVert Q - \widetilde{Q} \right\rVert_d
    \end{align}
    Using \autoref{lemma:aliasing}, we finally obtain,
    \begin{align}
        \sqrt{
            \expect \left[
                \left\lVert
                    Q - \widebar{Q}
                \right\rVert^2_d
            \right]
        }
        &\leq
        \frac{
            \left\lVert
                \beta_0 - \beta_*
            \right\rVert_2
        }{
            \sqrt{2 \alpha K (1 - \gamma^m)}
        }
        + \frac{1 + \gamma^m}{1 - \gamma^m} \left\lVert \widehat{Q}^* - Q \right\rVert_d
        + \sqrt{\frac{\alpha}{2 (1 - \gamma^m)}} \left(\frac{1}{1 - \gamma} + 2B\right)
        \nonumber
        \\
        &\qquad
        +
        \left(B + \frac{1}{1 - \gamma}\right)
        \sqrt{
            \frac{2 \gamma^m}{1 - \gamma^m}
            \sqrt{\left\lVert d_m - d \right\rVert_\text{TV}}
        }
        \nonumber
        \\
        &\qquad
        +
        \left(
            \frac{2}{1 - \gamma^m}
        \right)
        \frac{
            1 - \gamma^m
        }{
            1 - \gamma
        }
        \left\lVert
            \expect \left[
                \sum_{k=0}^\infty \gamma^{km} \left\lVert
                    \hat{b}_{km} - b_{km}
                \right\rVert_\text{TV}
                \middle| Z_0 = \cdot
            \right]
        \right\rVert_d
        \\
        &\leq
        \frac{
            \left\lVert
                \beta_0 - \beta_*
            \right\rVert_2
        }{
            \sqrt{2 \alpha K (1 - \gamma^m)}
        }
        + \frac{1 + \gamma^m}{1 - \gamma^m} \min_{f \in \F^B_\phi} \left\lVert f - \Q \right\rVert_d
        + \sqrt{\frac{\alpha}{2 (1 - \gamma^m)}} \left(\frac{1}{1 - \gamma} + 2B\right)
        \nonumber
        \\
        &\qquad
        +
        \left(B + \frac{1}{1 - \gamma}\right)
        \sqrt{
            \frac{2 \gamma^m}{1 - \gamma^m}
            \sqrt{\left\lVert d_m - d \right\rVert_\text{TV}}
        }
        \nonumber
        \\
        &\qquad
        +
        \frac{
            2
        }{
            1 - \gamma
        }
        \left\lVert
            \expect \left[
                \sum_{k=0}^\infty \gamma^{km} \left\lVert
                    \hat{b}_{km} - b_{km}
                \right\rVert_\text{TV}
                \middle| Z_0 = \cdot
            \right]
        \right\rVert_d\!\!\!.
    \end{align}

    By setting $\alpha = \frac{1}{\sqrt{K}}$ and upper bounding $\lVert \beta_0 - \beta_* \rVert$ by $2B$, we get,
    \begin{align}
        \sqrt{
            \expect \left[
                \left\lVert
                    Q - \widebar{Q}
                \right\rVert^2_d
            \right]
        }
        &\leq
        \sqrt{
            \frac{
                4B^2 + \left(\frac{1}{1 - \gamma} + 2B\right)^2
            }{
                2 \sqrt{K} (1 - \gamma^m)
            }
        }
        + \frac{1 + \gamma^m}{1 - \gamma^m} \min_{f \in \F^B_\phi} \left\lVert f - \Q \right\rVert_d
        \nonumber
        \\
        &\qquad
        + \left(B + \frac{1}{1 - \gamma}\right) \sqrt{\frac{2 \gamma^m}{1 - \gamma^m} \sqrt{\left\lVert d_m - d \right\rVert_\text{TV}}}
        \nonumber
        \\
        &\qquad
        +
        \frac{
            2
        }{
            1 - \gamma
        }
        \left\lVert
            \expect \left[
                \sum_{k=0}^\infty \gamma^{km} \left\lVert
                    \hat{b}_{km} - b_{km}
                \right\rVert_\text{TV}
                \middle| Z_0 = \cdot
            \right]
        \right\rVert_d\!\!\!.
    \end{align}
    This concludes the proof.
\end{proof}

\section{Proof of the Finite-Time Bound for the Natural Actor-Critic} \label{app:proof-actor}

Let us first give the performance difference lemma for POMDP proved by \citet{cayci2024finite}.
Note that this proof is completely agnostic about the critic used to compute $\pi_1, \pi_2 \in \Pi_\M$ and is thus applicable both to the asymmetric setting and the symmetric setting.
\begin{applemma}[Performance difference \citep{cayci2024finite}]
    \label{lemma:performance}
    For any two agent-state polices $\pi_1, \pi_2 \in \Pi_\M$,
    \begin{align}
        V^{\pi_2}(z_0) - V^{\pi_1}(z_0)
        &\leq
        \frac{1}{1 - \gamma}
        \expect^{d^{\pi_2}} \left[
            A^{\pi_1}(Z, A)
            \middle|
            Z_0 = z_0
        \right]
        +
        \frac{2}{1 - \gamma}
        \epsilon^{\pi_2}_\text{inf}(z_0),
    \end{align}
    where,
    \begin{align}
        \epsilon^{\pi_2}_\text{inf}(z_0)
        &=
        \expect^{\pi_2} \left[
            \sum_{k=0}^\infty \gamma^k
            \left\lVert
                \hat{b}_k - b_k
            \right\rVert_\text{TV}
            \middle|
            Z_0 = z_0
        \right]\!.
    \end{align}
\end{applemma}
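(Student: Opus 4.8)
The plan is to adapt the classical performance-difference lemma of MDPs, but to run the telescoping through a \emph{belief-corrected} value function evaluated along the trajectory of $\pi_2$, so that the non-Markovianity of the agent state is confined to a single controllable error. Along a trajectory drawn under $\pi_2$, I would introduce the random quantity $\widetilde{V}_t = \sum_{s} b_t(s \mid H_t)\, \V^{\pi_1}(s, Z_t)$, i.e.\ the asymmetric value function of $\pi_1$ averaged against the \emph{true} belief at time $t$. The backbone is the path-wise identity $\sum_{t \geq 0} \gamma^t R_t = \widetilde{V}_0 + \sum_{t \geq 0} \gamma^t\big(R_t + \gamma \widetilde{V}_{t+1} - \widetilde{V}_t\big)$, which holds because $|\widetilde{V}_t| \leq \tfrac{1}{1-\gamma}$ forces $\gamma^t \widetilde{V}_t \to 0$. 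Taking $\expect^{\pi_2}[\,\cdot \mid Z_0 = z_0]$, using dominated convergence on the tail and the tower rule, I would get $V^{\pi_2}(z_0) = \expect^{\pi_2}[\widetilde{V}_0 \mid Z_0 = z_0] + \sum_{t \geq 0} \gamma^t\, \expect^{\pi_2}\big[\, \expect^{\pi_2}[R_t + \gamma \widetilde{V}_{t+1} \mid H_t, Z_t, A_t] - \widetilde{V}_t \,\big|\, Z_0 = z_0\big]$.

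Next I would evaluate the two conditional expectations. At $t = 0$ the posterior of $S_0$ given $Z_0 = z_0$ is $\hat{b}_0(\cdot \mid z_0)$, so $\expect^{\pi_2}[\widetilde{V}_0 \mid Z_0 = z_0] = \sum_s \hat{b}_0(s \mid z_0)\V^{\pi_1}(s, z_0) = V^{\pi_1}(z_0)$. For the inner expectation, since $(S_t, Z_t)$ is Markov under any agent-state policy the one-step asymmetric Bellman equation (\autoref{subsec:asym-q}) gives $\Q^{\pi_1}(s, z, a) = \expect[R_t + \gamma \V^{\pi_1}(S_{t+1}, Z_{t+1}) \mid S_t = s, Z_t = z, A_t = a]$; combining this with the Bayes consistency of the belief filter (so that $\expect^{\pi_2}[\widetilde{V}_{t+1} \mid H_t, Z_t, A_t] = \expect^{\pi_2}[\V^{\pi_1}(S_{t+1}, Z_{t+1}) \mid H_t, Z_t, A_t]$) and with $S_t \mid H_t \sim b_t(\cdot \mid H_t)$, the inner expectation collapses to $\sum_s b_t(s \mid H_t)\Q^{\pi_1}(s, Z_t, A_t)$. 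Subtracting $\widetilde{V}_t = \sum_s b_t(s \mid H_t)\V^{\pi_1}(s, Z_t)$ turns the step-$t$ summand into $\sum_s b_t(s \mid H_t)\A^{\pi_1}(s, Z_t, A_t)$, giving the \emph{exact} identity $V^{\pi_2}(z_0) - V^{\pi_1}(z_0) = \sum_{t \geq 0} \gamma^t\, \expect^{\pi_2}\big[\sum_s b_t(s \mid H_t)\A^{\pi_1}(s, Z_t, A_t) \,\big|\, Z_0 = z_0\big]$.

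The last step is to trade the true-belief average for the symmetric advantage. Using $A^{\pi_1}(z, a) = \sum_s \hat{b}_t(s \mid z)\A^{\pi_1}(s, z, a)$, the per-step discrepancy is $\sum_s \big(b_t(s \mid H_t) - \hat{b}_t(s \mid Z_t)\big)\A^{\pi_1}(s, Z_t, A_t)$; this is unchanged by shifting $\A^{\pi_1}(\cdot, Z_t, A_t)$ by a constant, so its absolute value is at most $\big(\text{range of }\A^{\pi_1}(\cdot, Z_t, A_t)\big)\cdot \lVert b_t(\cdot \mid H_t) - \hat{b}_t(\cdot \mid Z_t)\rVert_\text{TV} \leq \tfrac{2}{1-\gamma}\lVert b_t(\cdot \mid H_t) - \hat{b}_t(\cdot \mid Z_t)\rVert_\text{TV}$, using $\Q^{\pi_1}, \V^{\pi_1} \in [0, \tfrac{1}{1-\gamma}]$. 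Substituting, applying $\sum_{t \geq 0} \gamma^t \expect^{\pi_2}[f(Z_t, A_t) \mid Z_0 = z_0] = \tfrac{1}{1-\gamma}\expect^{d^{\pi_2}}[f(Z, A) \mid Z_0 = z_0]$ to the main term, and bundling the errors into $\epsilon^{\pi_2}_\text{inf}(z_0) = \expect^{\pi_2}\big[\sum_{k \geq 0} \gamma^k \lVert \hat{b}_k - b_k \rVert_\text{TV} \,\big|\, Z_0 = z_0\big]$ yields the stated inequality.

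The main obstacle is the collapse $\expect^{\pi_2}[R_t + \gamma \widetilde{V}_{t+1} \mid H_t, Z_t, A_t] = \sum_s b_t(s \mid H_t)\Q^{\pi_1}(s, Z_t, A_t)$: one must use that $b_t$ is a \emph{deterministic} function of the history (hence policy-agnostic, so the recursion for $\Q^{\pi_1}$ can be applied state by state), verify that the Bayes update stays consistent after conditioning on the realized $Z_{t+1}$, and match the reference posterior implicit in the definitions of $Q^{\pi_1}(z, a)$ and $V^{\pi_1}(z)$ with the running belief $\hat{b}_t$ that appears in $\epsilon^{\pi_2}_\text{inf}$. Tracking the factor $2$ (which is exactly the width $\tfrac{2}{1-\gamma}$ of the range of the advantage) is the remaining delicate point.
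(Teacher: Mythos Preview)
Your proposal is correct and reaches the same bound, but it is organized differently from the paper's proof. The paper telescopes with the \emph{symmetric} value $V^{\pi_1}(Z_t)$, then adds and subtracts $\gamma\,\V^{\pi_1}(S_{t+1},Z_{t+1})$ to split the drift into two pieces,
\[
\expect^{\pi_2}\!\Big[\textstyle\sum_t \gamma^t\big(R_t+\gamma\V^{\pi_1}(S_{t+1},Z_{t+1})-V^{\pi_1}(Z_t)\big)\Big]
\;+\;
\expect^{\pi_2}\!\Big[\textstyle\sum_t \gamma^{t+1}\big(V^{\pi_1}(Z_{t+1})-\V^{\pi_1}(S_{t+1},Z_{t+1})\big)\Big],
\]
and bounds each via a single belief swap on $\Q^{\pi_1}$ (resp.\ $\V^{\pi_1}$), using that these functions lie in $[0,\tfrac{1}{1-\gamma}]$; this yields two contributions of $\tfrac{1}{1-\gamma}\epsilon^{\pi_2}_{\text{inf}}$ that sum to the stated $\tfrac{2}{1-\gamma}\epsilon^{\pi_2}_{\text{inf}}$. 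You instead telescope with the belief-averaged asymmetric value $\widetilde V_t=\sum_s b_t(s\mid H_t)\,\V^{\pi_1}(s,Z_t)$, which collapses the drift \emph{exactly} to $\sum_t\gamma^t\,\expect^{\pi_2}\!\big[\sum_s b_t(s\mid H_t)\,\A^{\pi_1}(s,Z_t,A_t)\big]$ and requires only one belief swap at the end; the factor $2$ then enters as the width $\tfrac{2}{1-\gamma}$ of the range of $\A^{\pi_1}(\cdot,z,a)$. Your route has the advantage of producing an exact performance-difference identity (with the true belief) before any inequality is applied, and it isolates the aliasing error to a single, transparent step; the paper's route is slightly more elementary in that it never needs the Bayes-consistency argument $\expect[\widetilde V_{t+1}\mid H_t,Z_t,A_t]=\expect[\V^{\pi_1}(S_{t+1},Z_{t+1})\mid H_t,Z_t,A_t]$ that you flag as the main obstacle. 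Your caveat about matching the reference posterior in $Q^{\pi_1},V^{\pi_1}$ with the running $\hat b_t$ is well placed: the paper silently makes the same identification $Q^{\pi_1}(z,a)=\sum_s \hat b_t(s\mid z)\,\Q^{\pi_1}(s,z,a)$, so this is a shared subtlety rather than a gap in your argument.
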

\begin{proof}
    The proof is similar to the one of \citet{cayci2024finite}.
    First, let us decompose the performance difference in the following terms,
    \begin{align}
        V^{\pi_2}(z_0) - V^{\pi_1}(z_0)
        &=
        \expect^{\pi_2} \left[
            \sum_{t=0}^\infty
            \gamma^t R_t
            \middle|
            Z_0 = z_0
        \right]
        -
        V^{\pi_1}(z_0)
        \\
        &=
        \expect^{\pi_2} \left[
            \sum_{t=0}^\infty
            \gamma^t
            \left(
                R_t
                -
                V^{\pi_1}(Z_t)
                +
                V^{\pi_1}(Z_t)
            \right)
            \middle|
            Z_0 = z_0
        \right]
        -
        V^{\pi_1}(z_0)
        \\
        &=
        \expect^{\pi_2} \left[
            \sum_{t=0}^\infty
            \gamma^t
            \left(
                R_t
                -
                V^{\pi_1}(Z_t)
                +
                \gamma V^{\pi_1}(Z_{t+1})
            \right)
            \middle|
            Z_0 = z_0
        \right]
        \\
        &=
        \expect^{\pi_2} \left[
            \sum_{t=0}^\infty
            \gamma^t
            \left(
                R_t
                +
                \gamma \V^{\pi_1}(S_{t+1}, Z_{t+1})
                -
                V^{\pi_1}(Z_t)
            \right)
            \middle|
            Z_0 = z_0
        \right]
        \nonumber
        \\
        &\qquad +
        \expect^{\pi_2} \left[
            \sum_{t=0}^\infty
            \gamma^t
            \left(
                \gamma V^{\pi_1}(Z_{t+1})
                -
                \gamma \V^{\pi_1}(S_{t+1}, Z_{t+1})
            \right)
            \middle|
            Z_0 = z_0
        \right]
        \\
        &=
        \expect^{\pi_2} \left[
            \sum_{t=0}^\infty
            \gamma^t
            \left(
                R_t
                +
                \gamma \V^{\pi_1}(S_{t+1}, Z_{t+1})
                -
                V^{\pi_1}(Z_t)
            \right)
            \middle|
            Z_0 = z_0
        \right]
        \nonumber
        \\
        &\qquad +
        \expect^{\pi_2} \left[
            \sum_{t=0}^\infty
            \gamma^{t+1}
            \left(
                V^{\pi_1}(Z_{t+1})
                -
                \V^{\pi_1}(S_{t+1}, Z_{t+1})
            \right)
            \middle|
            Z_0 = z_0
        \right]\!.
        \label{eq:performance-decomposition}
    \end{align}
    Let us focus on bounding the first term in equation \eqref{eq:performance-decomposition}. We have, for any $T>0$,
    \begin{align}
        \left|
            \sum_{t=0}^T
            \gamma^t
            \left(
                R_t
                +
                \gamma \V^{\pi_1}(S_{t+1}, Z_{t+1})
                -
                V^{\pi_1}(Z_t)
            \right)
        \right|
        &\leq
        \frac{2}{(1 - \gamma)^2}
        <
        \infty.
    \end{align}
    By Lebesgue's dominated convergence, we have,
    \begin{align}
        &\expect^{\pi_2} \left[
            \sum_{t=0}^\infty
            \gamma^t
            \left(
                R_t
                +
                \gamma \V^{\pi_1}(S_{t+1}, Z_{t+1})
                -
                V^{\pi_1}(Z_t)
            \right)
            \middle|
            Z_0 = z_0
        \right]
        \nonumber
        \\
        &\qquad
        =
        \sum_{t=0}^\infty
        \gamma^t
        \expect^{\pi_2} \left[
            R_t
            +
            \gamma \V^{\pi_1}(S_{t+1}, Z_{t+1})
            -
            V^{\pi_1}(Z_t)
            \middle|
            Z_0 = z_0
        \right]\!.
    \end{align}
    Then, by the law of total expectation, we have at any timestep $t \geq 0$,
    \begin{align}
        &\expect^{\pi_2} \left[
            R_t
            +
            \gamma \V^{\pi_1}(S_{t+1}, Z_{t+1})
            -
            V^{\pi_1}(Z_t)
            \middle|
            Z_0 = z_0
        \right]
        \nonumber
        \\
        &\qquad
        =
        \expect \left[
            \expect^{\pi_2} \left[
                R_t
                +
                \gamma \V^{\pi_1}(S_{t+1}, Z_{t+1})
                \middle|
                H_t, Z_t
            \right]
            -
            V^{\pi_1}(Z_t)
            \middle|
            Z_0 = z_0
        \right]\!.
    \end{align}
    And, we have,
    \begin{align}
        &\expect^{\pi_2} \left[
            R_t
            +
            \gamma \V^{\pi_1}(S_{t+1}, Z_{t+1})
            \middle|
            H_t = h_t, Z_t = z_t
        \right]
        \nonumber
        \\
        &\qquad
        =
        \sum_{s_t, a_t}
        b_t(s_t|h_t)
        \pi_2(a_t|z_t)
        \Q^{\pi_1}(s_{t}, z_{t}, a_{t})
        \\
        &\qquad
        =
        \sum_{a_t}
        \pi_2(a_t|z_t)
        Q^{\pi_1}(z_t, a_t)
        +
        \sum_{s_t, a_t}
        b_t(s_t|h_t)
        \pi_2(a_t|z_t)
        \Q^{\pi_1}(s_{t}, z_{t}, a_{t})
        -
        \sum_{a_t}
        \pi_2(a_t|z_t)
        Q^{\pi_1}(z_t, a_t)
        \\
        &\qquad
        =
        \sum_{a_t}
        \pi_2(a_t|z_t)
        Q^{\pi_1}(z_t, a_t)
        +
        \sum_{s_t, a_t}
        b_t(s_t|h_t)
        \pi_2(a_t|z_t)
        \Q^{\pi_1}(s_{t}, z_{t}, a_{t})
        \nonumber
        \\
        &\qquad\qquad
        -
        \sum_{s_t, a_t}
        \hat{b}_t(s_t|z_t)
        \pi_2(a_t|z_t)
        \Q^{\pi_1}(s_t, z_t, a_t)
        \\
        &\qquad
        =
        \sum_{a_t}
        \pi_2(a_t|z_t)
        Q^{\pi_1}(z_t, a_t)
        +
        \sum_{s_t, a_t}
        \left(
            b_t(s_t|h_t)
            -
            \hat{b}_t(s_t|z_t)
        \right)
        \pi_2(a_t|z_t)
        \Q^{\pi_1}(s_{t}, z_{t}, a_{t}).
    \end{align}
    By noting that $\sup_{s, z} \left| \sum_{a} \pi_2(a|z) \Q^{\pi_1}(s, z, a) \right| \leq \sup_{s, z, a} \left| \Q^{\pi_1}(s, z, a) \right| \leq \frac{1}{1 - \gamma}$, we obtain,
    \begin{align}
        &\expect^{\pi_2} \left[
            R_t
            +
            \gamma \V^{\pi_1}(S_{t+1}, Z_{t+1})
            \middle|
            H_t = h_t, Z_t = z_t
        \right]
        \nonumber
        \\
        &\qquad
        \leq
        \sum_{a_t}
        \pi_2(a_t|z_t)
        Q^{\pi_1}(z_t, a_t)
        +
        \frac{1}{1 - \gamma}
        \left\lVert
            b_t(\cdot | h_t) - \hat{b}_t(\cdot | z_t)
        \right\rVert_\text{TV}.
    \end{align}
    Finally, the expectation at time $t \geq 0$ can be written as,
    \begin{align}
        &\expect^{\pi_2} \left[
            R_t
            +
            \gamma \V^{\pi_1}(S_{t+1}, Z_{t+1})
            -
            V^{\pi_1}(Z_t)
            \middle|
            Z_0 = z_0
        \right]
        \nonumber
        \\
        &\qquad
        =
        \expect \left[
            \expect^{\pi_2} \left[
                R_t
                +
                \gamma \V^{\pi_1}(S_{t+1}, Z_{t+1})
                \middle|
                H_t, Z_t
            \right]
            -
            V^{\pi_1}(Z_t)
            \middle|
            Z_0 = z_0
        \right]
        \\
        &\qquad
        \leq
        \expect^{\pi_2} \left[
            Q^{\pi_1}(Z_t, A_t)
            +
            \frac{1}{1 - \gamma}
            \left\lVert
                b_t(\cdot | H_t) - \hat{b}_t(\cdot | Z_t)
            \right\rVert_\text{TV}
            -
            V^{\pi_1}(Z_t)
            \middle|
            Z_0 = z_0
        \right]
        \\
        &\qquad
        =
        \expect^{\pi_2} \left[
            A^{\pi_1}(Z_t, A_t)
            -
            \frac{1}{1 - \gamma}
            \left\lVert
                b_t(\cdot | H_t) - \hat{b}_t(\cdot | Z_t)
            \right\rVert_\text{TV}
            \middle|
            Z_0 = z_0
        \right]
    \end{align}
    Now, by using Lebesgue's dominated theorem in the reverse direction, we have,
    \begin{align}
        &\expect^{\pi_2} \left[
            \sum_{t=0}^\infty
            \gamma^t
            \left(
                R_t
                +
                \gamma \V^{\pi_1}(S_{t+1}, Z_{t+1})
                -
                V^{\pi_1}(Z_t)
            \right)
            \middle|
            Z_0 = z_0
        \right]
        \nonumber
        \\
        &\qquad
        \leq
        \expect^{\pi_2} \left[
            \sum_{t=0}^\infty
            \gamma^t
            A^{\pi_1}(Z_t, A_t)
            \middle|
            Z_0 = z_0
        \right]
        +
        \frac{1}{1 - \gamma}
        \expect^{\pi_2} \left[
            \sum_{t=0}^\infty \gamma^t
            \left\lVert
                \hat{b}_t - b_t
            \right\rVert_\text{TV}
            \middle|
            Z_0 = z_0
        \right]
        \\
        &\qquad
        =
        \expect^{\pi_2} \left[
            \sum_{t=0}^\infty
            \gamma^t
            A^{\pi_1}(Z_t, A_t)
            \middle|
            Z_0 = z_0
        \right]
        +
        \frac{1}{1 - \gamma}
        \epsilon_\text{inf}^{\pi_2}(z_0)
        \label{eq:performance-first-bound}
    \end{align}
    Now, let us focus on bounding the second term in equation \eqref{eq:performance-decomposition}.
    We have, for any $T>0$,
    \begin{align}
        \left|
            \sum_{t=0}^T
            \gamma^{t+1}
            \left(
                V^{\pi_1}(Z_{t+1})
                -
                \V^{\pi_1}(S_{t+1}, Z_{t+1})
            \right)
        \right|
        &\leq
        \frac{2}{(1 - \gamma)^2}
        <
        \infty.
    \end{align}
    Using Lebesgue dominated convergence theorem, we can write,
    \begin{align}
        &\expect^{\pi_2} \left[
            \sum_{t=0}^\infty
            \gamma^{t+1}
            \left(
                V^{\pi_1}(Z_{t+1})
                -
                \V^{\pi_1}(S_{t+1}, Z_{t+1})
            \right)
            \middle|
            Z_0 = z_0
        \right]
        \nonumber
        \\
        &\qquad
        =
        \sum_{t=0}^\infty
        \gamma^{t+1}
        \expect^{\pi_2} \left[
            V^{\pi_1}(Z_{t+1})
            -
            \V^{\pi_1}(S_{t+1}, Z_{t+1})
            \middle|
            Z_0 = z_0
        \right].
    \end{align}
    By the law of total expectation, we have at any timestep $t \geq 0$,
    \begin{align}
        &\expect^{\pi_2} \left[
            V^{\pi_1}(Z_{t+1})
            -
            \V^{\pi_1}(S_{t+1}, Z_{t+1})
            \middle|
            Z_0 = z_0
        \right]
        \nonumber
        \\
        &\qquad
        =
        \expect \left[
            V^{\pi_1}(Z_{t+1})
            -
            \expect^{\pi_2} \left[
                \V^{\pi_1}(S_{t+1}, Z_{t+1})
                \middle|
                H_{t+1}, Z_{t+1}
            \right]
            \middle|
            Z_0 = z_0
        \right]\!.
    \end{align}
    And, we have,
    \begin{align}
        &\expect^{\pi_2} \left[
            \V^{\pi_1}(S_{t+1}, z_{t+1})
            \middle|
            H_{t+1} = h_{t+1}, Z_{t+1} = z_{t+1},
        \right]
        \nonumber
        \\
        &\qquad
        =
        \sum_{s_{t+1}} b_{t+1}(s_{t+1}|h_{t+1}) \V^{\pi_1}(s_{t+1}, z_{t+1})
        \\
        &\qquad
        =
        V^{\pi_1}(z_{t+1})
        +
        \sum_{s_{t+1}} b_{t+1}(s_{t+1}|h_{t+1}) \V^{\pi_1}(s_{t+1}, z_{t+1})
        -
        V^{\pi_1}(z_{t+1})
        \\
        &\qquad
        =
        V^{\pi_1}(z_{t+1})
        +
        \sum_{s_{t+1}} b_{t+1}(s_{t+1}|h_{t+1}) \V^{\pi_1}(s_{t+1}, z_{t+1})
        -
        \sum_{s_{t+1}} \hat{b}_{t+1}(s_{t+1}|z_{t+1}) \V^{\pi_1}(s_{t+1}, z_{t+1})
        \\
        &\qquad
        =
        V^{\pi_1}(z_{t+1})
        +
        \sum_{s_{t+1}} \left( b_{t+1}(s_{t+1}|h_{t+1}) - \hat{b}_{t+1}(s_{t+1}|z_{t+1})\right) \V^{\pi_1}(s_{t+1}, z_{t+1}).
    \end{align}
    From there, by noting that $\sup_{s, z} \left| \V^{\pi_1}(s, z) \right| \leq \frac{1}{1 - \gamma}$, we obtain,
    \begin{align}
        &\expect^{\pi_2} \left[
            \V^{\pi_1}(S_{t+1}, z_{t+1})
            \middle|
            H_{t+1} = h_{t+1}, Z_{t+1} = z_{t+1},
        \right]
        \nonumber
        \\
        &\qquad
        \geq
        V^{\pi_1}(z_{t+1})
        -
        \frac{1}{1 - \gamma} \left\lVert
            b_{t+1}(\cdot|h_{t+1}) - \hat{b}_{t+1}(\cdot|z_{t+1})
        \right\rVert_\text{TV}.
    \end{align}
    Finally, the expectation at time $t \geq 0$ can be written as,
    \begin{align}
        &\expect^{\pi_2} \left[
            V^{\pi_1}(Z_{t+1})
            -
            \V^{\pi_1}(S_{t+1}, Z_{t+1})
            \middle|
            Z_0 = z_0
        \right]
        \nonumber
        \\
        &\qquad
        =
        \expect \left[
            V^{\pi_1}(Z_{t+1})
            -
            \expect^{\pi_2} \left[
                \V^{\pi_1}(S_{t+1}, Z_{t+1})
                \middle|
                H_{t+1}, Z_{t+1}
            \right]
            \middle|
            Z_0 = z_0
        \right]
        \\
        &\qquad
        \leq
        \expect \left[
            V^{\pi_1}(Z_{t+1})
            -
            V^{\pi_1}(Z_{t+1})
            +
            \frac{1}{1 - \gamma} \left\lVert
                b_{t+1}(\cdot|H_{t+1}) - \hat{b}_{t+1}(\cdot|Z_{t+1})
            \right\rVert_\text{TV}
            \middle|
            Z_0 = z_0
        \right]
        \\
        &\qquad
        \leq
        \expect \left[
            \frac{1}{1 - \gamma} \left\lVert
                b_{t+1}(\cdot|H_{t+1}) - \hat{b}_{t+1}(\cdot|Z_{t+1})
            \right\rVert_\text{TV}
            \middle|
            Z_0 = z_0
        \right].
    \end{align}
    Now, by using Lebesgue's dominated theorem in the reverse direction, we have,
    \begin{align}
        &\expect^{\pi_2} \left[
            \sum_{t=0}^\infty
            \gamma^{t+1}
            \left(
                V^{\pi_1}(Z_{t+1})
                -
                \V^{\pi_1}(S_{t+1}, Z_{t+1})
            \right)
            \middle|
            Z_0 = z_0
        \right]
        \nonumber
        \\
        &\qquad
        \leq
        \frac{1}{1 - \gamma}
        \expect^{\pi_2} \left[
            \sum_{t=0}^\infty
                \gamma^{t+1}
                \left\lVert
                    b_{t+1}(\cdot|H_{t+1}) - \hat{b}_{t+1}(\cdot|Z_{t+1})
                \right\rVert_\text{TV}
            \middle|
            Z_0 = z_0
        \right]
        \\
        &\qquad
        =
        \frac{1}{1 - \gamma}
        \expect^{\pi_2} \left[
            \sum_{t=0}^\infty
                \gamma^{t}
                \left\lVert
                    b_{t}(\cdot|H_{t}) - \hat{b}_{t}(\cdot|Z_{t})
                \right\rVert_\text{TV}
            -
            \left\lVert
                b_{0}(\cdot|H_{0}) - \hat{b}_{0}(\cdot|Z_{0})
            \right\rVert_\text{TV}
            \middle|
            Z_0 = z_0
        \right]
        \\
        &\qquad
        =
        \frac{1}{1 - \gamma}
        \expect^{\pi_2} \left[
            \sum_{t=0}^\infty
                \gamma^{t}
                \left\lVert
                    b_{t}(\cdot|H_{t}) - \hat{b}_{t}(\cdot|Z_{t})
                \right\rVert_\text{TV}
            \middle|
            Z_0 = z_0
        \right]
        \nonumber
        \\
        &\qquad\qquad
        -
        \expect^{\pi_2} \left[
            \left\lVert
                b_{0}(\cdot|H_{0}) - \hat{b}_{0}(\cdot|Z_{0})
            \right\rVert_\text{TV}
            \middle|
            Z_0 = z_0
        \right]
        \\
        &\qquad
        =
        \frac{1}{1 - \gamma}
        \epsilon_\text{inf}^{\pi_2}(z_0)
        -
        \expect^{\pi_2} \left[
            \left\lVert
                b_{0}(\cdot|H_{0}) - \hat{b}_{0}(\cdot|Z_{0})
            \right\rVert_\text{TV}
            \middle|
            Z_0 = z_0
        \right]
        \\
        &\qquad
        \leq
        \frac{1}{1 - \gamma}
        \epsilon_\text{inf}^{\pi_2}(z_0).
        \label{eq:performance-second-bound}
    \end{align}
    Finally, by substituting the upper bound \eqref{eq:performance-first-bound} on the first term and the upper bound \eqref{eq:performance-second-bound} on the second term into equation \eqref{eq:performance-decomposition}, we obtain,
    \begin{align}
        V^{\pi_2}(z_0) - V^{\pi_1}(z_0)
        &
        \leq
        \expect^{\pi_2} \left[
            \sum_{t=0}^\infty
            \gamma^t
            A^{\pi_1}(Z_t, A_t)
            \middle|
            Z_0 = z_0
        \right]
        +
        \frac{2}{1 - \gamma}
        \epsilon_\text{inf}^{\pi_2}(z_0)
        \\
        &=
        \frac{1}{1 - \gamma}
        \expect^{d^{\pi_2}} \left[
            A^{\pi_1}(Z, A)
            \middle|
            Z_0 = z_0
        \right]
        +
        \frac{2}{1 - \gamma}
        \epsilon_\text{inf}^{\pi_2}(z_0).
    \end{align}
    This concludes the proof.
\end{proof}

Using \autoref{lemma:performance}, we can prove \autoref{thm:nac}, that is recalled below.
The proof from \citet{cayci2024finite} is generalized to the asymmetric setting.
\nac*
\begin{proof}
    The proof is based on a Lyapounov drift result using the following Lyapounov function,
    \begin{align}
        \Lambda(\pi)
        &=
        \sum_{z \in \Z} d^{\pi^*}(z) \text{KL}(\pi^*(\cdot|z) \parallel \pi(\cdot|z)).
    \end{align}
    The Lyapounov drift is given by,
    \begin{align}
        \Lambda(\pi_{t+1}) - \Lambda(\pi_t)
        &=
        \sum_{z \in \Z} d^{\pi^*}(z) \sum_{a \in \A} \pi^*(a|z) \log \frac{\pi_t(a|z)}{\pi_{t+1}(a|z)}
        \\
        &=
        \sum_{z, a} d^{\pi^*}(z, a) \log \frac{\pi_t(a|z)}{\pi_{t+1}(a|z)}.
    \end{align}
    Since $\sup_{z,a} \left\lVert \psi(z, a) \right\rVert_2 \leq 1$, we have that $\log \pi_\theta(a|z)$ is $1$-smooth \citep{agarwal2021theory}, which implies,
    \begin{align}
        \log \pi_{\theta_2}(a|z)
        &\leq
        \log \pi_{\theta_1}(a|z)
        +
        \langle \nabla_\theta \log \pi_{\theta_1}(a|z), \theta_2 - \theta_1 \rangle
        +
        \frac{1}{2} \left\lVert \theta_2 - \theta_1 \right\rVert_2^2.
    \end{align}
    By selecting $\theta_2 = \theta_t$ and $\theta_1 = \theta_{t+1}$ and noting that $\theta_{t+1} - \theta_t = \eta \bar{w}_t = \eta \frac{1}{N} \sum_{n=0}^{N-1} w_{t,n}$ we obtain,
    \begin{align}
        \log \frac{\pi_{t}(a|z)}{\pi_{t+1}(a|z)}
        &\leq
        \frac{\eta^2}{2} \left\lVert \bar{w}_t \right\rVert_2^2
        -
        \eta \langle \nabla_\theta \log \pi_{t}(a|z), \bar{w}_t \rangle.
    \end{align}
    Now, we separately bound the Lyapounov drift for the asymmetric and symmetric settings.
    In the following, some notations are overloaded across both setting when their meaning is clear from context.
    For the asymmetric setting, we have,
    \begin{align}
        \Lambda(\pi_{t+1}) - \Lambda(\pi_t)
        &=
        \sum_{z, a} d^{\pi^*}(z, a) \log \frac{\pi_t(a|z)}{\pi_{t+1}(a|z)}
        \\
        &\leq
        \frac{\eta^2}{2} \left\lVert \bar{w}_t \right\rVert_2^2
        -
        \eta
        \sum_{z, a} d^{\pi^*}(z, a)
        \langle \nabla_\theta \log \pi_{t}(a|z), \bar{w}_t \rangle
        \\
        &=
        \frac{\eta^2}{2} B^2
        -
        \eta
        \sum_{s, z, a} d^{\pi^*}(s, z, a)
        \A^{\pi_t}(s, z, a)
        -
        \eta
        \sum_{s, z, a} d^{\pi^*}(s, z, a)
        \left(
            \langle \nabla_\theta \log \pi_{t}(a|z), \bar{w}_t \rangle
            -
            \A^{\pi_t}(s, z, a)
        \right)
        \\
        &\leq
        \frac{\eta^2}{2} B^2
        -
        \eta
        \sum_{s, z, a} d^{\pi^*}(s, z, a)
        \A^{\pi_t}(s, z, a)
        +
        \eta
        \sum_{z, a} d^{\pi^*}(s, z, a)
        \sqrt{
            \left(
                \langle \nabla_\theta \log \pi_{t}(a|z), \bar{w}_t \rangle
                -
                \A^{\pi_t}(s, z, a)
            \right)^2
        }.
        \label{eq:actor-drift}
    \end{align}
    For the symmetric setting, we observe instead,
    \begin{align}
        \Lambda(\pi_{t+1}) - \Lambda(\pi_t)
        &=
        \sum_{z, a} d^{\pi^*}(z, a) \log \frac{\pi_t(a|z)}{\pi_{t+1}(a|z)}
        \\
        &\leq
        \frac{\eta^2}{2} B^2
        -
        \eta
        \sum_{z, a} d^{\pi^*}(z, a)
        A^{\pi_t}(z, a)
        +
        \eta
        \sum_{z, a} d^{\pi^*}(z, a)
        \sqrt{
            \left(
                \langle \nabla_\theta \log \pi_{t}(a|z), \bar{w}_t \rangle
                -
                A^{\pi_t}(z, a)
            \right)^2
        }.
        \label{eq:sym-actor-drift}
    \end{align}

    Now, let $\mathfrak{H}_t$ denote the sigma field of all samples used in the computation of $\pi_t$ (which excludes the samples used for computing $\bar{w}_t$), along with all the samples used in the computation of $\widebar{Q}^{\pi_t}$.
    We define the ideal and approximate loss functions, both in the asymmetric and the symmetric setting,
    \begin{align}
        \L_t(w)
        &=
        \expect \left[
            \left(
                \langle \nabla_\theta \log \pi_t(A|Z), w \rangle
                -
                \A^{\pi_t}(S, Z, A)
            \right)^2
            \middle|
            \mathfrak{H}_t
        \right]
        \\
        \widebar{\L}_t(w)
        &=
        \expect \left[
            \left(
                \langle \nabla_\theta \log \pi_t(A|Z), w \rangle
                -
                \widebar{\A}^{\pi_t}(S, Z, A)
            \right)^2
            \middle|
            \mathfrak{H}_t
        \right]
        \\
        L_t(w)
        &=
        \expect \left[
            \left(
                \langle \nabla_\theta \log \pi_t(A|Z), w \rangle
                -
                A^{\pi_t}(Z, A)
            \right)^2
            \middle|
            \mathfrak{H}_t
        \right]
        \\
        \widebar{L}_t(w)
        &=
        \expect \left[
            \left(
                \langle \nabla_\theta \log \pi_t(A|Z), w \rangle
                -
                \widebar{A}^{\pi_t}(Z, A)
            \right)^2
            \middle|
            \mathfrak{H}_t
        \right]\!.
    \end{align}
    Because $
            \expect \left[
                \left\lVert
                    \V^{\pi_t}
                    -
                    \widebar{\V}^{\pi_t}
                \right\rVert_{d^{\pi_t}}^2
                \middle|
                \mathfrak{H}_t
            \right]
            \leq
            \expect \left[
                \left\lVert
                    \widebar{\Q}^{\pi_t}
                    -
                    \Q^{\pi_t}
                \right\rVert_{d^{\pi_t}}^2
                \middle|
                \mathfrak{H}_t
            \right]
    $, the error between the asymmetric advantage $\A$ and its approximation $\widebar{\A}$ is upper bounded by,
    \begin{align}
        \sqrt{
            \expect \left[
                \left(
                    \widebar{\A}^{\pi_t}(S, Z, A)
                    -
                    \A^{\pi_t}(S, Z, A)
                \right)^2
                \middle|
                \mathfrak{H}_t
            \right]
        }
        &=
        \sqrt{
            \expect \left[
                \left\lVert
                    \widebar{\A}^{\pi_t}
                    -
                    \A^{\pi_t}
                \right\rVert_{d^{\pi_t}}^2
                \middle|
                \mathfrak{H}_t
            \right]
        }
        \\
        &=
        \sqrt{
            \expect \left[
                \left\lVert
                    \widebar{\Q}^{\pi_t}
                    -
                    \widebar{\V}^{\pi_t}
                    -
                    \Q^{\pi_t}
                    +
                    \V^{\pi_t}
                \right\rVert_{d^{\pi_t}}^2
                \middle|
                \mathfrak{H}_t
            \right]
        }
        \\
        &=
        \sqrt{
            \expect \left[
                \left\lVert
                    \widebar{\Q}^{\pi_t}
                    -
                    \Q^{\pi_t}
                    +
                    \V^{\pi_t}
                    -
                    \widebar{\V}^{\pi_t}
                \right\rVert_{d^{\pi_t}}^2
                \middle|
                \mathfrak{H}_t
            \right]
        }
        \\
        &\leq
        \sqrt{
            \expect \left[
                \left\lVert
                    \widebar{\Q}^{\pi_t}
                    -
                    \Q^{\pi_t}
                \right\rVert_{d^{\pi_t}}^2
                +
                \left\lVert
                    \V^{\pi_t}
                    -
                    \widebar{\V}^{\pi_t}
                \right\rVert_{d^{\pi_t}}^2
                \middle|
                \mathfrak{H}_t
            \right]
        }
        \\
        &\leq
        \sqrt{
            \expect \left[
                \left\lVert
                    \widebar{\Q}^{\pi_t}
                    -
                    \Q^{\pi_t}
                \right\rVert_{d^{\pi_t}}^2
                \middle|
                \mathfrak{H}_t
            \right]
        }
        +
        \sqrt{
            \expect \left[
                \left\lVert
                    \V^{\pi_t}
                    -
                    \widebar{\V}^{\pi_t}
                \right\rVert_{d^{\pi_t}}^2
                \middle|
                \mathfrak{H}_t
            \right]
        }
        \\
        &\leq
        2 \epsilon_\text{critic,asym}^{\pi_t},
        \label{eq:asymmetric-impact}
    \end{align}
    where $\epsilon_\text{critic,asym}^{\pi_t} = \epsilon_\text{td,asym}^{\pi_t} + \epsilon_\text{app,asym}^{\pi_t} + \epsilon_\text{shift,asym}^{\pi_t}$ is given by the upper bound \eqref{eq:asymmetric-critic} in \autoref{thm:asymmetric-critic}.
    Similarly, the error between the symmetric advantage $A$ and its approximation $\widebar{A}$ is upper bounded by,
    \begin{align}
        \sqrt{
            \expect \left[
                \left(
                    \widebar{A}^{\pi_t}(Z, A)
                    -
                    A^{\pi_t}(Z, A)
                \right)^2
                \middle|
                \mathfrak{H}_t
            \right]
        }
        &\leq
        2 \epsilon_\text{critic,sym}^{\pi_t},
        \label{eq:symmetric-impact}
    \end{align}
    where $\epsilon_\text{critic,sym}^{\pi_t} = \epsilon_\text{td,sym}^{\pi_t} + \epsilon_\text{app,sym}^{\pi_t} + \epsilon_\text{shift,sym}^{\pi_t} + \epsilon_\text{alias,sym}^{\pi_t}$ is given by the upper bound \eqref{eq:symmetric-critic} in \autoref{thm:symmetric-critic}.
    By using the inequality $(x + y)^2 \leq 2x^2 + 2y^2$,
    \begin{align}
        \widebar{\L}_t(w)
        &=
        \expect \left[
            \left(
                \langle \nabla_\theta \log \pi_t(A|Z), w \rangle
                -
                \widebar{\A}^{\pi_t}(S, Z, A)
            \right)^2
            \middle|
            \mathfrak{H}_t
        \right]
        \\
        &=
        \expect \left[
            \left(
                \langle \nabla_\theta \log \pi_t(A|Z), w \rangle
                -
                \A^{\pi_t}(S, Z, A)
                +
                \A^{\pi_t}(S, Z, A)
                -
                \widebar{\A}^{\pi_t}(S, Z, A)
            \right)^2
            \middle|
            \mathfrak{H}_t
        \right]
        \\
        &\leq
        2 \expect \left[
            \left(
                \langle \nabla_\theta \log \pi_t(A|Z), w \rangle
                -
                \A^{\pi_t}(S, Z, A)
            \right)^2
            \middle|
            \mathfrak{H}_t
        \right]
        +
        2 \expect \left[
            \left(
                \A^{\pi_t}(S, Z, A)
                -
                \widebar{\A}^{\pi_t}(S, Z, A)
            \right)^2
            \middle|
            \mathfrak{H}_t
        \right]
        \\
        &\leq
        2 \L_t(w)
        +
        2 (2 \epsilon_\text{critic,asym}^{\pi_t})^2.
    \end{align}
    Similarly, we obtain in the symmetric case,
    \begin{align}
        \widebar{L}_t(w)
        &\leq
        2 L_t(w)
        +
        2 (2 \epsilon_\text{critic,sym}^{\pi_t})^2.
    \end{align}
    Starting from the ideal objective and following a similar technique, we also obtain,
    \begin{align}
        \L_t(w)
        &\leq
        2 \widebar{\L}_t(w)
        +
        2 (2 \epsilon_\text{critic,asym}^{\pi_t})^2
        \\
        L_t(w)
        &\leq
        2 \widebar{L}_t(w)
        +
        2 (2 \epsilon_\text{critic,sym}^{\pi_t})^2.
    \end{align}
    By using Theorem 14.8 in \citep{shalev2014understanding} with step size $\zeta = \frac{B\sqrt{1 - \gamma}}{\sqrt{2N}}$, we obtain for the average iterate $\bar{w}_t$ under the asymmetric loss and symmetric loss, respectively,
    \begin{align}
        \widebar{\L}_t(\bar{w}_t)
        &\leq
        \epsilon_\text{actor}^2
        +
        \min_{\lVert w \rVert_2 \leq B} \widebar{\L}_t(w)
        \\
        \widebar{L}_t(\bar{w}_t)
        &\leq
        \epsilon_\text{actor}^2
        +
        \min_{\lVert w \rVert_2 \leq B} \widebar{L}_t(w),
    \end{align}
    where $\epsilon_\text{actor}^2 = \frac{(2 - \gamma) B}{2(1 - \gamma) \sqrt{N}}$.
    On expectation, for the ideal asymmetric objective $\L_t$, we obtain,
    \begin{align}
        \expect \left[
            \L_t(\bar{w}_t)
        \right]
        &\leq
        2 \expect \left[
            \widebar{\L}_t(\bar{w}_t)
        \right]
        +
        2
        (2 \epsilon_\text{critic,asym}^{\pi_t})^ 2
        \\
        &\leq
        2 \epsilon_\text{actor}^2
        +
        2 \min_{\lVert w \rVert_2 \leq B} \widebar{\L}_t(w)
        +
        2
        (2 \epsilon_\text{critic,asym}^{\pi_t})^ 2
        \\
        &\leq
        2 \epsilon_\text{actor}^2
        +
        2 \left( 2 \min_{\lVert w \rVert_2 \leq B} \L_t(w)
        +
        2
        (2 \epsilon_\text{critic,asym}^{\pi_t})^ 2
        \right)
        +
        2
        (2 \epsilon_\text{critic,asym}^{\pi_t})^ 2
        \\
        &=
        2 \epsilon_\text{actor}^2
        +
        4 \min_{\lVert w \rVert_2 \leq B} \L_t(w)
        +
        6
        (2 \epsilon_\text{critic,asym}^{\pi_t})^ 2
        \\
        &=
        2 \epsilon_\text{actor}^2
        +
        4
        \left( \epsilon_\text{grad,asym}^{\pi_t} \right)^2
        +
        6
        (2 \epsilon_\text{critic,asym}^{\pi_t})^2,
    \end{align}
    where we define the actor gradient function approximation error as,
    \begin{align}
        \left( \epsilon_\text{grad,asym}^{\pi_t} \right)^2
        &=
        \min_{\lVert w \rVert_2 \leq B} \L_t(w).
    \end{align}
    Similarly, we obtain on expectation for the ideal symmetric objective $L_t$,
    \begin{align}
        \expect \left[
            L_t(\bar{w}_t)
        \right]
        &\leq
        2 \epsilon_\text{actor}^2
        +
        4
        \left( \epsilon_\text{grad,sym}^{\pi_t} \right)^2
        +
        6
        (2 \epsilon_\text{critic,sym}^{\pi_t})^2,
    \end{align}
    where we define the actor gradient function approximation error as,
    \begin{align}
        \left( \epsilon_\text{grad,sym}^{\pi_t} \right)^2
        &=
        \min_{\lVert w \rVert_2 \leq B} L_t(w).
    \end{align}

    Now, let us go back to the asymmetric and symmetric Lyapounov drift functions of equation \eqref{eq:actor-drift} and \eqref{eq:sym-actor-drift}.
    First, we assume that there exists $\widebar{C}_\infty < \infty$ such that $\sup_{t \geq 0} \expect[C_t] \leq \widebar{C}_\infty$ with,
    \begin{align}
        C_t = \sup_{s, z, a}
            \left|
                \frac{
                    d^{\pi^*}(s, z) \pi^*(a|z)
                }{
                    d^{\pi_{\theta_t}}(s, z) \pi_{\theta_t}(a|z)
                }
            \right|\!.
        \label{eq:coefficient}
    \end{align}
    Second, we leverage the performance difference lemma to bound the advantage.
    For the asymmetric setting, the performance difference lemma for MDP \citep{kakade2002approximately} holds because of the Markovianity of $(S_t, Z_t)$,
    \begin{align}
        (1 - \gamma) \left( V^{\pi^*}(s_0, z_0) - V^{\pi_t}(s_0, z_0) \right)
        =
        \expect^{d^{\pi^*}} \! \left[
            \A^{\pi_t}(S, Z, A) \middle| S_0 = s_0, Z_0 = z_0
        \right]\!.
    \end{align}
    We note that $\expect \left[ V^{\pi^*}(S_0, Z_0) - V^{\pi_t}(S_0, Z_0) \right] = \expect \left[ J(\pi^*) - J(\pi_t) \right]$, such that,
    \begin{align}
        - \expect^{d^{\pi^*}} \! \left[
            \A^{\pi_t}(S, Z, A)
        \right]
        &=
        - (1 - \gamma) \left( J(\pi^*) - J(\pi_t) \right)\!.
        \\
        &=
        - (1 - \gamma) \left( J(\pi^*) - J(\pi_t) \right) + \epsilon_\text{inf,asym},
    \end{align}
    where $\epsilon_\text{inf,asym}=0$.
    For the symmetric setting, using \autoref{lemma:performance} with $\pi_2 = \pi^*$ and $\pi_1 = \pi_t$, we note that,
    \begin{align}
        (1 - \gamma) \left( V^{\pi^*}(z_0) - V^{\pi_t}(z_0) \right)
        \leq
        \expect^{d^{\pi^*}} \! \left[
            A^{\pi_t}(Z, A) \middle| Z_0 = z_0
        \right]
        +
        2 \epsilon_\text{inf}^{\pi^*}(z_0),
    \end{align}
    which implies,
    \begin{align}
        - \expect^{d^{\pi^*}} \! \left[
            A^{\pi_t}(Z, A) \middle| Z_0 = z_0
        \right]
        &\leq
        -
        (1 - \gamma) \left( V^{\pi^*}(z_0) - V^{\pi_t}(z_0) \right)
        +
        2 \epsilon_\text{inf}^{\pi^*}(z_0).
    \end{align}
    We note that $\expect \left[ V^{\pi^*}(Z_0) - V^{\pi_t}(Z_0) \right] = \expect \left[ J(\pi^*) - J(\pi_t) \right]$ and we denote $\expect \left[ \epsilon_\text{inf}^{\pi^*}(Z_0) \right]$ with $\epsilon_\text{inf,sym}$, so that,
    \begin{align}
        \epsilon_\text{inf,sym}
        &=
        \expect \! \left[
            \expect^{\pi^*} \! \left[
                \sum_{k=0}^\infty \gamma^k
                \left\lVert
                    \hat{b}_k - b_k
                \right\rVert_\text{TV}
                \middle|
                Z_0 = Z_0
            \right]
        \right]
        \\
        &=
        \expect^{\pi^*} \! \left[
            \sum_{k=0}^\infty \gamma^k
            \left\lVert
                \hat{b}_k - b_k
            \right\rVert_\text{TV}
        \right]\!.
    \end{align}
    By rearranging, we have,
    \begin{align}
        -
        \expect^{d^{\pi^*}} \left[
            A^{\pi_t}(Z, A)
        \right]
        &\leq
        -
        (1 - \gamma) \expect \left[ J(\pi^*) - J(\pi_t) \right]
        +
        2 \epsilon_\text{inf,sym}.
    \end{align}
    Note that $\sum_{s, z, a} d^{\pi^*}(s, z, a) f(s, z, a) = \sum_{s, z, a} \frac{d^{\pi^*}(s, z, a)}{d^{\pi_t}(s, z, a)} d^{\pi_t}(s, z, a) f(s, z, a) \leq C_t \sum_{s, z, a} d^{\pi_t}(s, z, a) f(s, z, a)$ for positive $f$.
    Taking expectation over the asymmetric Lyapounov drift of equation~\eqref{eq:actor-drift}, we obtain using equation~\eqref{eq:coefficient},
    \begin{align}
        \expect \left[
            \Lambda(\pi_{t+1}) - \Lambda(\pi_t)
        \right]
        &\leq
        \frac{\eta^2}2 B^2
        -
        \eta
        \sum_{z, a} d^{\pi^*}(z, a)
        A^{\pi_t}(z, a)
        \nonumber
        \\
        &\qquad
        +
        \eta
        \sum_{s, z, a} d^{\pi^*}(s, z, a)
        \sqrt{
            \left(
                \langle \nabla_\theta \log \pi_{t}(a|z), \bar{w}_t \rangle
                -
                \A^{\pi_t}(s, z, a)
            \right)^2
        }
        \\
        &\leq
        \frac{\eta^2}{2} B^2
        -
        \eta
        (1 - \gamma) \expect \left[ J(\pi^*) - J(\pi_t) \right]
        +
        2
        \eta
        \epsilon_\text{inf,asym}
        \nonumber
        \\
        &\qquad
        +
        \eta
        \widebar{C}_\infty
        \sqrt{
            2 \epsilon_\text{actor}^2
            +
            4
            \left( \epsilon_\text{grad,asym}^{\pi_t} \right)^2
            +
            6
            (2 \epsilon_\text{critic,asym}^{\pi_t})^2
        }
        \\
        &\leq
        \frac{\eta^2}{2} B^2
        -
        \eta
        (1 - \gamma) \expect \left[ J(\pi^*) - J(\pi_t) \right]
        +
        2
        \eta
        \epsilon_\text{inf,asym}
        \nonumber
        \\
        &\qquad
        +
        \eta
        \widebar{C}_\infty
        \left(
            \sqrt{2} \epsilon_\text{actor}
            +
            2
            \epsilon_\text{grad,asym}^{\pi_t}
            +
            2 \sqrt{6}
            \epsilon_\text{critic,asym}^{\pi_t}
        \right).
        \label{eq:result-asym-drift}
    \end{align}
    Similarly, taking expectation over the symmetric drift of equation~\eqref{eq:sym-actor-drift}, we obtain a similar expression,
    \begin{align}
        \expect \left[
            \Lambda(\pi_{t+1}) - \Lambda(\pi_t)
        \right]
        &\leq
        \frac{\eta^2}{2} B^2
        -
        \eta
        \sum_{z, a} d^{\pi^*}(z, a)
        A^{\pi_t}(z, a)
        \nonumber
        \\
        &\qquad
        +
        \eta
        \sum_{z, a} d^{\pi^*}(z, a)
        \sqrt{
            \left(
                \langle \nabla_\theta \log \pi_{t}(a|z), \bar{w}_t \rangle
                -
                A^{\pi_t}(z, a)
            \right)^2
        }
        \\
        &\leq
        \frac{\eta^2}{2} B^2
        -
        \eta
        (1 - \gamma) \expect \left[ J(\pi^*) - J(\pi_t) \right]
        +
        2
        \eta
        \epsilon_\text{inf,sym}
        \nonumber
        \\
        &\qquad
        +
        \eta
        \widebar{C}_\infty
        \left(
            \sqrt{2} \epsilon_\text{actor}
            +
            2
            \epsilon_\text{grad,sym}^{\pi_t}
            +
            2 \sqrt{6}
            \epsilon_\text{critic,sym}^{\pi_t}
        \right).
        \label{eq:result-sym-drift}
    \end{align}
    Given the similarity of equation \eqref{eq:result-asym-drift} and equation \eqref{eq:result-sym-drift}, in the following we denote the denote the upper bounds using $\epsilon_\text{inf}$, $\epsilon_\text{grad}^{\pi_t}$ and $\epsilon_\text{critic}^{\pi_t}$, irrespectively of the setting (i.e., asymmetric or symmetric).

    By summing all Laypounov drifts, we obtain,
    \begin{align}
        \expect \left[
            \Lambda(\pi_{T}) - \Lambda(\pi_0)
        \right]
        &\leq
        T \frac{\eta^2}{2} B^2
        -
        \eta
        (1 - \gamma)
        \sum_{t=0}^{T-1}
        \expect \left[ J(\pi^*) - J(\pi_t) \right]
        +
        2
        \eta
        T
        \epsilon_\text{inf}
        \nonumber
        \\
        &\qquad
        +
        \eta
        \sum_{t=0}^{T-1}
        \widebar{C}_\infty
        \left(
            \sqrt{2} \epsilon_\text{actor}
            +
            2
            \epsilon_\text{grad}^{\pi_t}
            +
            2 \sqrt{6}
            \epsilon_\text{critic}^{\pi_t}
        \right)
        \\
        &\leq
        T \frac{\eta^2}{2} B^2
        -
        \eta
        (1 - \gamma)
        \sum_{t=0}^{T-1}
        \expect \left[ J(\pi^*) - J(\pi_t) \right]
        +
        2
        \eta
        T
        \epsilon_\text{inf}
        \nonumber
        \\
        &\qquad
        +
        \eta
        \widebar{C}_\infty
        \left(
            \sqrt{2} T \epsilon_\text{actor}
            +
            2
            \sum_{t=0}^{T-1}
            \epsilon_\text{grad}^{\pi_t}
            +
            2 \sqrt{6}
            \sum_{t=0}^{T-1}
            \epsilon_\text{critic}^{\pi_t}
        \right).
    \end{align}
    Since $\pi_0$ is initialized at the uniform policy with $\theta_0 := 0$, we have,
    \begin{align}
        \Lambda(\pi_0)
        &=
        \sum_{z \in \Z} d^{\pi^*}(z) \text{KL}(\pi^*(\cdot|z) \parallel \pi_0(\cdot|z))
        \\
        &=
        \sum_{z \in \Z} d^{\pi^*}(z)
        \left(
            \sum_{a \in \A}
            \pi^*(a|z)
            \log \pi^*(a|z)
            -
            \sum_{a \in \A}
            \pi^*(a|z)
            \log \pi_0(a|z)
        \right)
        \\
        &=
        \sum_{z \in \Z} d^{\pi^*}(z)
        \left(
            \sum_{a \in \A}
            \pi^*(a|z)
            \log \pi^*(a|z)
            -
            \sum_{a \in \A}
            \pi^*(a|z)
            \log \frac{1}{|\A|}
        \right)
        \\
        &=
        \sum_{z \in \Z} d^{\pi^*}(z)
        \left(
            \sum_{a \in \A}
            \pi^*(a|z)
            \log \pi^*(a|z)
            +
            \log |\A|
        \right)
        \\
        &=
        \sum_{z \in \Z} d^{\pi^*}(z)
        \left(
            \log |\A|
            -
            H(\pi^*(\cdot | z))
        \right)
        \\
        &\leq
        \sum_{z \in \Z} d^{\pi^*}(z)
        \log |\A|
        \\
        &\leq
        \log |\A|,
    \end{align}
    where $H$ denotes the Shannon entropy.
    Rearranging and dividing by $\eta T$, we obtain after neglecting $\L(\pi_T) > 0$,
    \begin{align}
        (1 - \gamma)
        \frac{1}{T}
        \sum_{t=0}^{T-1}
        \expect \left[ J(\pi^*) - J(\pi_t) \right]
        &\leq
        \frac{
            \log |\A|
        }{
            \eta T
        }
        +
        \frac{\eta}{2} B^2
        +
        2
        \epsilon_\text{inf}
        \nonumber
        \\
        &\qquad
        +
        \widebar{C}_\infty
        \left(
            \sqrt{2} \epsilon_\text{actor}
            +
            2
            \frac{1}{T}
            \sum_{t=0}^{T-1}
            \epsilon_\text{grad}^{\pi_t}
            +
            2 \sqrt{6}
            \frac{1}{T}
            \sum_{t=0}^{T-1}
            \epsilon_\text{critic}^{\pi_t}
        \right).
    \end{align}
    It can also be noted that $\min_{0 \leq t < T} [ x_t ] \leq \frac{1}{T} \sum_{t=0}^{T} x_t$, which implies that,
    \begin{align}
        (1 - \gamma)
        \min_{0 \leq t < T}
        \expect \left[ J(\pi^*) - J(\pi_t) \right]
        &\leq
        \frac{
            \log |\A|
        }{
            \eta T
        }
        +
        \frac{\eta}{2} B^2
        +
        2
        \epsilon_\text{inf}
        \nonumber
        \\
        &\qquad
        +
        \widebar{C}_\infty
        \left(
            \sqrt{2} \epsilon_\text{actor}
            +
            2
            \frac{1}{T}
            \sum_{t=0}^{T-1}
            \epsilon_\text{grad}^{\pi_t}
            +
            2 \sqrt{6}
            \frac{1}{T}
            \sum_{t=0}^{T-1}
            \epsilon_\text{critic}^{\pi_t}
        \right)\!.
    \end{align}
    Let us define the worse actor gradient function approximation error,
    \begin{align}
        \epsilon_\text{grad}
        &= \sup_{0 \leq t < T} \epsilon_\text{grad}^{\pi_t}
        \\
        &= \sup_{0 \leq t < T} \sqrt{
            \min_{\lVert w \rVert_2 \leq B} L_t(w)
        },
    \end{align}
    and let us note that,
    \begin{align}
        \frac{1}{T} \sum_{t=0}^{T-1} \epsilon_\text{grad}^{\pi_t}
        &\leq
        \epsilon_\text{grad}.
    \end{align}
    By setting $\eta = \frac{1}{\sqrt{T}}$, we obtain,
    \begin{align}
        (1 - \gamma)
        \min_{0 \leq t < T}
        \expect \left[ J(\pi^*) - J(\pi_t) \right]
        &\leq
        \frac{
            \log |\A|
        }{
            \sqrt{T}
        }
        +
        \frac{
            B^2
        }{
            2 \sqrt{T}
        }
        +
        2
        \epsilon_\text{inf}
        \nonumber
        \\
        &\qquad
        +
        \widebar{C}_\infty
        \left(
            \sqrt{2} \epsilon_\text{actor}
            +
            2
            \frac{1}{T}
            \sum_{t=0}^{T-1}
            \epsilon_\text{grad}^{\pi_t}
            +
            2 \sqrt{6}
            \frac{1}{T}
            \sum_{t=0}^{T-1}
            \epsilon_\text{critic}^{\pi_t}
        \right)
        \\
        &=
        \frac{
            B^2 + 2 \log |\A|
        }{
            2 \sqrt{T}
        }
        +
        2
        \expect^{\pi^*} \left[
                \sum_{k=0}^\infty \gamma^k
                \left\lVert
                    \hat{b}_k - b_k
                \right\rVert_\text{TV}
        \right]
        \nonumber
        \\
        &\qquad
        +
        \widebar{C}_\infty
        \left(
            \sqrt{\frac{(2 - \gamma) B}{(1 - \gamma) \sqrt{N}}}
            +
            2
            \epsilon_\text{grad}
            +
            2 \sqrt{6}
            \frac{1}{T}
            \sum_{t=0}^{T-1}
            \epsilon_\text{critic}^{\pi_t}
        \right)\!.
    \end{align}
    This concludes the proof.
\end{proof}

\end{document}